\newcounter{counterCorollary}
\newtheorem{myCorollary}[counterCorollary]{Corollary}
\newtheorem{myTheorem}{Theorem}
\newtheorem{myDefinition}{Definition}
\newtheorem{myLemma}{Lemma}
\newtheorem{fact}{Fact}
\newcommand{\bP}{\mathbb{P}}
\newcommand{\bE}{\mathbb{E}}
\newcommand{\Regret}{\operatorname{Regret}} 
\newcommand{\cRegret}{\operatorname{Regret}} 
\newcommand{\ArmAt}{\hat{a}}
\newcommand{\defined}{\coloneqq}
\newcommand{\constrained}{\stackrel{con}{=}}
\newcommand{\IsEqual}{=}
\newcommand{\LThresholdTS}{u}
\newcommand{\HThresholdTS}{w}
\newcommand{\KLUCBCF}{$\mathrm{kl}$-\textsc{UCB-CF}}
\newcommand{\KLUCB}{$\mathrm{kl}$-\textsc{UCB}}
\newcommand{\TS}{\textsc{TS}}
\newcommand{\TSCF}{\textsc{TS-CF}}
\newcommand{\UCB}{\textsc{UCB1}}
\newcommand{\UCBCF}{\textsc{UCB-CF}}
\newcommand{\cB}{\mathcal{B}}
\newcommand\given[1][]{\:#1\vert\:}
\newcommand*{\rom}[1]{\expandafter\@slowromancap\romannumeral #1@}
\DeclareMathOperator*{\argmax}{arg\,max}
\renewcommand{\gets}{=}
\def \ind{\mathds{1}}
\begin{document}

\title{Corrupt Bandits for Preserving Local Privacy}

\author{\name Pratik Gajane \email pratik.gajane@inria.fr \\
       \addr Inria Lille Nord-Europe, SequeL team/Orange labs \\
       \name Tanguy Urvoy	\email tanguy.urvoy@orange.com \\
       \addr Orange labs \\
       \name Emilie Kaufmann \email emilie.kaufmann@univ-lille1.fr \\
       \addr CNRS \& Univ. Lille, UMR 9189 (CRIStAL), Inria Lille Nord-Europe, SequeL team
       }
\editor{}

\maketitle

\begin{abstract}%   <- trailing '%' for backward compatibility of .sty file
We study a variant of the stochastic multi-armed bandit (MAB) problem in which the rewards are corrupted. In this framework, motivated by privacy preservation in online recommender systems, the goal is to maximize the sum of the (unobserved) rewards, based on the observation of transformation of these rewards through a stochastic corruption process with known parameters. We provide a lower bound on the expected regret of any bandit algorithm in this corrupted setting. We devise a frequentist algorithm, \textsc{KLUCB-CF}, and a Bayesian algorithm, \textsc{TS-CF} and give upper bounds on their regret. We also provide the appropriate corruption parameters to guarantee a desired level of local privacy and analyze how this impacts the regret. Finally, we present some experimental results that confirm our analysis.  
\end{abstract}

\begin{keywords}
Sequential learning, multi-armed bandits, incomplete feedback, local privacy
\end{keywords}

\section{Introduction}
\label{sec_int}
The classical multi-armed bandits (MAB) problem is the formulation of the exploration-exploitation dilemma inherent to reinforcement learning \citep[see][for a survey]{DBLP:journals/ftml/BubeckC12}.
In this setup, a learner has access to a number of available actions, also called ``arms'' in reference to the arm of a slot machine or a one-armed bandit.
They have to repeatedly select (or ``draw'') one of these arms, which yields a reward generated from an unknown reward process, with the aim to maximize the sum of the gathered rewards. After each arm selection, a \emph{feedback} is provided to the learner, that shall influence their arm selection strategy in the next rounds. In the classical MAB problem, the feedback is the observation of the reward itself. However, this assumption does not hold true for some practical scenarios. 

In online advertising, the feedback is typically a user click. However, it is usually given only when it is positive since propagating negative feedback as well is costly in terms of network load, especially on mobile networks.
The reception of a click feedback can be safely interpreted as a positive reward, but the absence of a click (i.e. a timeout) might be a consequence of either a negative reward (since the user did not like the ad) or a bug or a packet loss. 
In adaptive routing, positive feedback means the corresponding path is usable but no feedback could either mean that the corresponding path is unusable or the feedback was dropped due to extraneous issues. 
In the literature, such an \emph{asymmetric feedback} is called \textit{Positive and Unlabeled (PUN)} feedback. 
%Pratik comment: The following statement is unnecessary for this paper and hence should be cut to save space.
%It has been extensively studied in the context of classification by \cite{Denis1998,DBLP:conf/icml/LeeL03,Denis200570} and more recently in \cite{DBLP:journals/ml/DekelSX10}. 
See \cite{Zhang2008} for a survey.

On-purpose feedback corruption is an effective way to protect the respondent's individual privacy in online recommender systems or survey systems. For instance, \cite{Warner1965} proposed the \emph{randomized response method} as a survey technique to reduce potential bias due to non-response and social desirability when asking questions about sensitive behaviors and beliefs. 
This method asks the respondents to employ randomization, say with a coin flip, the outcome of which is not available to the interviewer. By introducing random noise, the method conceals the individual responses and protects respondent privacy.
This method could also be applied within a recommender system, that would thus receive corrupted version of the user's original feedback about the items presented. Contrary to most previous works which apply privacy at the recommender level, this privacy mechanism, called \textit{local privacy}, can be deployed at the user level.
The challenge for the recommender is then to present good items to the users (in terms of their ``true'' feedback), based only on the received corrupted feedback. Moreover, users may be willing to tune the level of corruption in order to balance between their privacy and the utility of the recommendation they obtain.

%\todo[inline]{TU: add something here on DP preserving input}

The corrupted feedback we consider is a particular type of an incomplete feedback. Therefore, the natural framework to deal with this situation appears to be \emph{Partial Monitoring (PM)} (\cite{conf/colt/PiccolboniS01,BartokFPRS14}), which is a general framework for sequential decision making problems with incomplete feedback.
The partial monitoring problem may be either \emph{trivial} with a minimax regret of $0$, \emph{easy} with a minimax regret $\tilde{\Theta}(\sqrt{T})$ at time $T$, \emph{hard} with a minimax regret $\tilde{\Theta}(T^{2/3})$, or \emph{hopeless} with a linear minimax regret.
The MAB problem with corrupted feedback however does not fit directly in the PM setting as defined in \cite{BartokFPRS14} since it requires additional constraints on the environment. In this work, exploiting the specificity of the corrupted MAB problem, we aim for the best problem-dependent regret, that scales with $\log(T)$.

The article is structured as follows.
In Section~\ref{sec:int}, we formally define the corrupted MAB prob
lem and the relevant parameters. In Section~\ref{sec:lower_bound}, a lower bound on the regret of any corrupt bandit algorithm is given. In Section~\ref{sec:algos}, the algorithms \KLUCBCF \ and \TSCF \ are introduced and we provide upper bounds on their regret. In Section~\ref{sec:privacy}, we describe how corrupted feedback can be used to enforce privacy. The proof sketches for the lower and the upper bounds are given in Section~\ref{sec:sketches}, while the complete proofs are postponed to the appendices.
The penultimate section, Section~\ref{sec:eval}, gives an overview of our experiments on the proposed algorithms.
%In the following section, we formulate the MAB problem with corrupted feedback as a partial monitoring game. 
%In the end, we provide the experimental results to demonstrate the empirical performance of the given algorithms.

%\todo[inline]{Don't mention the details of the appendices ``Complementary proofs are postponed to the appendix''.}

\section{The Corrupt Bandit Problem}
\label{sec:int}
A (stochastic) corrupt bandit problem $\boldsymbol{\nu}$ is formally characterized by a set of arms $A= \{1, \dots, K\}$ on which are indexed a list of unknown sub-Gaussian
reward distributions $\{\nu_a\}_{a\in A}$,
a list of unknown sub-Gaussian feedback distributions
$\{\varsigma_a\}_{a\in A}$,
and a list of known \textit{mean-corruption functions}
$\{g_a\}_{a\in A}$.

%\todo[inline]{Choose a notation!}

If the learner pulls an arm $ a\in A$ at time $t$, they receive a reward $R_t$ drawn from the distribution $\nu_a$ with mean $\mu^{\boldsymbol{\nu}}_a$ and observe a feedback $F_t$ drawn from the distribution $\varsigma_a$ with mean $\lambda^{\boldsymbol{\nu}}_a$.
We assume that, for each arm, there exists a loose link between the reward and the feedback through a known mean-corruption function (or simply, corruption function) $g_a$ which maps the mean of the reward distribution to the mean of the feedback distribution :
\begin{equation}
\label{eq:link}
\quad g_a(\mu^{\boldsymbol{\nu}}_a)=\lambda^{\boldsymbol{\nu}}_a  , \qquad \forall a\in A
\end{equation}
\noindent
Note that these $g_a$ functions may be completely different from one arm to another.
For Bernoulli distributions, $\mu_a$ and $\lambda_a$ are in $[0,1]$ for all $a \in A$ and we assume all the corruption functions $\{g_a\}_{a \in A}$ to be continuous aleast in this interval.
Let $a_*(\boldsymbol{\nu})\in\arg\max{\mu^{\boldsymbol{\nu}}_a}$ be the optimal arm  in the corrupt bandit model $\boldsymbol{\nu}$\footnote{When the associated model is clear from the context, we drop the symbol $\boldsymbol{\nu}$.}. Without loss of generality, we assume when presenting the results that arm 1 is the optimal arm for the rest of this article, unless otherwise specified.
%TU: it's fine with me as long as we do not forget to mention where needed.\todo{Be careful with such a sentence...}
The objective is to design a strategy, which chooses an arm $\ArmAt_{t}$ to be pulled at time $t$ based only on the previously observed feedback, $F_1,\dots,F_{t-1}$, in order to maximize the expected sum of rewards, or equivalently to minimize the regret: 
%Pratik comment: R was being used for both reward and regret so I changed regret to \operatorname{Regret} 
$
\Regret_T (\boldsymbol{\nu}) \defined \bE_{\boldsymbol{\nu}}\left[\mu_1 \cdot T - \sum_{t = 1}^T R_t\right] = \sum_{a = 2}^K \Delta_a \cdot \bE_{\boldsymbol{\nu}}[N_a(T)]
$
where $N_a(T) \defined \sum_{t = 1}^T \ind_{(\ArmAt_{t} \IsEqual a)}$ denotes the number of pulls of arm $a$ up to time $T$ and $\Delta_a \defined \mu_1 - \mu_a$ i.e. the gap between the optimal mean reward and the mean reward of arm $a$.

%f[inline]{Could we use the (sparser) notation $A_t$ in place of ARM$_t$?}

%\todo[inline]{As for the regret, I would never talk about expected regret, but directly define the regret $R_T(\bm\nu) = \mu^*T - \bE[]$... The dependency in $\bm\nu$ IS important for the regret}f

Another way to define the link between the reward and the feedback is to provide a \textit{corruption scheme} operator $\tilde{g}_a$ which maps the reward outcomes into feedback distributions. If the mean is a sufficient statistic of the reward distribution, then the learner can build their own corruption function from the corruption scheme and the two definitions are equivalent.
This equivalence is true for Bernoulli distributions where most of our results apply.

\paragraph{Randomized response.}
Randomized response (\cite{Warner1965}), described in the introduction, can be simulated by a Bernoulli corrupt bandit and the corresponding corruption scheme $\tilde{g}_a$ can be encoded by the matrix:
\begin{equation}
\label{eq:CorruptionMatrix}
  \mathbb{M}_{a} \defined \kbordermatrix{%
      & 0  & 1  \\
    0 & p_{00}(a) & 1 - p_{11}(a) \\
    1 & 1 - p_{00}(a) & p_{11}(a) \\
  }
\end{equation}
where  $ \mathbb{M}_a(y,x) \defined \mathbb{P}( \text{Feedback from arm } a \IsEqual  y \given \text{Reward from arm } a \IsEqual x ).$

The corresponding linear corruption function is 
\begin{equation}
g_a(x) =  1 - p_{00}(a) + [p_{00}(a) + p_{11}(a) - 1]\cdot{} x. \label{eq:RRCorFunction}
\end{equation}

\section{Lower Bound on the Regret for MAB with Corrupted Feedback} 
\label{sec:lower_bound}
Following a definition by \cite{LaiRobbins1985} for the classical MAB, we define a \emph{uniformly efficient} algorithm for the corrupt bandit problem with prescribed corruption functions $\{g_a\}_{a \in A}$ as an algorithm which, for any problem instance $\boldsymbol{\nu}$, has $\Regret_T(\boldsymbol{\nu})=o(T^\alpha)$ for all $\alpha \in ]0,1[$. Theorem~\ref{thm:LB} provides a lower bound on the regret of a uniformly efficient algorithm, in terms of the Kullback-Leibler (KL) divergence between some distributions. We introduce the following notation for the KL-divergence between the Bernoulli distribution of mean $x$ and that of mean $y$: 
\begin{equation}d(x,y):= \mathrm{KL}(\cB(x),\cB(y)) = x \cdot \log\left({x}/{y}\right)+(1-x)\cdot \log\left({(1-x)}/{(1-y)}\right).\label{def:KLdiv} \end{equation}
\begin{myTheorem}\label{thm:LB}
Given continuous corruption functions $\{g_a\}_{a \in A}$, any uniformly efficient algorithm for a Bernoulli corrupt bandit problem satisfies,
$$\liminf_{T\rightarrow \infty}\frac{\cRegret_T}{\log(T)} \geq \sum_{a = 2}^{K} \frac{\Delta_a}{d\left(\lambda_a,g_a(\mu_1)\right)}.$$
\end{myTheorem}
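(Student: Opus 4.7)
The plan is to follow the classical Lai--Robbins style change-of-measure argument, in the form popularized by Garivier and Kaufmann for KL-type lower bounds. The regret decomposition $\Regret_T(\boldsymbol{\nu}) = \sum_{a \ge 2} \Delta_a \,\mathbb{E}_{\boldsymbol{\nu}}[N_a(T)]$ reduces the problem to showing, for every suboptimal arm $a$, that
$$\liminf_{T\to\infty} \frac{\mathbb{E}_{\boldsymbol{\nu}}[N_a(T)]}{\log T} \;\ge\; \frac{1}{d(\lambda_a, g_a(\mu_1))}.$$

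I fix a suboptimal arm $a$ and, for a small $\varepsilon>0$, build a confusing instance $\boldsymbol{\nu}'$ which agrees with $\boldsymbol{\nu}$ on every arm except $a$, and whose arm-$a$ reward distribution is $\mathcal{B}(\mu_a')$ with $\mu_a' = \mu_1 + \varepsilon$. The constraint \eqref{eq:link} forces the corresponding feedback distribution to have mean $g_a(\mu_a')$; since I am working with Bernoulli distributions I simply take the feedback in $\boldsymbol{\nu}'$ for arm $a$ to be $\mathcal{B}(g_a(\mu_a'))$. In $\boldsymbol{\nu}'$ the optimal arm is now $a$, so arm $1$ becomes suboptimal with gap at least $\varepsilon$.

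Next I apply the standard divergence decomposition. Letting $\mathcal{F}_T$ denote the $\sigma$-algebra generated by the observed feedback sequence $F_1,\dots,F_T$ and the (possibly randomized) arm choices, the chain rule for KL-divergences, together with the fact that only arm $a$'s feedback law differs between $\boldsymbol{\nu}$ and $\boldsymbol{\nu}'$, yields
$$\mathrm{KL}\!\left(\mathbb{P}_{\boldsymbol{\nu}}^{\mathcal{F}_T},\mathbb{P}_{\boldsymbol{\nu}'}^{\mathcal{F}_T}\right) \;=\; \mathbb{E}_{\boldsymbol{\nu}}[N_a(T)]\cdot d\bigl(\lambda_a,\,g_a(\mu_a')\bigr).$$
Then, using the contraction (data-processing) inequality applied to the Bernoulli test event $\{N_a(T) \le T/2\}$ as in the Garivier--Kaufmann lemma, together with uniform efficiency of the algorithm (which ensures that $\mathbb{P}_{\boldsymbol{\nu}}(N_a(T)\le T/2)\to 1$ while $\mathbb{P}_{\boldsymbol{\nu}'}(N_a(T)\le T/2)\to 0$ fast enough, since in $\boldsymbol{\nu}'$ arm $a$ is optimal), the KL is at least $(1+o(1))\log T$. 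Combining the two bounds yields
$$\mathbb{E}_{\boldsymbol{\nu}}[N_a(T)] \;\ge\; \frac{(1+o(1))\log T}{d(\lambda_a,\,g_a(\mu_a'))}.$$

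Finally, I let $\varepsilon \downarrow 0$. Continuity of $g_a$ at $\mu_1$ gives $g_a(\mu_a')\to g_a(\mu_1)$, and continuity of the binary KL in its second argument gives $d(\lambda_a,g_a(\mu_a'))\to d(\lambda_a,g_a(\mu_1))$, which produces the claimed bound after summation over $a\ge 2$. The main subtlety I expect is the continuity step: one needs $g_a(\mu_1)\in(0,1)$ and $\lambda_a\ne g_a(\mu_1)$ for the KL to be finite and continuous, and the argument also requires that an $\varepsilon$-perturbation of $\mu_a$ indeed produces a valid Bernoulli corrupt bandit instance (i.e., $g_a(\mu_a')\in[0,1]$), which holds for $\varepsilon$ small enough precisely because $g_a$ is continuous on $[0,1]$. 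No monotonicity or injectivity of $g_a$ is needed because the change of measure only requires a single perturbation direction that makes arm $a$ optimal in the reward world.
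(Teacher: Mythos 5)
Your proposal is correct and follows essentially the same route as the paper: a change-of-measure to the confusing instance obtained by raising arm $a$'s reward mean to $\mu_1+\varepsilon$ (hence its feedback mean to $g_a(\mu_1+\varepsilon)$), the divergence decomposition $\mathrm{KL}=\mathbb{E}_{\boldsymbol{\nu}}[N_a(T)]\,d(\lambda_a,g_a(\mu_1+\varepsilon))$, the Garivier--Kaufmann information inequality combined with uniform efficiency to lower bound that KL by $(1+o(1))\log T$, and continuity of $g_a$ to let $\varepsilon\downarrow 0$. The only cosmetic difference is that you apply the inequality to the indicator of the event $\{N_a(T)\le T/2\}$, whereas the paper takes $Z=N_1(T)/T$ in the same lemma; both choices yield the same asymptotics.
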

\noindent
The proof of Theorem~\ref{thm:LB} can be found in Section~\ref{sec:Proof_LB_Theorem}.
%\todo[inline]{Shouldn't we remove Lemma 1 ? The full proof of Theorem 1 is included in the main text anyways...}

The lower bound reveals that the divergence between the mean feedback from $a \in A $ and the image of the optimal reward $\mu_1$ with $g_a$ plays a crucial role in distinguishing arm
$a$ from the optimal arm. The shape of the $g_a$ function in the neighborhood of both $a$ and $1$ has a great impact on the information the learner can extract from the received feedback. Particularly, if the $g_a$ function is non-monotonic, as shown in Figure \ref{fig:hardlink}, it might be impossible to distinguish between arm a and the optimal arm. To circumvent this problem, we assume the corruption functions $ \{g_a\}_{a \in A}$ to be strictly monotonic in our algorithms and we denote its corresponding inverse function by $g_a^{-1}$. Such an informative corruption
function is shown in Figure \ref{fig:easylink}. To clarify that the gap between $\lambda_a$ and $\lambda_1$ is not relevant here, we also plot in Figure \ref{fig:easylink}, a corruption function $g_1$ which differs from $g_a$ and causes fortuitously the two arms to have the same mean feedback with different interpretations in terms of mean rewards.

\begin{figure}
\centering
\begin{subfigure}{.5\textwidth}
  \centering
  \includegraphics[width=.9\linewidth]{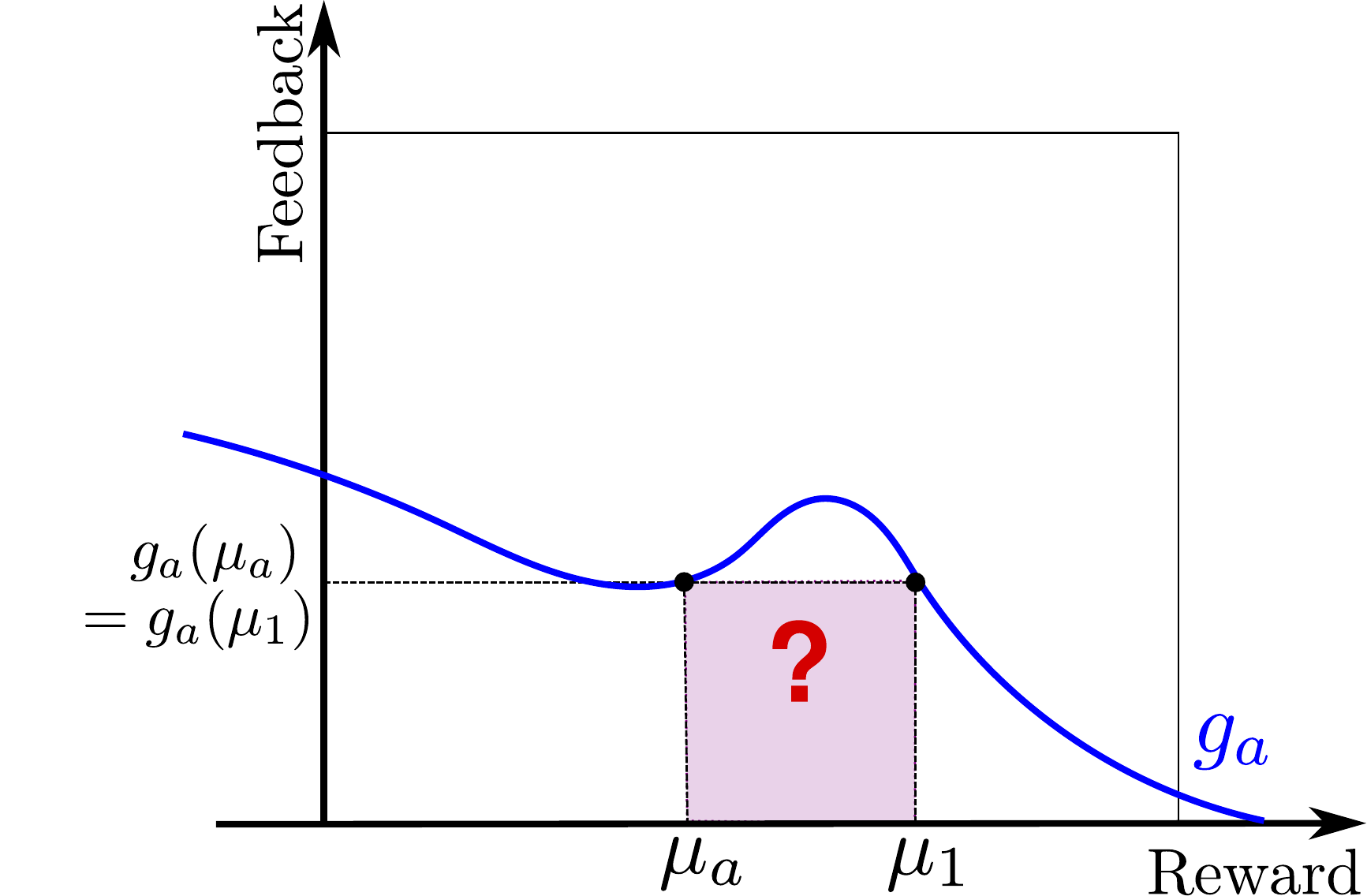}
  \caption{Uninformative $g_a$ function}
  \label{fig:hardlink}
\end{subfigure}%
\begin{subfigure}{.5\textwidth}
  \centering
  \includegraphics[width=.9\linewidth]{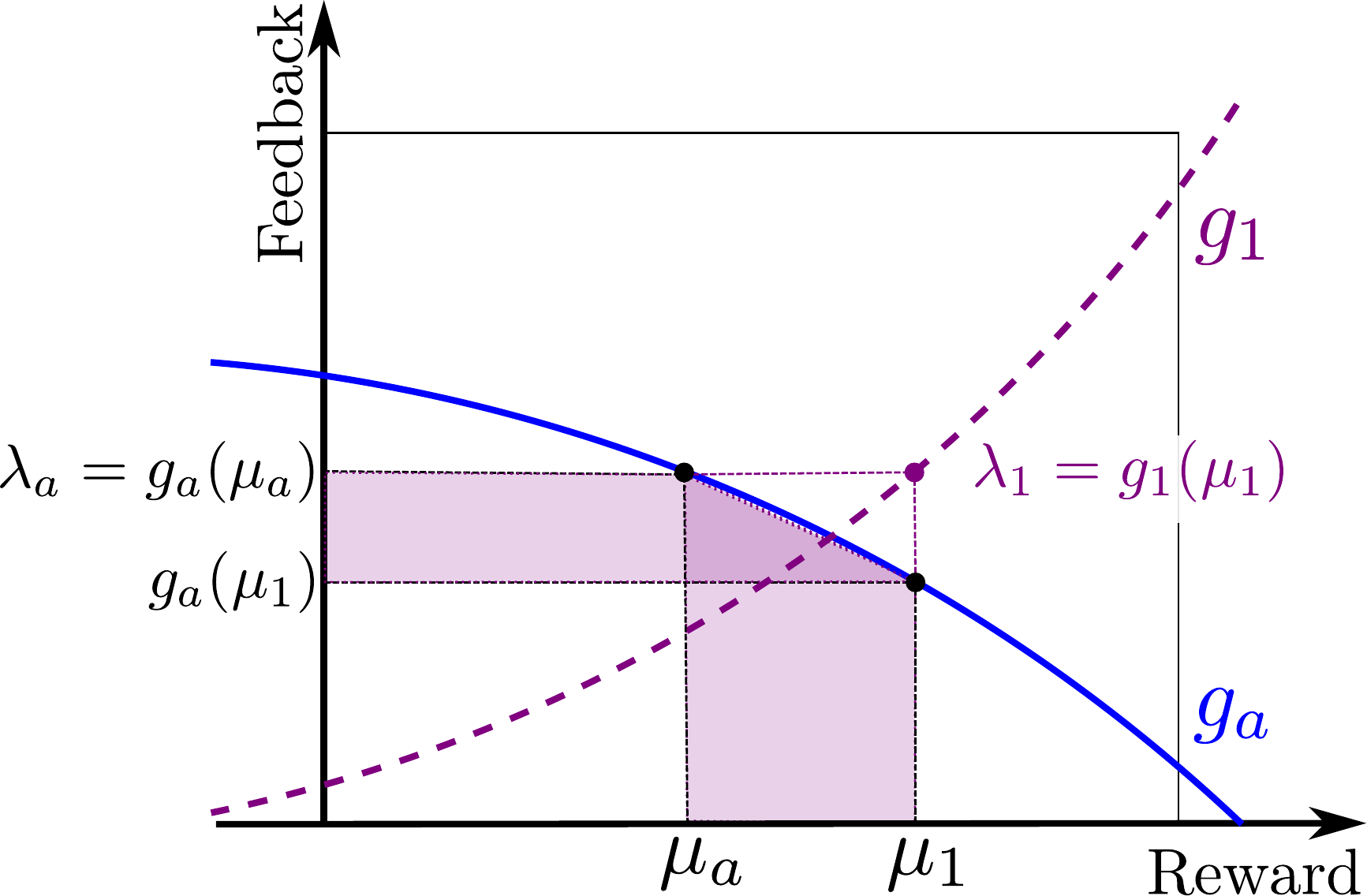}
  \caption{Informative $g_a$ function}
  \label{fig:easylink}
\end{subfigure}
\caption{In Figure \ref{fig:hardlink}, $g_a$ such that $\lambda_a = g_a(\mu_1)$ thereby making it impossible to discern
arm a from the optimal arm given the mean feedback. In Figure \ref{fig:easylink}, a steep monotonic $g_a$ leads the reward gap $\Delta_a = \mu_1 - \mu_a$ into a clear gap between $\lambda$ and $g_a(\mu_1)$. }
\label{fig:link}
\end{figure}

\section{Algorithms for MAB with Corrupted Feedback}
\label{sec:algos}
There are two popular approaches to solve the MAB problem in its many variations: the frequentist approach and the Bayesian approach. In this article, we propose both a frequentist and a Bayesian algorithm for the problem at hand.  

\subsection{\KLUCB \ for MAB with Corrupted Feedback (\KLUCBCF)}
We propose in Algorithm~\ref{algo:KLUCBCF} an adaptation of the \KLUCB \ algorithm of \cite{KLUCBJournal}. $\mathrm{Index}_a(t)$ is an upper-confidence bound on $\mu_a$ built from a confidence interval on $\lambda_a$ based on the KL-divergence \eqref{def:KLdiv}. The quantity $\hat{\lambda}_a(t)$ in the algorithm denotes the empirical mean of the feedback observed from arm $a$ until time $t$: $\hat{\lambda}_a(t) \defined \frac{1}{N_a(t)}\sum_{s=1}^t F_s \ind_{(\ArmAt_s = a)}$. 
\begin{algorithm}[h]
\caption{\KLUCB \ for MAB with corrupted feedback (\KLUCBCF)}
\label{algo:KLUCBCF}
\begin{algorithmic}[1]
\Statex \textbf{Input:} A bandit model with a set of arms $ A \defined \{1, \dots, K\}$ with unknown mean rewards $\mu_1,\dots,\mu_K$ and unknown mean feedbacks $\lambda_1,\dots,\lambda_K$ and monotonic and continuous corruption functions $g_1, \dots, g_k$.
\Statex \textbf{Parameters:} A non-decreasing (exploration) function $f:\mathbb{N} \rightarrow \mathbb{R}$, $d(x,y) \defined \mathrm{KL}(\cB(x),\cB(y))$, Time horizon $T$.
\State \textbf{Initialization:} Pull each arm once.
\For{time $t \gets K, \dots, T-1$}
\State Compute for each arm $a$ in $A$ the quantity
$$\mathrm{Index}_a(t) \defined \max 
\left\{ q:\  N_a(t)\cdot{}d(\hat{\lambda}_a(t), g_a(q)) \leq f(t)\right\}$$
\State Pull arm $\ArmAt_{t+1} \defined \operatornamewithlimits{argmax}\limits_a \hspace{0.25em}{\operatorname{Index}_a(t)}$ and observe the feedback $F_{t+1}$.
\EndFor
\end{algorithmic}
\end{algorithm}

Theorem~\ref{UB_KLUCB_Theorem}  gives an upper bound on the regret of \KLUCBCF, showing that it matches the lower bound given in Theorem~\ref{thm:LB}. A more explicit finite-time bound is proved in Appendix~\ref{sec:Proof_UB_KLUCBCF_Theorem}.
\begin{myTheorem}
\label{UB_KLUCB_Theorem} 
\KLUCBCF \ using  $f(t) \defined \log(t)+3\log(\log(t))$ on a $K$-armed 
Bernoulli corrupt bandit with strictly monotonic and continuous corruption functions $\{g_a\}_{a \in A}$ satisfies at time $T$,
\begin{align*}
\Regret_T &\leq \sum_{a = 2}^{K}\frac{\Delta_a \log (T) }{d\left(\lambda_a,g_a(\mu_1)\right)}  + O(\sqrt{\log(T)}).
\end{align*}
\end{myTheorem}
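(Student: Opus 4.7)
The plan is to reduce the analysis to the classical \KLUCB{} regret proof of~\cite{KLUCBJournal}, using the strict monotonicity of each $g_a$ as a bridge between reward space and feedback space. The key observation is that if we define $U_a(t) \defined g_a(\mathrm{Index}_a(t))$ (assuming $g_a$ increasing for concreteness; the decreasing case is symmetric, with a lower confidence bound), then $U_a(t)$ coincides with the classical KL-UCB upper confidence bound at level $f(t)/N_a(t)$ for the mean feedback $\lambda_a$, computed from the $N_a(t)$ i.i.d.\ samples of $\varsigma_a$ collected so far. In particular $\mathrm{Index}_a(t) \ge \mu_1 \iff U_a(t) \ge g_a(\mu_1)$, so every optimism comparison in reward space rewrites as an optimism comparison in feedback space, to which the standard concentration toolbox applies.

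Fix a suboptimal arm $a$ and set $\lambda_a^\star \defined g_a(\mu_1)$; strict monotonicity of $g_a$ together with $\mu_a \ne \mu_1$ forces $\lambda_a \ne \lambda_a^\star$. The usual UCB decomposition into the events ``optimal arm under-estimated'' and ``arm $a$ over-estimated'' yields
\[
\mathbb{E}[N_a(T)] \le \sum_{t=K}^{T-1} \mathbb{P}\bigl(U_1(t) < \lambda_1\bigr) + \mathbb{E}\!\left[\sum_{t=K}^{T-1} \ind\{\hat a_{t+1}=a,\ U_a(t) \ge \lambda_a^\star\}\right].
\]
The first sum is $O(1)$ for $f(t)=\log t + 3\log\log t$, by the self-normalized KL deviation inequality of~\cite{KLUCBJournal} (their Theorem~10) applied to the i.i.d.\ feedback sequence from arm~$1$ with mean $\lambda_1 = g_1(\mu_1)$. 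The second term is handled by the classical KL-UCB argument: for a small slack $\varepsilon > 0$, on the considered event either $N_a(t) \le f(T)/d(\lambda_a^\star - \varepsilon,\, \lambda_a^\star)$ (which, by continuity of $d$, contributes the main $\log(T)/d(\lambda_a,\lambda_a^\star)$ term) or $\hat\lambda_a(t) \ge \lambda_a^\star - \varepsilon$, a tail event whose probability summed over $t$ is $O(1)$ since $\lambda_a < \lambda_a^\star - \varepsilon$ for $\varepsilon$ small enough. Optimizing $\varepsilon \asymp 1/\sqrt{\log T}$ gives an expected pull count of $\log(T)/d(\lambda_a, \lambda_a^\star) + O(\sqrt{\log T})$, and multiplying by $\Delta_a$ and summing over $a \ne 1$ yields the theorem.

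The main obstacle is not the regret bookkeeping itself, which largely transcribes the uncorrupted proof, but verifying the feedback-space reductions rigorously: namely, that $U_a(t) = g_a(\mathrm{Index}_a(t))$ genuinely matches the standard KL-UCB bound (with careful handling of edge cases where $g_a$'s range is a strict subset of $[0,1]$ or where $\hat\lambda_a(t)$ lies on the wrong side of $\lambda_a^\star$), and that pulling arm~$a$ indeed delivers i.i.d.\ samples of mean $\lambda_a$ in $[0,1]$, so that the self-normalized deviation inequality of~\cite{KLUCBJournal} can be invoked off-the-shelf. Once these identifications are in place, the regret bound follows by direct analogy with the classical KL-UCB analysis.
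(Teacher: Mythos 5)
Your overall architecture coincides with the paper's: you rewrite the index as a KL confidence bound in feedback space ($U_a(t)=g_a(\mathrm{Index}_a(t))$ is exactly the paper's $u_a(t)$, resp.\ $\ell_a(t)$ for decreasing $g_a$) and decompose $\bE[N_a(T)]$ into an ``arm $1$'s feedback mean leaves its confidence interval'' term plus an ``arm $a$'s bound crosses $g_a(\mu_1)$'' term, which the paper carries out in four monotonicity cases. The genuine gap is in your treatment of the second term. You place the slack at $\lambda_a^\star:=g_a(\mu_1)$: on $\{U_a(t)\ge\lambda_a^\star\}$ you conclude that either $\hat\lambda_a(t)\ge\lambda_a^\star-\varepsilon$ or $N_a(t)\le f(T)/d(\lambda_a^\star-\varepsilon,\lambda_a^\star)$. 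The implication is valid, but the conclusion you draw from it is not: as $\varepsilon\to 0$ one has $d(\lambda_a^\star-\varepsilon,\lambda_a^\star)\to 0$, so $f(T)/d(\lambda_a^\star-\varepsilon,\lambda_a^\star)$ diverges instead of approaching $\log(T)/d(\lambda_a,\lambda_a^\star)$, and for any fixed $\varepsilon>0$ the resulting leading constant is strictly worse than the theorem's (since $\lambda_a<\lambda_a^\star-\varepsilon$). Continuity of $d$ helps only if the slack is anchored at $\lambda_a$, i.e.\ splitting on $\hat\lambda_a(t)\le\lambda_a+\varepsilon$, which gives $N_a(t)\le f(T)/d(\lambda_a+\varepsilon,\lambda_a^\star)$ with denominator tending to $d(\lambda_a,\lambda_a^\star)$.

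Even with that fix, the elementary Chernoff dichotomy does not yield the stated $O(\sqrt{\log T})$ remainder: the tail contribution costs about $1/d(\lambda_a+\varepsilon,\lambda_a)\asymp\varepsilon^{-2}$ while the inflation of the main term costs about $\varepsilon\log T$, so balancing forces $\varepsilon\asymp(\log T)^{-1/3}$ and a $(\log T)^{2/3}$ remainder; your choice $\varepsilon\asymp 1/\sqrt{\log T}$ leaves a $\Theta(\log T)$ tail term that would even pollute the leading constant. The paper sidesteps this by rewriting the overestimation term as $\sum_{s=1}^{T-1}\bP\left(s\, d^+(\hat\lambda_{a,s},g_a(\mu_1))\le f(T)\right)$, with $d^+(x,y):=d(x,y)\ind_{(x<y)}$, and invoking the refined bound of Appendix A.2 of \cite{KLUCBJournal}, whose second-order term is $\sqrt{2\pi}\sqrt{(d')^2/d^3}\sqrt{f(T)}$; the decreasing-$g_a$ case is reduced to the increasing one via $d^-(x,y)=d^+(1-x,1-y)$. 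A minor further slip: with $f(t)=\log t+3\log\log t$, the under-estimation term for arm $1$ is of order $\log\log T$ (the paper obtains $3+4e\log\log T$), not $O(1)$ — harmless here, since it is absorbed into the $O(\sqrt{\log T})$ remainder.
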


The \UCB \ algorithm (\cite{Auer2002a}) can also be updated to \UCBCF \ to deal with the corrupted feedback by modifying the index to
\begin{equation*}
    \mathrm{Index}_a(t) \defined
    \begin{cases}
      g_a^{-1}\Big ( \hat{\lambda}_a(t) +  \sqrt{\frac{f(t)}{2 N_a(t)}} \Big) & \text{if increasing}\ g_a\\
      g_a^{-1}\Big ( \hat{\lambda}_a(t) -  \sqrt{\frac{f(t)}{2 N_a(t)}} \Big) & \text{if decreasing}\ g_a
    \end{cases}
  \end{equation*}
\begin{myCorollary}
\label{UB_UCB_Theorem} With $f(t) \defined \log(t)+3\log(\log(t))$,
the regret of \textsc{UCB-CF} at time $T$ on a $K$-armed Bernoulli corrupt bandit with strictly monotonic and continuous corruption functions $\{g_a\}_{a \in A}$ is in $O\Big( \sum_{a = 2}^{K}\frac{ \Delta_a \log(T)}{ (\lambda_a - g_a(\mu_1))^2 } \Big)$.
\end{myCorollary}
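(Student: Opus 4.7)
The plan is to carry out a standard \UCB\ regret analysis, the only twist being that confidence bounds are formulated on the feedback scale and transported to the reward scale through the monotonic corruption function $g_a$. First I would note that under strict monotonicity, $g_a^{-1}$ is well defined and monotonic, so the inequality $\mathrm{Index}_a(t) \geq q$ is equivalent to an inequality of the form $\hat{\lambda}_a(t) \pm \sqrt{f(t)/(2N_a(t))} \gtrless g_a(q)$, with sign and direction fixed by the monotonicity of $g_a$. In either case, applying Hoeffding's inequality to the bounded feedback shows that $\mathrm{Index}_a(t)$ is a valid upper confidence bound on $\mu_a$ with failure probability at most $e^{-f(t)}$.

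I would then follow the classical decomposition. If a suboptimal arm $a$ is pulled at time $t+1$, then $\mathrm{Index}_a(t) \geq \mathrm{Index}_1(t)$, so either $\mathrm{Index}_1(t) < \mu_1$ (the optimal-arm confidence bound fails) or $\mathrm{Index}_a(t) \geq \mu_1$. Translating the latter back to an inequality on $\hat{\lambda}_a(t)$ via the first step, one sees that once $N_a(t) > 2f(t)/(\lambda_a - g_a(\mu_1))^2$, the event $\mathrm{Index}_a(t) \geq \mu_1$ forces a deviation $|\hat{\lambda}_a(t)-\lambda_a| > |\lambda_a - g_a(\mu_1)|/2$, which by Hoeffding occurs with probability at most $\exp(-N_a(t)(\lambda_a - g_a(\mu_1))^2/2)$.

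Summing these tail probabilities over $t$ (the choice $f(t)=\log t + 3\log\log t$ is precisely what makes the series $\sum_t e^{-f(t)}$ convergent and controls the analogous failure probability for arm $1$) yields
\begin{equation*}
\mathbb{E}[N_a(T)] \;\leq\; \frac{2 \log(T)}{(\lambda_a - g_a(\mu_1))^2} + O(\log\log T),
\end{equation*}
and multiplying by $\Delta_a$ and summing over suboptimal arms produces the stated bound.

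The main obstacle, such as it is, lies in handling the increasing and decreasing cases of $g_a$ uniformly: the inverse $g_a^{-1}$ reverses inequalities in the decreasing case, which is exactly why the definition of $\mathrm{Index}_a(t)$ in the corollary uses $\hat{\lambda}_a(t)-\sqrt{f(t)/(2N_a(t))}$ in that regime. Because the resulting Hoeffding event and the final bound involve $(\lambda_a - g_a(\mu_1))$ only through its square, the two cases collapse and no case-specific work survives into the conclusion, leaving the argument a direct transplant of the \UCB\ analysis of \cite{Auer2002a}.
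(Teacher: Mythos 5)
Your overall architecture matches the paper's: the corollary is obtained there by rerunning the proof of Theorem~\ref{UB_KLUCB_Theorem} with the quadratic divergence $2(x-y)^2$ in place of $d(x,y)$ via Pinsker's inequality, i.e.\ exactly your idea of transporting a feedback-scale confidence interval through $g_a^{-1}$, splitting on the monotonicity of $g_1,g_a$, and decomposing the event $\{\ArmAt_{t+1}=a\}$ into ``the optimal arm's bound fails'' versus ``$\mathrm{Index}_a(t)\ge\mu_1$''. Your treatment of the second term is sound in spirit: once $N_a(t) > 2f(t)/(\lambda_a-g_a(\mu_1))^2$, the event $\mathrm{Index}_a(t)\ge\mu_1$ forces $|\hat{\lambda}_a(t)-\lambda_a|>|\lambda_a-g_a(\mu_1)|/2$, and the randomness of $N_a(t)$ is innocuous there because one sums Hoeffding tails over the counter value $s$ (each value is hit at most once), exactly as in the paper's bound \eqref{eq:SecondTermIncreasing}.

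The genuine gap is in your first step. You assert that $\mathrm{Index}_1(t)<\mu_1$ has probability at most $e^{-f(t)}$ ``by Hoeffding'' and that summability of $\sum_t e^{-f(t)}$ finishes the job. But $N_1(t)$ is a random, data-dependent sample count, so Hoeffding only applies at a fixed count $s$; the naive union bound over the $t$ possible values of $N_1(t)$ gives $t\,e^{-f(t)}=1/\log^3 t$, and $\sum_{t\le T}1/\log^3 t=\Theta(T/\log^3 T)$, which destroys the claimed $O(\log T)$ regret. This would be harmless with the classical \UCB\ exploration rate $f(t)=2\log t$, but the corollary prescribes the much smaller $f(t)=\log t+3\log\log t$, for which a plain union bound is not enough. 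The paper's proof handles precisely this point with the self-normalized (peeling) deviation inequality of Lemma~2 in \cite{KLUCBJournal}, which bounds $\bP(u_1(t)<g_1(\mu_1))$ and $\bP(\ell_1(t)>g_1(\mu_1))$ by $e\lceil f(t)\log t\rceil e^{-f(t)}=O(1/(t\log t))$, summing to $O(\log\log T)$. The fix for your argument is immediate: by Pinsker, the event $\hat{\lambda}_{1,s}+\sqrt{f(t)/(2s)}<g_1(\mu_1)$ implies $\hat{\lambda}_{1,s}<g_1(\mu_1)$ and $s\,d(\hat{\lambda}_{1,s},g_1(\mu_1))\ge 2s(\hat{\lambda}_{1,s}-g_1(\mu_1))^2\ge f(t)$, so the KL-based self-normalized bound applies verbatim to the \UCBCF\ index; with that substitution (and its mirror image for decreasing $g_1$), the rest of your proof goes through and yields the stated bound.
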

\noindent
The proof of this corollary follows the proof of Theorem \ref{UB_KLUCB_Theorem}, using the quadratic divergence $2(x - y)^2$ in place of $d(x,y)$ through Pinsker's inequality.  \textsc{UCB-CF} is only order optimal with respect to the bound of Theorem~\ref{thm:LB}, but its index is simpler to compute.
%New subsection
\subsection{Thompson Sampling for MAB with Corrupted Feedback (\TSCF)}
\TSCF \ maintains a Beta posterior distribution on the mean feedback of each arm. At time $t+1$, for each arm $a$, it draws a sample $\theta_a(t)$ from the posterior distribution on $\lambda_a$ and pulls the arm which maximizes $g_a^{-1}(\theta_a(t))$. This mechanism ensures that at each time, the probability that arm $a$ is played is the posterior probability of this arm to be optimal, as in classical Thompson Sampling (\TS) (\cite{thompson1933}).
\begin{algorithm}[h]
\caption{Thompson sampling for MAB with corrupted feedback (\TSCF)}
\label{algo:TSCF}
    \begin{algorithmic}[1]
    \Statex \textbf{Input:} A bandit model with a set of arms $ A \defined \{1, \dots, K\}$ arms with unknown reward means $\mu_1,\dots,\mu_K$ and unknown feedback means $\lambda_1,\dots,\lambda_K$ and monotonic and continuous corruption functions $g_1, \dots, g_K$.
    \Statex \textbf{Parameters:} Time horizon $T$.
    \State \textbf{Initialization:} For each arm $a$ in $A$, set $\operatorname{success}_a \gets 0$ and $\operatorname{fail}_a \gets 0$
    \For{$t \gets 0, \dots,T-1$}
    \State For each arm $a$ in $A$, sample 
    $\theta_a(t)$ from $\mathrm{Beta}(\operatorname{success}_a+1, \operatorname{fail}_a + 1)
    $.
    \State Pull arm $\ArmAt_{t+1} \defined \arg\max\limits_a g_a^{-1}(\theta_a(t))$ and observe the feedback $F_{t+1}$.
    \begin{comment}
    \State \textbf{If}($F_{t+1} \IsEqual 1$) \textbf{then}
    \Statex \hspace{1.2cm} $\operatorname{success}_{\ArmAt_{t+1}} \gets \operatorname{success}_{\ArmAt_{t+1}} + 1$
    \Statex \hspace{0.4cm} \textbf{Else}
    \Statex $\operatorname{fail}_{\ArmAt_{t+1}} \gets \operatorname{fail}_{\ArmAt_{t+1}} + 1$
    \end{comment}
    \If{$F_{t+1} \IsEqual 1$} 
    \State	$\operatorname{success}_{\ArmAt_{t+1}} \gets \operatorname{success}_{\ArmAt_{t+1}} + 1$
    \Else
    \State	$\operatorname{fail}_{\ArmAt_{t+1}} \gets \operatorname{fail}_{\ArmAt_{t+1}} + 1$
    \EndIf
    \EndFor
    \end{algorithmic}
\end{algorithm}

\begin{myTheorem}\label{thm:TSCF}
\label{UB_TSCF_Theorem}
When \TSCF \ is run on a $K$-armed Bernoulli corrupt bandit with strictly monotonic and continuous corruption functions $\{g_a\}_{a \in A}$, for all $\psi >0$, there exists a constant $C_\psi \defined C(\psi,\{\mu_a\}_{a \in A},\{g_a\}_{a \in A})$ such that at time $T$,
\[\cRegret_T \leq (1+\psi)\sum_{a = 2}^K\frac{\Delta_a\log(T)}{d(\lambda_a,g_a(\mu_1))} + C_\psi.\]
\end{myTheorem}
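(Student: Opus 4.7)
The plan is to adapt the Agrawal--Goyal (2013) analysis of Thompson Sampling for Bernoulli bandits to the corrupt feedback setting. Since each $g_a$ is a strict monotone bijection on $[0,1]$, the selection rule in \TSCF{} is equivalent (up to reversing inequalities when $g_a$ is decreasing) to a comparison of the posterior samples $\theta_a(t)$ on the feedback scale against appropriately transformed thresholds. Assume below that every $g_a$ is strictly increasing; the decreasing case requires only sign adjustments.

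Fix $\psi > 0$ and a suboptimal arm $a$. By continuity of $g_a$ and of $d$, choose $\epsilon > 0$ small enough that $x_a \defined \mu_1 - \epsilon$ satisfies $x_a > \mu_a$ and
\[
\frac{1}{d(\lambda_a,\, g_a(x_a))} \;\leq\; \frac{1+\psi}{d(\lambda_a,\, g_a(\mu_1))}.
\]
Setting $y_a \defined g_a(x_a) \in (\lambda_a, g_a(\mu_1))$ and observing that $\{g_1^{-1}(\theta_1(t)) > x_a,\ \ArmAt_{t+1}=a\}$ forces $\theta_a(t) > y_a$, split
\[
\bE[N_a(T)] \;=\; \sum_{t=0}^{T-1} \bP\!\left(\ArmAt_{t+1}=a,\ \theta_a(t) > y_a\right) \;+\; \sum_{t=0}^{T-1} \bP\!\left(\ArmAt_{t+1}=a,\ g_1^{-1}(\theta_1(t)) \leq x_a\right).
\]

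The first sum is bounded by a Chernoff--Sanov deviation argument on the Beta posterior of arm $a$, following Lemma~4 of Agrawal--Goyal: since the posterior on $\lambda_a$ is built from i.i.d.\ Bernoulli$(\lambda_a)$ feedback and $y_a > \lambda_a$, summing over the possible values of $N_a(t)$ yields a bound of $\log(T)/d(\lambda_a, y_a) + O(1)$, which by the choice of $\epsilon$ is at most $(1+\psi)\log(T)/d(\lambda_a, g_a(\mu_1)) + O(1)$.

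The second sum is controlled by the ratio argument of Lemma~2 of Agrawal--Goyal. Let $p_{1,t}$ denote the conditional probability that $\theta_1(t) > g_1(x_a)$ given the history $\mathcal{F}_t$. Using the independence of the posterior samples given $\mathcal{F}_t$, one can bound the second sum by $\sum_t \bE\!\left[((1-p_{1,t})/p_{1,t}) \cdot \ind(\ArmAt_{t+1}=1)\right]$, which is a finite constant depending on $\mu_1$, $g_1$, and $x_a$. The main obstacle is exactly this $O(1)$ bound: it rests on the Beta--Binomial duality and on the Beta posterior on $\lambda_1$ eventually concentrating above $g_1(x_a) < \lambda_1$, the strict inequality being inherited from the monotonicity of $g_1$. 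One must also verify that the induced constant depends continuously on the corruption functions so that it can be absorbed into $C_\psi$. Combining the two bounds, multiplying by $\Delta_a$, and summing over $a \geq 2$ yields the stated regret bound.
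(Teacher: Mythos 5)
Your high-level plan is the same Agrawal--Goyal adaptation the paper follows, but your treatment of the first sum has a genuine gap. You claim that $\sum_{t}\bP\bigl(\ArmAt_{t+1}=a,\ \theta_a(t)>y_a\bigr)\le \log(T)/d(\lambda_a,y_a)+O(1)$ follows from a single deviation argument on the Beta posterior of arm $a$. It does not: marginally, for a sample from $\mathrm{Beta}(S+1,s-S+1)$ with $S\sim\mathrm{Bin}(s,\lambda_a)$, the probability of exceeding $y_a$ decays at rate $\inf_{z}\{d(z,\lambda_a)+d(z,y_a)\}$, because the empirical mean can absorb part of the deviation and the posterior sample the rest; this infimum is \emph{strictly} smaller than $d(\lambda_a,y_a)$ (for $\lambda_a=0.4$, $y_a=0.6$ it is roughly half of it). A single-threshold count therefore yields only $\log(T)/\inf_z\{d(z,\lambda_a)+d(z,y_a)\}+O(1)$, which is too weak to produce the constant $(1+\psi)/d(\lambda_a,g_a(\mu_1))$ of the theorem; and if instead you condition on the history to use the conditional Beta tail, you have no control on $\hat{\lambda}_a(t)$, so that tail need not be small even when $N_a(t)$ is large. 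This is exactly why the paper (following Agrawal--Goyal) uses \emph{two} thresholds $u_a,w_a$ with $\lambda_a<u_a<w_a<g_a(\mu_1)$ (reversed when $g_a$ is decreasing) and an event $E_a^\lambda(t)$ on the empirical mean: draws with a deviated empirical mean cost only $O(1)$ in expectation by a Chernoff bound (Lemma~\ref{TSCF-lemma2}), and on $E_a^\lambda(t)$ the conditional Beta tail drops below $1/T$ after $\log(T)/d(u_a',w_a)$ pulls (Lemma~\ref{TSCF-lemma3}); the slack from $d(u_a',w_a)$ to $d(\lambda_a,g_a(\mu_1))$ is what the factor $(1+\psi)$ absorbs. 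Your proposal never controls $\hat{\lambda}_a(t)$ at all, so this step cannot be carried out as written (incidentally, the Agrawal--Goyal lemma you cite for this step is their bound on $\bE[1/p_{1,t}]$, not a posterior-deviation bound for the suboptimal arm).

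Two smaller points. Your displayed decomposition should be an inequality, since the two events overlap; more importantly, the ratio argument for the second sum must be run on the event that arm $a$'s own sample is below its threshold: the lower bound on $\bP(\ArmAt_{t+1}=1\given\mathcal{F}_t)$ by $p_{a,t}$ times the probability that arm $a$ has the largest suboptimal sample (Lemma~\ref{TSCF-lemma1}) is established conditionally on $E_a^\theta(t)$, and without it the suboptimal samples may all exceed the threshold and the comparison breaks. This is easily repaired by intersecting your second event with $\{\theta_a(t)\le y_a\}$, whose complement is your first event. Finally, the decreasing case is not a pure sign flip: the paper's Lemmas~\ref{TSCF-lemma3}--\ref{TSCF-lemma4} need separate Beta--Binomial manipulations for the opposite tail, and the transformed threshold $g_1(g_a^{-1}(w_a))$ must be compared to $\lambda_1$ in each of the four monotonicity combinations.
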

\noindent
This theorem also yields the asymptotic optimality of \TSCF \ with respect to the lower bound given in Theorem~\ref{thm:LB}. 
We give a sketch of its proof in Section~\ref{Sksec:Proof_UB_TSCF_Theorem}. 
%\iftoggle{long-version}{}{Please refer to the Appendix \rom{2} in the extended version of this article for the detailed proof.}

We can use the above algorithms on a MAB problem with randomized response. The following corollary bounds their regret.
\begin{myCorollary}
\label{UB_corollary}
The regret of \KLUCBCF \ and \TSCF \ for a $K$-armed Bernoulli MAB problem with randomized response using corruption matrices $\{\mathbb{M}\}_{a \in A}$ at time $T$ is   
$$ \sum_{a = 2}^{K}\frac{2\log(T)}{ \Delta_a (p_{00}(a) + p_{11}(a) -1)^2 } + O(\sqrt{\log{(T)}} ).$$
\end{myCorollary}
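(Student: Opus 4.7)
The plan is to specialize the general bounds of Theorems~\ref{UB_KLUCB_Theorem} and~\ref{UB_TSCF_Theorem} to the linear corruption function arising from randomized response, and then lower bound the KL divergence in the denominator by its Pinsker surrogate.

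First, I would record two facts about randomized response. From \eqref{eq:RRCorFunction}, the corruption function is affine,
\[
g_a(x) = (1 - p_{00}(a)) + c_a\, x, \qquad c_a \defined p_{00}(a) + p_{11}(a) - 1,
\]
and it is strictly monotonic (hence fits the hypothesis of Theorems~\ref{UB_KLUCB_Theorem} and~\ref{UB_TSCF_Theorem}) whenever $c_a \neq 0$. Using $\lambda_a = g_a(\mu_a)$ together with this linearity gives the key identity
\[
g_a(\mu_1) - \lambda_a \;=\; g_a(\mu_1) - g_a(\mu_a) \;=\; c_a\,(\mu_1 - \mu_a) \;=\; c_a\,\Delta_a,
\]
so the suboptimality gap in reward space is converted into a gap of size $|c_a|\Delta_a$ in feedback space. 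The second fact I need is Pinsker's inequality in Bernoulli form, $d(x,y) \geq 2(x-y)^2$ for all $x,y\in[0,1]$, which immediately yields
\[
d(\lambda_a, g_a(\mu_1)) \;\geq\; 2\bigl(\lambda_a - g_a(\mu_1)\bigr)^2 \;=\; 2\, c_a^2\, \Delta_a^2.
\]

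Next I would plug this lower bound into the asymptotic regret bound of Theorem~\ref{UB_KLUCB_Theorem} for \KLUCBCF{}. Term-by-term,
\[
\frac{\Delta_a \log T}{d(\lambda_a, g_a(\mu_1))} \;\leq\; \frac{\Delta_a \log T}{2\,c_a^2\,\Delta_a^2} \;=\; \frac{\log T}{2\, \Delta_a\, (p_{00}(a)+p_{11}(a)-1)^2},
\]
and summing over suboptimal arms together with the $O(\sqrt{\log T})$ remainder already present in Theorem~\ref{UB_KLUCB_Theorem} gives the claimed bound (up to the absolute constant in the leading term, whose precise value follows from the exact form of Pinsker's inequality). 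Exactly the same computation, applied to Theorem~\ref{UB_TSCF_Theorem} for \TSCF{}, produces the identical leading term; the constant $C_\psi$ of Theorem~\ref{UB_TSCF_Theorem} is absorbed into the $O(\sqrt{\log T})$ remainder (or more conservatively into a $C_\psi + o(\log T)$ remainder), and taking $\psi$ arbitrarily small recovers the stated rate.

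There is no serious obstacle here: the only substantive ingredients are the linearity of the randomized-response corruption function, which converts $\Delta_a$ to $c_a \Delta_a$ exactly, and Pinsker's inequality, which converts a KL divergence into a squared gap. The mild point to watch is the degenerate case $c_a = 0$, i.e.\ $p_{00}(a)+p_{11}(a)=1$, where $g_a$ is constant and both the lower bound of Theorem~\ref{thm:LB} and the corollary become vacuous (divergent); this is consistent, since under such a corruption scheme the feedback carries no information about the reward mean and the learner truly cannot identify the optimal arm.
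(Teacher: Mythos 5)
Your proposal is correct and follows essentially the same route as the paper, which likewise derives the corollary from Theorems~\ref{UB_KLUCB_Theorem} and~\ref{UB_TSCF_Theorem} via Pinsker's inequality $d(x,y)\geq 2(x-y)^2$ combined with the observation that $p_{00}(a)+p_{11}(a)-1$ is the slope of the affine corruption function, so that $\lambda_a-g_a(\mu_1)=c_a\Delta_a$. Your explicit computation even yields a slightly tighter leading constant than the one stated, which of course still implies the claimed bound.
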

\noindent
This corollary follows from Theorem~\ref{UB_KLUCB_Theorem} and Theorem~\ref{UB_TSCF_Theorem} together with Pinsker's inequality: $d(x,y) > 2(x-y)^2$.
The term $(p_{00}(a) + p_{11}(a) -1)$ is the slope of the corruption function for arm $a$ as can be seen from Eq. (\ref{eq:RRCorFunction}).
\begin{comment}
Note that, when the corruption functions are increasing and same for all the arms, the corrupt bandit algorithms we provide here are equivalent to their counterparts for the classical MAB problem.
\end{comment}
\section{Corrupted Feedback to Preserve Local Differential Privacy}
\label{sec:privacy}
\textit{Differential privacy} (DP), introduced by \cite{Dwork06calibratingnoise}, is one of the usual approaches for the privacy concerns. \cite{Dwork:2014:AFD:2693052.2693053} present a comprehensive overview. 
\cite{DBLP:journals/jmlr/JainKT12,DBLP:conf/nips/ThakurtaS13,DBLP:conf/uai/MishraT15,tossou:aaai2016} have observed the importance of privacy to MAB applications. Recently, the notion of differential privacy has been extended to \textit{local differential privacy} by \cite{Duchi:2014:PAL:2700084.2666468} in which data remains private even from the learner.
%The customary approach to achieve differential privacy in bandits is to employ a differentially private mechanism on the user feedback.
\begin{myDefinition}
\label{def:diff_privacy}
(Locally differentially private mechanism) Any randomized mechanism $\mathcal{M}$ is $\epsilon$-locally differentially private for $\epsilon \geq 0$ if for all  $d_1, d_2 \in Domain(\mathcal{M})$ and for all $S \subset Range(\mathcal{M})$,
$$
\bP[\mathcal{M}(d_1) \in S] \leq e^\epsilon \cdot \bP [\mathcal{M}(d_2) \in S]
$$
\end{myDefinition}
\noindent
In both global and local contexts, differential privacy is achieved by the addition of noise. The main difference between global and local differential privacy is whether privacy is to be maintained from the algorithm or the (possibly unintended) recipient of the output of the algorithm. In global differential privacy, noise is added by the algorithm so the output does not reveal private information about the input. In local differential privacy, noise is added to the input of the algorithm so that privacy is maintained even from the algorithm. To the best of our knowledge, hitherto all the previous work combining differential privacy and bandits has used global differential privacy, either within a stochastic (\cite{DBLP:conf/uai/MishraT15}, \cite{tossou:aaai2016}) or an adversarial (\cite{DBLP:conf/nips/ThakurtaS13}, \cite{tossou:aaai2017a}) bandit problem. 

\begin{comment}
Both of these approaches answer the privacy issue by making the output of the respective algorithm almost insensitive to any single user information i.e. they address the scenario of privacy preserving output.
As stated in \cite{tossou:aaai2016}, an ($\epsilon,\delta$)-DP bandit algorithm satisfies, for all reward sequences $R_{1:t}$ and $R'_{1:t}$ that differ in at most one place and for all $S \subseteq K$, 
\begin{align*}
\bP(A_{t+1}\! \in S \given A_{1:t},R_{1:t}) \leq & \bP(A_{t+1}\! \in S \given A_{1:t},R'_{1:t})e^\epsilon\!  + \delta. 
\end{align*}
\end{comment}

%\todo[inline]{I find the distinction between privacy preserving input and output much clearer in the AISTATS paper, especially in the first paragraph of Section 5 therein}

In this article, we consider local differential privacy. To understand the motivation for local differential privacy, let us consider these settings in the context of Internet advertising, which is one of the major applications of bandit algorithms. An advertising system receives, as input, feedback from the users which may reveal private information about them. The advertising system employs a suitable bandit algorithm and  selects the ads for the users tailored to the feedback given by them. These selected ads are then given to the advertisers as the output \footnote{This description does not express our belief of how real-life Internet advertising systems work. We use it for the purpose of illustration only.}. While using global differential privacy, privacy is maintained from the advertisers by ensuring that the output of the bandit algorithms does not reveal information about the input (i.e. user information). Typically, advertising systems are established by leading social networks, web browsers and other popular websites. \citet{DBLP:conf/icdm/Korolova10}, \cite{kosinski2013private} show that it is possible to accurately predict a range of highly sensitive personal attributes including age, sexual orientation, relationship status, political and religious affiliation, presence or absence of a particular interest, as well as exact birthday using the the feedback available to the advertising systems. Such possible breach of privacy necessitates us to protect personal user information not only from the advertisers but also from the advertising systems. Local differential privacy is able to achieve this goal unlike global differential privacy.

Recently, \cite{DBLP:conf/edbt/0009WH16} addressed a similar scenario in data collection. They used randomized response to perturb sensitive information before being collected by an untrusted server so as to limit the server's ability to learn the sensitive information with confidence. We too shall use the corruption process as a mechanism to provide local differential privacy.             

\begin{myDefinition}
($\epsilon$-locally differentially private bandit feedback corruption scheme) A bandit feedback corruption scheme $\tilde{g}$ is $\epsilon$-locally differentially private for $\epsilon \geq 0$ if for all reward sequences $R_{t1}, \dots, R_{t2}$ and $R'_{t1} \dots, R'_{t2}$, and for all $\mathcal{S} \subset Range(\tilde{g})$ 
$$ \bP [\tilde{g}(R_{t1}, \dots, R_{t2}) \in \mathcal{S} ] \leq e^\epsilon \cdot \bP [\tilde{g}(R'_{t1}, \dots, R'_{t2}) \in \mathcal{S} ].$$

\end{myDefinition}
In the case where corruption is done by randomized response, 
local differential privacy requires that 
$$\max_{ 1 \leq a \leq K}{ \Big( \frac{p_{00}(a)}{1 - p_{11}(a)}, \frac{p_{11}(a)}{1 - p_{00}(a)} \Big)} \leq e^{\epsilon} $$ 
By ensuring the appropriate values for the parameters of randomized response, users can send differentially private feedback to the learner. The learner can then employ \KLUCBCF \ or \TSCF \ to learn from such feedback. From Corollary~\ref{UB_corollary}, we can see that to achieve lower regret, $p_{00}(a) + p_{11}$(a) is to be maximized for all $a \in A$. Using Result 1 from \citet{DBLP:conf/edbt/0009WH16}, we can state that, in order to achieve $\epsilon$-local differential privacy while maximizing $p_{00}(a) + p_{11}(a)$,
\begin{equation}
\label{eq:DPmatrix}
\mathbb{M}_{a}=\kbordermatrix{%
      & 0  & 1  \\
    0 & \frac{e^{\epsilon}}{ 1 + e^\epsilon} & \frac{1}{ 1 + e^\epsilon } \\
    1 & \frac{1}{ 1 + e^\epsilon } & \frac{e^{\epsilon}}{ 1 + e^\epsilon} \\
  }.
\end{equation}
As it turns out, this is equivalent to the \textit{staircase} mechanism for local privacy given in \citet[Eq. (15)]{JMLR:v17:15-135} for binary rewards and feedbacks. Moreover, this is the optimal local differential privacy mechanism for \textit{low privacy regime} \cite[Theorem 14]{JMLR:v17:15-135}. In low privacy regime, the noise added to the data is small and the aim of the privacy mechanism is to send as much information about data as allowed, but no more \citep{NIPS2014_5392}. This is in alignment with our dual goal of using privacy with bandit algorithms: learn from the data while respecting the privacy as much as possible. The trade-off between utility and privacy is controlled by $\epsilon$. At one extreme, for $\epsilon=0$, feedbacks are independent of rewards and learning about rewards from feedbacks is not possible. This can be verified by substituting $\epsilon = 0$ in Eq. (\ref{eq:DPmatrix}) as the resulting corruption scheme makes $p_{00} = p_{11} = 0.5$. On the other extreme, for $\epsilon= \infty$, feedbacks can be made equal to rewards.   

Using the corruption parameters from Eq. (\ref{eq:DPmatrix}) with  Corollary~\ref{UB_corollary}, we arrive  at the following upper bound.
\begin{myCorollary}
\label{DPUB_corollary}
The regret of \KLUCBCF \ or \TSCF \ at time $T$ with $\epsilon$-locally differentially private bandit feedback corruption scheme is 
$$ \cRegret_T \leq \sum_{a = 2}^{K}\frac{2\log(T)}{ \Delta_a \big( \frac{e^\epsilon - 1}{e^\epsilon + 1}\big)^2} + O(\sqrt{\log{(T)}} ).$$

\end{myCorollary}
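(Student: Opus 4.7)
The plan is to combine Corollary~\ref{UB_corollary} with the explicit form of the $\epsilon$-locally differentially private corruption matrix given in Eq.~(\ref{eq:DPmatrix}). The argument is essentially substitution, since all of the analytical work has already been packaged into the preceding corollary.

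First I would recall that Corollary~\ref{UB_corollary} bounds the regret of both \KLUCBCF\ and \TSCF\ for a randomized-response corrupt bandit by
$$ \cRegret_T \;\leq\; \sum_{a=2}^{K} \frac{2\log T}{\Delta_a \, s(a)^2} + O\!\left(\sqrt{\log T}\right), $$
where $s(a) \defined p_{00}(a) + p_{11}(a) - 1$ is the slope of the linear corruption function $g_a$ associated to the randomized response scheme, read off from Eq.~(\ref{eq:RRCorFunction}). This $s(a)$ is the only problem-dependent quantity through which the corruption parameters enter the bound.

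Next I would plug in the DP-optimal parameters from Eq.~(\ref{eq:DPmatrix}), namely $p_{00}(a) = p_{11}(a) = e^\epsilon / (1+e^\epsilon)$ uniformly over $a$. A one-line calculation gives
$$ s(a) \;=\; \frac{2 e^\epsilon}{1 + e^\epsilon} - 1 \;=\; \frac{e^\epsilon - 1}{e^\epsilon + 1}, $$
so that $s(a)^2 = \bigl(\tfrac{e^\epsilon - 1}{e^\epsilon + 1}\bigr)^2$ for every $a$. Substituting this identity into the bound above yields exactly the claim of Corollary~\ref{DPUB_corollary}.

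There is essentially no obstacle: the only condition to double-check is that the hypotheses of Corollary~\ref{UB_corollary} are met by the DP-optimal corruption, i.e.\ that each $g_a$ is strictly monotonic and continuous. Since $s(a) = (e^\epsilon - 1)/(e^\epsilon + 1) > 0$ whenever $\epsilon > 0$, the associated affine $g_a$ is strictly increasing and continuous, so the hypotheses apply. The degenerate case $\epsilon = 0$ forces $s(a) = 0$ and the bound correctly diverges, matching the intuition (and the earlier discussion after Eq.~(\ref{eq:DPmatrix})) that when feedback is independent of reward no learner can achieve sublinear regret.
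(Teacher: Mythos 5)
Your proposal is correct and is exactly the paper's own argument: Corollary~\ref{DPUB_corollary} is obtained by substituting the parameters $p_{00}(a)=p_{11}(a)=e^{\epsilon}/(1+e^{\epsilon})$ from Eq.~(\ref{eq:DPmatrix}) into Corollary~\ref{UB_corollary}, which makes the slope $p_{00}(a)+p_{11}(a)-1=\frac{e^{\epsilon}-1}{e^{\epsilon}+1}$. Your extra check that the resulting affine corruption functions are strictly increasing (for $\epsilon>0$) is a sensible sanity check that the paper leaves implicit.
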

\noindent
%% Pratik addition here

The term $\big( \frac{e^\epsilon - 1}{e^\epsilon + 1}\big)^2$ in the above expression conveys the relationship of the regret with the level of local differential privacy symbolized by $\epsilon$. For low values of $\epsilon$,  $ \big( \frac{e^\epsilon - 1}{e^\epsilon + 1}\big) \approx \epsilon / 2 $. 
%As $\big( \frac{e^\epsilon - 1}{e^\epsilon + 1}\big)^2$ increases with $\epsilon$, the regret decreases with increasing $\epsilon$. 
This is in-line with the regret of the stochastic bandit algorithms providing global DP given by \citet[Theorem 4 and 8]{DBLP:conf/uai/MishraT15} which have a multiplicative factor of O($\epsilon^{-1})$ or O($\epsilon^{-2})$.
\citet[Corollary 3.2 and Theorem 3.5]{tossou:aaai2016} provided a regret bound for a stochastic bandit algorithm achieving global DP with an additive factor of  O($\epsilon^{-1}$). Our lower bound, given in Theorem~\ref{thm:LB}, shows that such an improvement is not expected for local differential privacy as parameters of the corruption mechanism are featured in the (asymptotic) multiplicative factor of $\log(T)$. It is also worthwhile to recall that local differential privacy comes at a higher price for the user : as local DP is a more stringent privacy notion than global DP, it is justifiable that the regret of the algorithms providing the latter is lower than that of the algorithms providing the former.

\section{Empirical Evaluation}
\label{sec:eval}
Before delving into the empirical evaluation, we first describe a naive algorithm called, \textsc{Wrapper} to be used as a baseline. This algorithm simply applies the appropriate inverse corruption function to the received feedback values and uses the result as a substitute for empirical reward. It then treats the corrupt bandit problem as a classical MAB problem and solves it using any classical MAB algorithm as a black-box. 
%Forgetting the fact that it may have no meaning to feed a MAB algorithm with unexpected reward values
It is easy to see that this naive algorithm won't work for the corruptions functions in which $\bE(g_{.}^{-1}(y)) \neq g_{.}^{-1} (\bE (y))$. Even while using linear corruption functions%such that  $\bE(g_{.}^{-1}(y)) = g_{.}^{-1} (\bE (y))$
, this algorithm gives worse performance than the algorithms provided in this article, as can be verified below. The inferior performance is because this naive algorithm doesn't take into account the variance of the sequence generated by applying inverse corruption functions to the received feedback values.  

We provide here the evaluation of the algorithms on a 10-armed Bernoulli corrupt bandit problem. The reward means of the arms were set as follows:
$$ \mu_1 = 0.9 \qquad \mu_2 = \mu_3  = \dots = \mu_{10} = 0.8 $$ %scenario 6 in experiments
Further experiments can be found in Appendix~\ref{sec:eval2}.
\subsection{Regret over a period of time}
\label{sec:expt1}
In this experiment, we aim to see the effect of time on the regret of \KLUCBCF \ and \TSCF. Randomized response was employed to corrupt the feedback and according to Eq. (\ref{eq:CorruptionMatrix}), $p_{00} = p_{11} = 0.6$ for the optimal arm, while for all the other arms, both $p_{00}$ and $p_{11}$ were set to $0.9$. The time horizon was varied to $10^5$ and each experiment was repeated 1000 times. As a baseline, we plot the regret curves for two instances of the \textsc{Wrapper} algorithm (denoted as WR) with \KLUCB \ and \TS \ used as the black-box subroutine respectively. To demonstrate the inability of the traditional MAB algorithms to solve the corrupt bandit problem,
we also include the regret curves for \KLUCB\ ,\UCB\ and \TS\ (treating feedback as reward). The regret curves for all the considered algorithms are given in Figure~\ref{Fig1A}. LB denotes the lower bound given by Theorem~\ref{thm:LB}. The performance superiority of the proposed algorithms for corrupt bandits is more pronounced as the time increases. 
\subsection{Regret with varying level of local differential privacy}
\label{sec:expt2}
In this experiment, we vary the local differential privacy parameter and examine the effect on the regret of \KLUCBCF \ and \TSCF. We chose $\epsilon$ from the set $\{\frac{1}{8}, \frac{1}{4}, \frac{1}{2}, 1, 2, 4, 8\}$. The corruption parameters are set by substituting the values of $\epsilon$ in Eq. ~(\ref{eq:DPmatrix}). The time horizon was fixed to $10^5$ and the experiment was repeated $1000$ times. The corresponding curves for average regret can be seen in Figure~\ref{Fig1B}. UB indicates the upper bound given by Corollary~\ref{DPUB_corollary}. The regret for both the algorithms decreases with increasing $\epsilon$. This behavior is expected since, lower the value of $\epsilon$, more stringent is the level of differential privacy.  Towards both the end points of the range ( $\epsilon < 1/4$ and $\epsilon > 4$ ), the regret tends to plateau as a change in $\epsilon$ causes an infinitesimal change in the required level of differential privacy.
\begin{comment}
The regret decreases rapidly initially as an increase in $\epsilon$ leads to massive drop in the imposed level of differential privacy and consequently the corruption parameters set by Equation (\ref{eq:DPmatrix}) are adjusted.
\end{comment}

\begin{figure}
\centering
\begin{subfigure}{.3\textwidth}
  \centering
  \includegraphics[width=\linewidth]{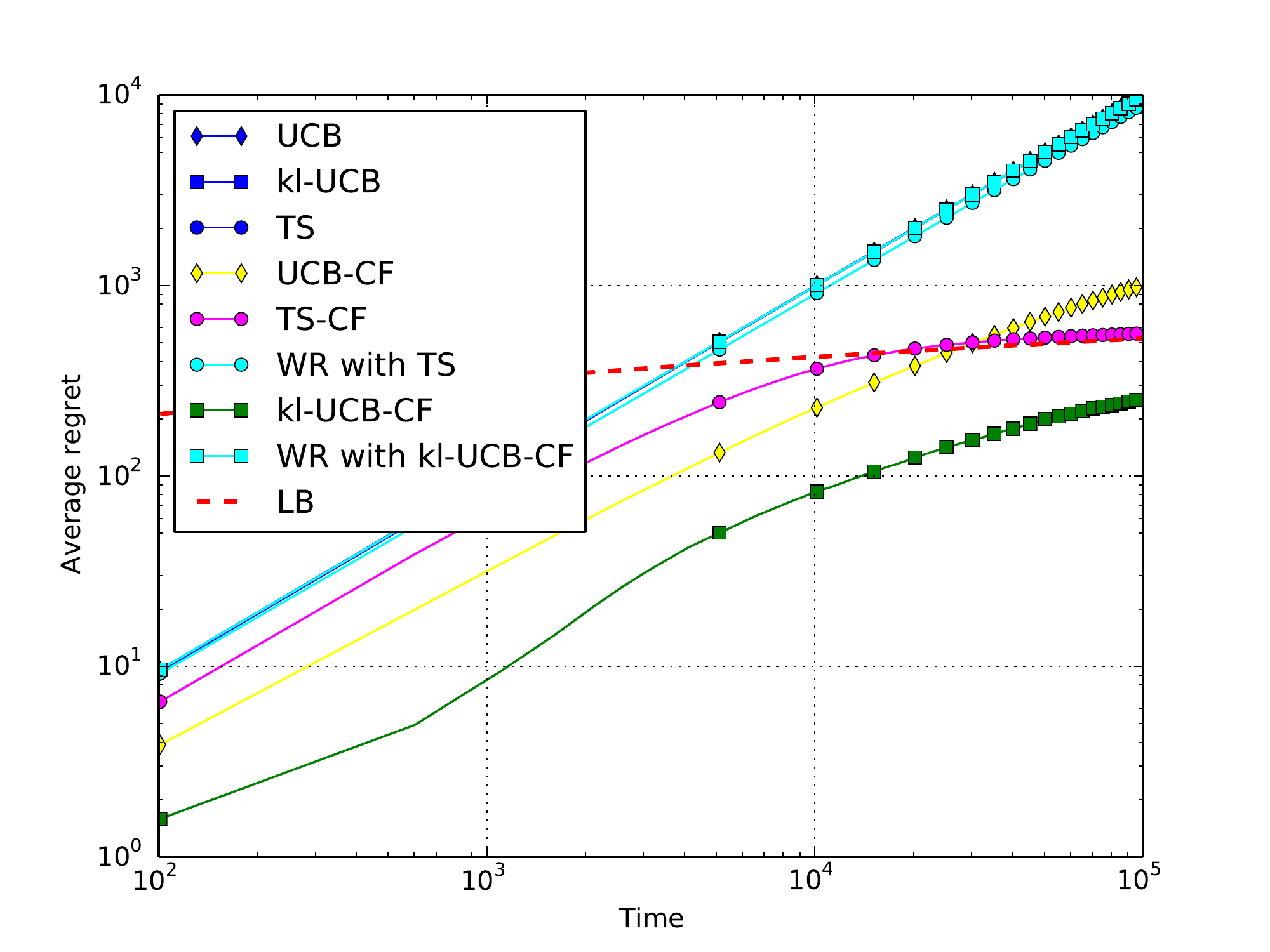}
  \caption{Regret plots for \KLUCBCF \ and \TSCF \ with others for varying  horizons up to $10^5$}
  \label{Fig1A}
\end{subfigure}%
\hspace{1em}
\begin{subfigure}{.3\textwidth}
  \centering
  \includegraphics[width=\linewidth]{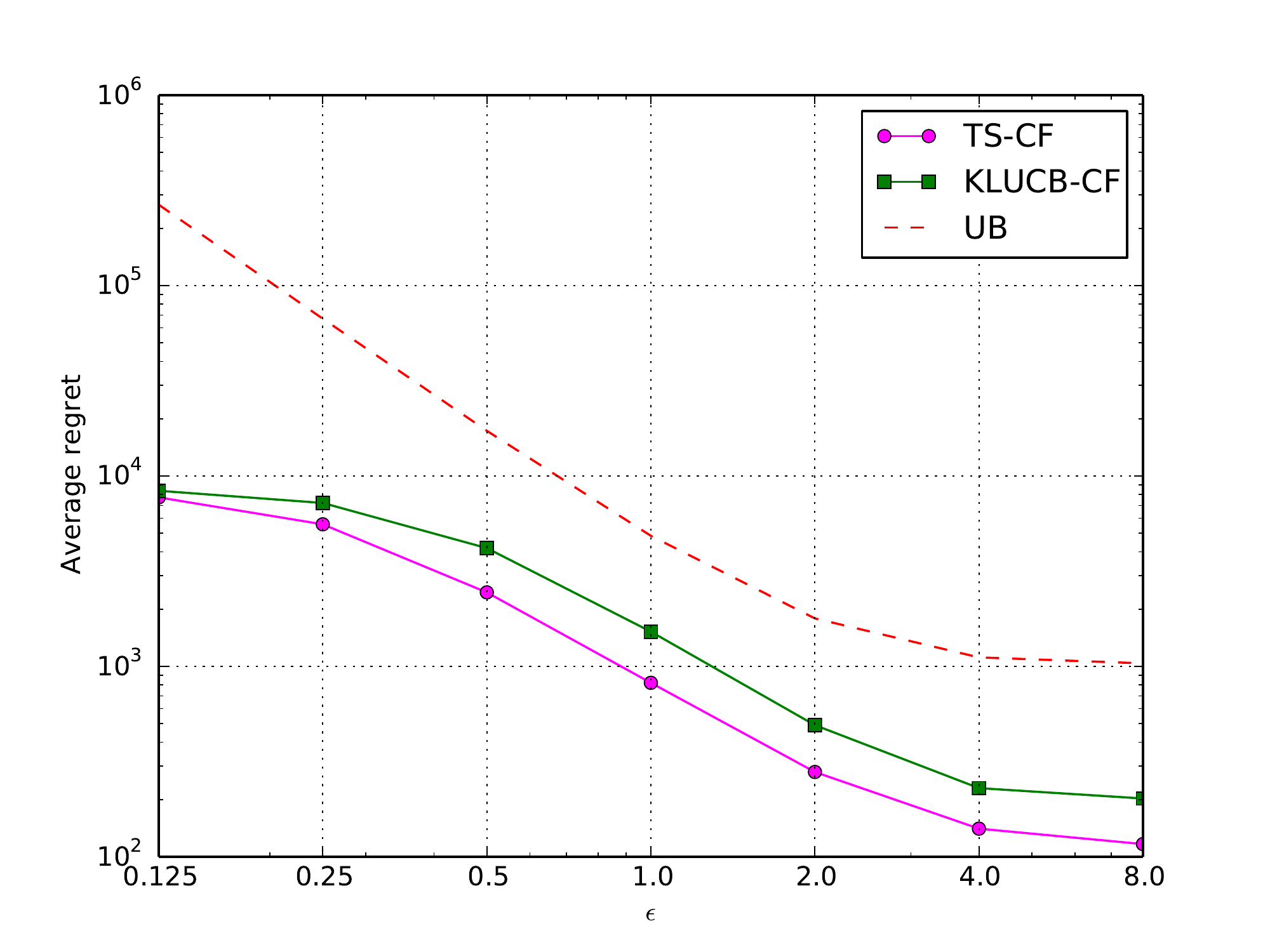}
  \caption{Regret plots for \KLUCBCF \ and \TSCF \ with $\epsilon$ = $\{\frac{1}{8}, \frac{1}{4}, \frac{1}{2}, 1, 2, 4, 8\}$, $T=10^5$}
  \label{Fig1B}
\end{subfigure}
\begin{subfigure}{.3\textwidth}
  \centering
  \includegraphics[width=\linewidth]{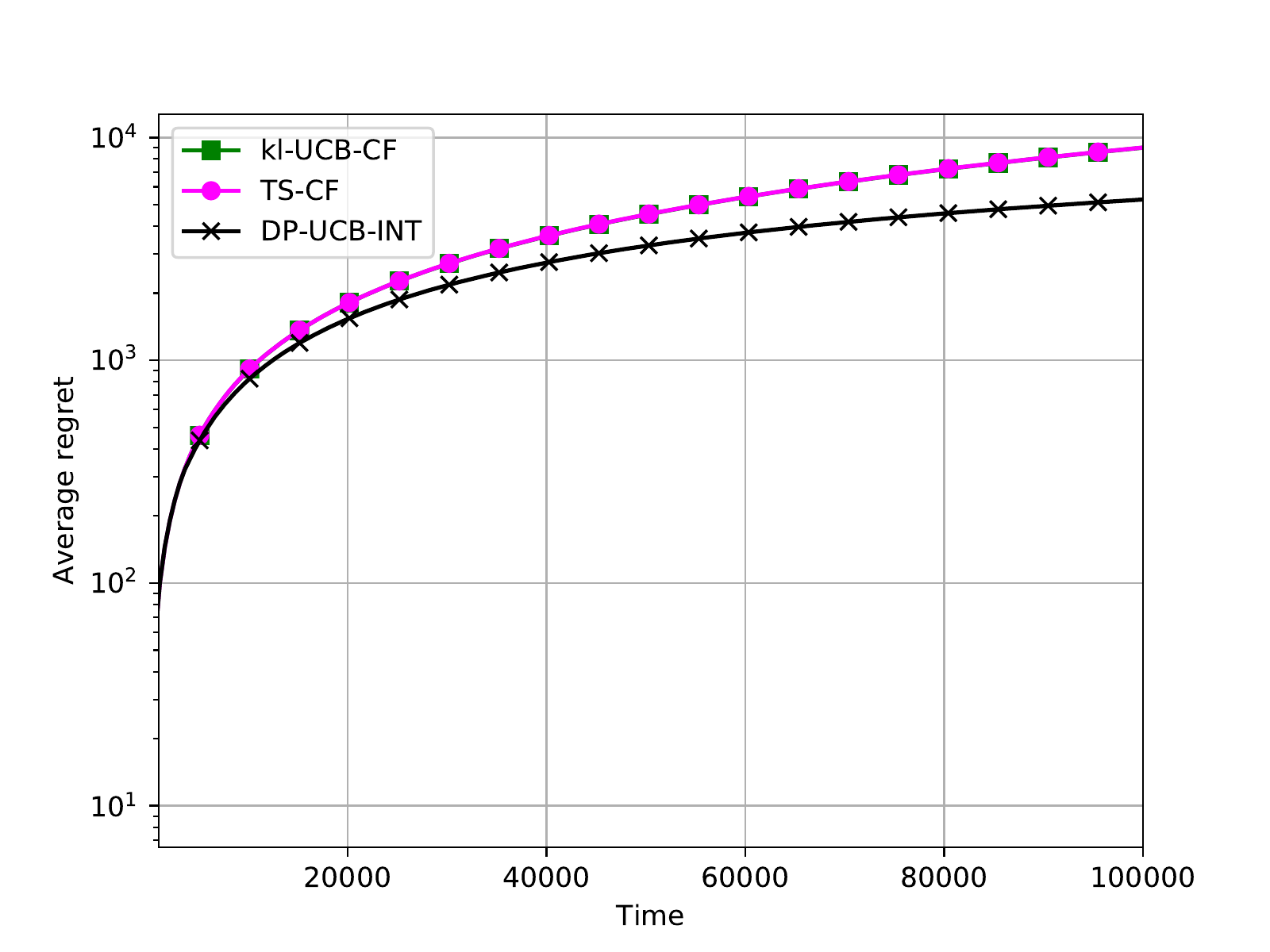}
  \caption{Regret plots for $\epsilon = 1$ for  \KLUCBCF \ and \TSCF \,  \textsc{DP-UCB-Int}}
  \label{Fig1C}
\end{subfigure}
\caption{Regret curves}
\label{Fig1}
\end{figure}

\subsection{Regret for local and global differential privacy}
\label{sec:expt3}
For comparison, we plot the regret of \KLUCBCF \ and \TSCF \ against the recent stochastic bandit algorithm for global DP, \textsc{DP-UCB-Int}, provided by \citet{tossou:aaai2016}. The comparison aims to convey how much utility, in terms of regret, is lost by opting for local DP instead of global DP. As already mentioned, lower regret for achieving global DP is to be expected as local DP is a much stronger notion of privacy than global DP. For \textsc{DP-UCB-Int}, we chose the same values of the algorithm parameters ($\delta = e^{-10}$ and $v=1.1$) as in the experiments given in \citet[Section 4]{tossou:aaai2016}. We provide the results for $\epsilon = 1$ in Figure \ref{Fig1C}.

\section{Elements of proofs} \label{sec:sketches}
We denote by $\hat{\lambda}_a(t)$ the empirical mean of the feedback obtained from arm $a$ until time $t$. Letting $F_{a,s}$ being the successive feedbacks of arm $a$ and $\hat{\lambda}_{a,s}:=\frac{1}{s}\sum_{\ell = 1}^s F_{a,\ell}$, one has $\hat{\lambda}_a(t) = \hat{\lambda}_{a,N_a(t)}$ when $N_a(t)>0$.

\subsection{Proof of Theorem~\ref{thm:LB}}
\label{sec:Proof_LB_Theorem}

To obtain a lower bound on the regret, we use a \textit{change-of-distribution} argument. Let $\boldsymbol{\nu} $ and $\boldsymbol{\nu'}$ be $K$-armed corrupted bandit models with different optimal arms i.e. $a_*(\boldsymbol{\nu}) \neq a_*(\boldsymbol{\nu'})$. For the ease of readability, let's assume without loss of generality that $a_*(\boldsymbol{\nu}) = 1$. 

The log-likelihood ratio of the observations up to time $T$ under $\boldsymbol{\nu}$ and $\boldsymbol{\nu}$, $L_T(\boldsymbol{\nu},\boldsymbol{\nu'})$, can be written 
\[L_T(\boldsymbol{\nu},\boldsymbol{\nu'}) =  \sum_{a = 1}^K \sum_{s \gets 1}^{N_a(T)} \log{  \frac{ f_{\lambda_a^{\boldsymbol{\nu}}} (F_{a,s})} { f_{\lambda_a^{\boldsymbol{\nu'}}}(F_{a,s}) } } \]
where $f_x(\cdot)$ denotes the Bernoulli density of mean $x$. Note that this likelihood ratio only features the \emph{feedback} distributions, from which we collect the observations. By Wald's lemma, $\bE_{\boldsymbol{\nu}}\left[L_T(\boldsymbol{\nu},\boldsymbol{\nu'})\right] = \sum_{a \gets1}^K\mathbb{E}_{\boldsymbol{\nu}} [N_a(T)] \cdot d(\lambda_a^{\boldsymbol{\nu}},\lambda_a^{\boldsymbol{\nu'}})$.

The following lemma can be extracted from \cite{garivier163}. 
\begin{myLemma}
\label{lem_change_dist} 
Let $\boldsymbol{\nu}$ and $\boldsymbol{\nu}'$ be two bandit models with $K$ arms and and $T \in \{0\} \cup \mathbb{N}$, then:
$$\sum_{a = 1}^{K} \mathbb{E}_{\boldsymbol{\nu}} [N_a(T)]\cdot{}KL(\lambda_a^{\boldsymbol{\nu}}, \lambda_a^{\boldsymbol{\nu}'}) \geq  d \left(\mathbb{E}_{\boldsymbol{\nu}}(Z), \mathbb{E}_{\boldsymbol{\nu}'}(Z)\right)$$
where $d(x,y) :=x\log(x/y)+(1-x)\log((1-x)/(1-y))$ is the binary relative entropy and $Z \in [0,1]$ is a random variable measurable from the past-observations filtration $(\mathcal{F}_T)$ 
\end{myLemma}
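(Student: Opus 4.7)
The plan is to deduce this inequality as a two-step consequence of the data-processing inequality for Kullback--Leibler divergence, combined with a Wald-type identity that rewrites the KL divergence between the two laws on $\mathcal{F}_T$ in terms of expected visit counts. The key object is $\mathrm{KL}(\mathbb{P}_{\boldsymbol{\nu}}|_{\mathcal{F}_T}, \mathbb{P}_{\boldsymbol{\nu}'}|_{\mathcal{F}_T})$, which sits naturally between the two sides of the lemma: it is $\geq$ the right-hand side by data processing, and $=$ the left-hand side by Wald.

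First I would compute the KL divergence of the two laws restricted to $\mathcal{F}_T$. Since the algorithm's choice $\hat{a}_t$ is $\mathcal{F}_{t-1}$-measurable and the algorithm is the same under both models, the arm-selection distributions cancel in the Radon--Nikodym derivative, and the log-likelihood ratio reduces to the predictable sum $L_T(\boldsymbol{\nu},\boldsymbol{\nu}') = \sum_{t=1}^T \log\bigl(f_{\lambda^{\boldsymbol{\nu}}_{\hat{a}_t}}(F_t)/f_{\lambda^{\boldsymbol{\nu}'}_{\hat{a}_t}}(F_t)\bigr)$ already displayed in the preamble. Taking expectation under $\mathbb{P}_{\boldsymbol{\nu}}$ and conditioning on $\mathcal{F}_{t-1}$ gives $\mathbb{E}_{\boldsymbol{\nu}}[\log(\cdot)\mid \mathcal{F}_{t-1}] = d(\lambda^{\boldsymbol{\nu}}_{\hat{a}_t},\lambda^{\boldsymbol{\nu}'}_{\hat{a}_t})$, and grouping terms by arm yields $\mathbb{E}_{\boldsymbol{\nu}}[L_T] = \sum_{a=1}^K \mathbb{E}_{\boldsymbol{\nu}}[N_a(T)]\cdot d(\lambda^{\boldsymbol{\nu}}_a,\lambda^{\boldsymbol{\nu}'}_a)$, i.e.\ the left-hand side of the lemma.

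Second, I would apply data processing twice. Because $Z$ is $\mathcal{F}_T$-measurable, it is a deterministic function of the observed history, so the pushforward inequality
\[\mathrm{KL}(\mathbb{P}_{\boldsymbol{\nu}}|_{\mathcal{F}_T}, \mathbb{P}_{\boldsymbol{\nu}'}|_{\mathcal{F}_T}) \;\geq\; \mathrm{KL}\bigl(\mathcal{L}_{\boldsymbol{\nu}}(Z),\mathcal{L}_{\boldsymbol{\nu}'}(Z)\bigr)\]
holds. Now consider the auxiliary randomization that, given $Z=z\in[0,1]$, outputs a Bernoulli$(z)$ sample $B$. Then under $\mathbb{P}_{\boldsymbol{\nu}}$ one has $B\sim\mathrm{Bern}(\mathbb{E}_{\boldsymbol{\nu}}Z)$, and likewise under $\mathbb{P}_{\boldsymbol{\nu}'}$; a second application of data processing (this time to the Markov kernel $z\mapsto\mathrm{Bern}(z)$) gives $\mathrm{KL}(\mathcal{L}_{\boldsymbol{\nu}}(Z),\mathcal{L}_{\boldsymbol{\nu}'}(Z)) \geq d(\mathbb{E}_{\boldsymbol{\nu}}Z,\mathbb{E}_{\boldsymbol{\nu}'}Z)$. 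Chaining the three (in)equalities finishes the proof.

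The main obstacle is the careful handling of Wald's identity in the adaptive setting: one must argue that the contribution of the arm-selection rule drops out of the Radon--Nikodym derivative (so only the feedback distributions contribute) and that the random counts $N_a(T)$ can be replaced by their expectations via a martingale/tower-rule argument rather than a naive Wald for i.i.d.\ sums. Once this bookkeeping is in place, the rest is a textbook double application of data processing.
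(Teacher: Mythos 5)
Your proposal is correct and follows essentially the same route as the paper's (cited) proof: the paper does not prove this lemma itself but extracts it from Garivier et al., whose argument is precisely your chain $\sum_{a}\mathbb{E}_{\boldsymbol{\nu}}[N_a(T)]\,d(\lambda_a^{\boldsymbol{\nu}},\lambda_a^{\boldsymbol{\nu}'})=\mathrm{KL}\bigl(\mathbb{P}_{\boldsymbol{\nu}}|_{\mathcal{F}_T},\mathbb{P}_{\boldsymbol{\nu}'}|_{\mathcal{F}_T}\bigr)\geq \mathrm{KL}\bigl(\mathcal{L}_{\boldsymbol{\nu}}(Z),\mathcal{L}_{\boldsymbol{\nu}'}(Z)\bigr)\geq d\bigl(\mathbb{E}_{\boldsymbol{\nu}}Z,\mathbb{E}_{\boldsymbol{\nu}'}Z\bigr)$ obtained from the tower-rule/Wald computation (cancellation of the common arm-selection kernel) followed by two applications of data processing, the second through the Bernoulli randomization $z\mapsto\mathrm{Bern}(z)$. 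The Wald-type identity you establish in your first step is moreover exactly the one the paper states immediately before invoking the lemma in its proof of Theorem~\ref{thm:LB}.
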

\noindent
Using Lemma \ref{lem_change_dist} with $Z \defined \frac{N_1(T)}{T}$, one obtains
\begin{equation}
\sum_{a = 1}^{K} \mathbb{E}_\nu(N_a(T)) \cdot d\left(\lambda_a^\nu, \lambda_a^{\nu'}\right)  \geq d \Big( \frac{\mathbb{E}_\nu (N_1(T))}{T} ,  \frac{\mathbb{E}_{\nu'} (N_1(T))}{T} \Big)
\label{eq:LBLem1}
\end{equation}

\noindent
Using the inequality $d(p,q) \geq p \log ({1}/{q}) - \log(2)$
(see \cite{garivier163}) yields 
\begin{equation*}
  d \Big( \frac{\mathbb{E}_{\boldsymbol{\nu}} (N_1(T))}{T} ,  \frac{\mathbb{E}_{\boldsymbol{\nu'}} (N_1(T))}{T} \Big)  \geq \frac{\mathbb{E}_{\boldsymbol{\nu}} (N_1(T))}{T} \log \Big( \frac{T}{\mathbb{E}_{\boldsymbol{\nu'}} (N_1(T))} \Big) - \log(2)
\end{equation*}

\noindent
Since $a_*(\boldsymbol{\nu}) = 1$, and $a_*(\boldsymbol{\nu'}) \neq 1$, $\mathbb{E}_{\boldsymbol{\nu}} (N_1(T)) \sim T$ and $\mathbb{E}_{\boldsymbol{\nu'}} (N_1(T)) = o(T^\alpha)$ for all $\alpha \in ]0,1]$. Hence one can show that 
\[ \frac{\mathbb{E}_{\boldsymbol{\nu}} (N_1(T))}{T} \sim 1 \ \ \text{and} \ \ \ \log\left( \frac{T}{\mathbb{E}_{\nu'} [N_1(T)]}\right) \sim \log(T).
\]
Equation~\eqref{eq:LBLem1} yields 
\begin{equation}
\liminf_{T\rightarrow \infty}\frac{\sum_{a \gets1}^{K} \mathbb{E}_{\boldsymbol{\nu}} (N_a(T)) \cdot d\left(\lambda_a^{\boldsymbol{\nu}}, \lambda_a^{\boldsymbol{\nu'}}\right)}{\log{T}} \geq 1. 
\label{eq:LBME}
\end{equation}

\noindent
To obtain a lower bound on $\bE_{\boldsymbol{\nu}}[N_a(T)]$ for each $a \in \{2,\dots,K\}$, one can choose $\boldsymbol{\nu'}$ such that, for some $\epsilon > 0$,
\begin{equation*}
\mu^{\boldsymbol{\nu'}}_{b} = 
\begin{cases}
\mu^{\boldsymbol{\nu}}_{1} + \epsilon, & \text{if}\ b \ \IsEqual a \\
\mu^{\boldsymbol{\nu}}_{b} & \text{otherwise}
\end{cases}
\end{equation*}
This translates to the following change in feedback,
 \begin{equation*}
    \lambda^{\boldsymbol{\nu'}}_b = 
    \begin{cases}
       g_b(\mu^{\boldsymbol{\nu}}_{1} + \epsilon)   & \text{if}\ b \IsEqual a, \\
      g_b(\mu^{\boldsymbol{\nu}}_{b}) = \lambda_b^{\boldsymbol{\nu}} & \text{otherwise}.
    \end{cases}
  \end{equation*}
As $d\left(\lambda_b^{\boldsymbol{\nu}}, \lambda_b^{\boldsymbol{\nu'}}\right) = 0$ for $b\neq a$, using equation (\ref{eq:LBME}) we get
\[
\liminf_{T \to \infty} \frac{\mathbb{E}_{\boldsymbol{\nu}}(N_a(T))}{\log{T}} \geq \frac{1}{d\left(\lambda_a^{\boldsymbol{\nu}},g_a(\mu_1 + \epsilon)\right)}
\]
Letting $\epsilon$ go to zero for each $a \in \{2,\dots,K\}$ (and assuming $\{g_a\}_{a \in A }$ are continuous), one obtains,
\[
\liminf_{T\rightarrow \infty}\frac{\cRegret_T(\boldsymbol{\nu})}{\log(T)} \geq \sum_{a \gets2}^{K} \frac{\Delta_a^{\boldsymbol{\nu}}}{d\left( \lambda_a^{\boldsymbol{\nu}}, g_a(\mu_1^{\boldsymbol{\nu}})\right)}.
\]

\subsection{Proof outline for Theorem~\ref{UB_KLUCB_Theorem}}
\label{Sksec:Proof_UB_KLUCBCF_Theorem}

We defer the complete proof of Theorem~\ref{UB_KLUCB_Theorem} to Appendix~\ref{sec:Proof_UB_KLUCBCF_Theorem}. In this subsection, we describe the road-map for the proof. We arrive at a upper bound on the regret of \KLUCBCF \ by first bounding the number of times any suboptimal arm $a$ is pulled by the algorithm till horizon $T$, $\bE[N_a(T)]$. Recall that, at any time \KLUCBCF \ pulls an arm maximizing a index defined as
\begin{align*}
\mathrm{Index}_a(t) &\defined \max 
\left\{ q:\  N_a(t)\cdot{}d(\hat{\lambda}_a(t), g_a(q)) \leq f(t)\right\} \\ 
&= 
 \max g_a^{-1}\left( \{q: N_a(t)\cdot{}d(\hat{\lambda}_a(t), q) \leq f(t)\} \right)
\end{align*}
Figure \ref{fig:KLindices} depicts the computation process for the index in \KLUCBCF \ and how it differs from the index computed by \KLUCB \ to account for the presence of corruption.
\begin{figure}
\centering
\includegraphics[width=0.6\linewidth]{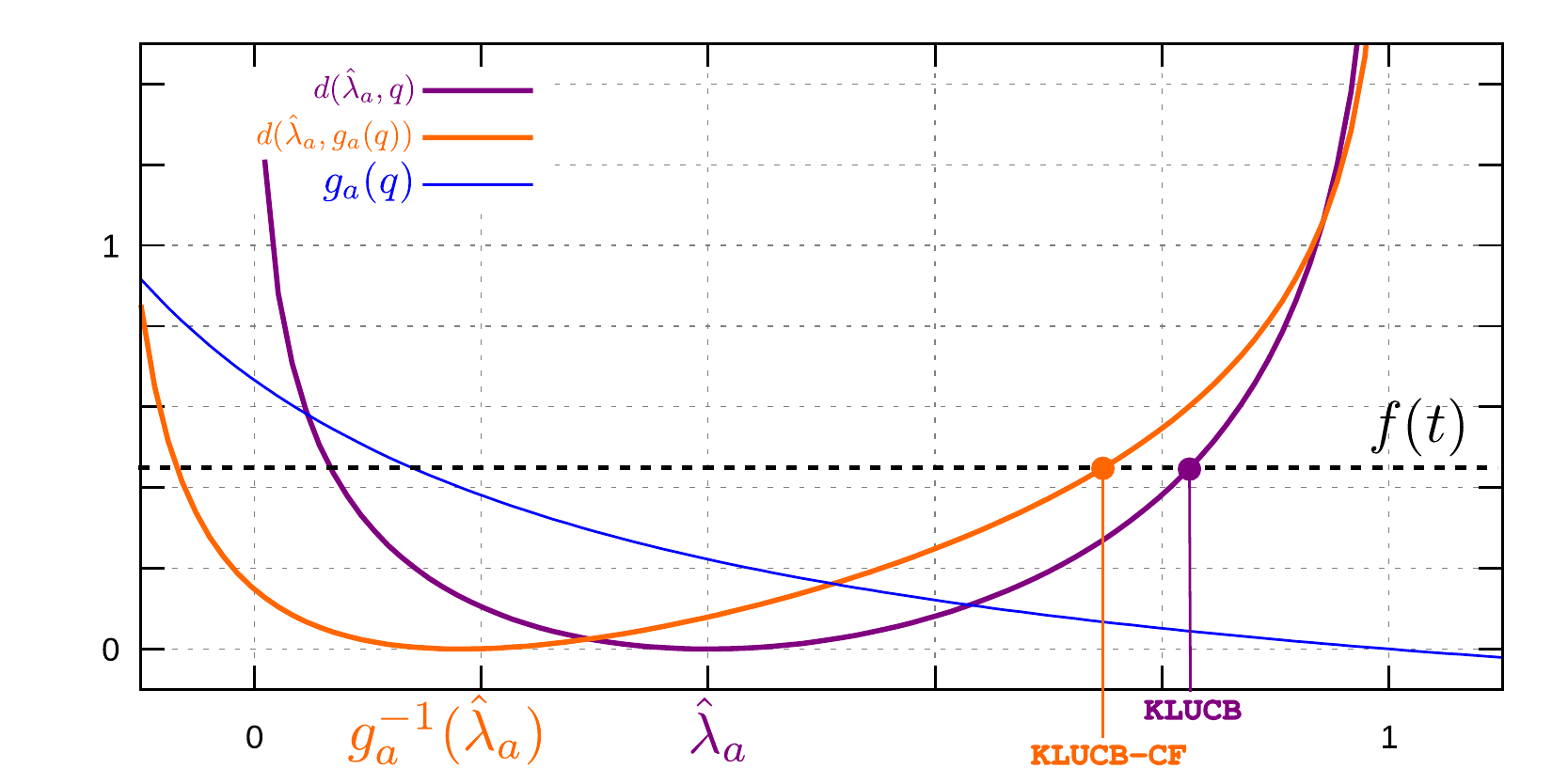}
\caption[width=.7\linewidth]{KL indices calculation.}
\label{fig:KLindices}
\end{figure}
\paragraph{}
For the purpose of this proof, we further decompose the index computation as follows:
$$\mathrm{Index}_a(t) \defined
    \begin{cases}
      g_a^{-1}({\ell_a(t)}) & \text{if } g_a \text{ is decreasing}, \\
      g_a^{-1}({u_a(t)}) & \text{if } g_a \text{ is increasing,}
    \end{cases}
$$
with
$$
\ell_a(t)   \defined   \min \{q: N_a(t)\cdot{}d(\hat{\lambda}_a(t), q) \leq f(t)\} \quad \text{and} \quad
u_a(t)  \defined   \max \{q: N_a(t)\cdot{}d(\hat{\lambda}_a(t), q) \leq f(t)\}.
$$
The interval $[\ell_a(t), u_a(t)]$ is a KL-based confidence interval on the mean feedback $\lambda_a$ of arm $a$. This is in contrast to the analysis of the \KLUCB\ algorithm given by \citet{KLUCBJournal} where a confidence interval is placed on the mean reward of arm $a$.

In our analysis, we use the fact that when arm $a$ is picked at time $t+1$ by \KLUCBCF , one of the following is true. Either the mean feedback of the optimal arm $1$ is outside its confidence interval (i.e. $g_1(\mu_1) < \ell_1(t)$ or $g_1(\mu_1) > u_1(t)$), which is unlikely, or the mean feedback of the optimal arm is where it should be, and then the fact that arm $a$ is selected indicates that the confidence interval on $\lambda_a$ cannot be too small as either $(u_a(t) \geq g_a(\mu_1))$ or $(\ell_a(t) \leq g_a(\mu_1))$. The previous statement follows from considering various cases depending on whether the corruption functions $g_a$ and $g_1$ are increasing or decreasing. 

We then need to control the two terms in the decomposition of the expected number of draws of arm $a$. The term regarding the ``unlikely" event, is easily bounded using the same technique as in the \KLUCB \ analysis, and is of order $o(\log(T))$. To control the second term, depending on the monotonicity of the corruption functions $g_a$ and $g_1$, we need to adapt the arguments in \cite{KLUCBJournal} to control the number of draws of arm $a$, as can be seen in Appendix~\ref{sec:Proof_UB_KLUCBCF_Theorem}.

\subsection{Proof outline for Theorem~\ref{UB_TSCF_Theorem}}
\label{Sksec:Proof_UB_TSCF_Theorem}

Our proof follows the analysis of \citet{DBLP:conf/aistats/AgrawalG13} for classical Thompson Sampling. We proceed by controlling the number of draws of each suboptimal arm $a$. For this purpose, we introduce two thresholds $\LThresholdTS_a$ and $\HThresholdTS_a$ that satisfy $ \lambda_a < \LThresholdTS_a < \HThresholdTS_a < g_a(\mu_1)$ if $g_a$ is increasing and $ \lambda_a > \LThresholdTS_a > \HThresholdTS_a > g_a(\mu_1)$ if $g_a$ is decreasing. We introduce $E_a^\lambda(t)$ as the event $\{g_a^{-1} (\hat{\lambda}_a(t)) \leq g_a^{-1} (\LThresholdTS_a)\}$ and $E_a^\theta(t)$ as the event $\{g_a^{-1} (\theta_a(t)) \leq g_a^{-1} (\HThresholdTS_a)\}$. We then upper bound $\bE[N_a(T)]$ by the sum of the three terms as,

\vspace{-0.6cm}

\[
\sum_{t=0}^{T-1} \bP(\ArmAt_{t+1}=a, E_a^\lambda(t), \overline{E_a^\theta(t)}) + \sum_{t=0}^{T-1} \bP(\ArmAt_{t+1}=a, E_a^\lambda(t), E_a^\theta(t)) + \sum_{t=0}^{T-1} \bP(\ArmAt_{t+1}=a, \overline{E_a^\lambda(t)}).
\]

\vspace{-0.3cm}

Using arguments similar to \cite{DBLP:conf/aistats/AgrawalG13}, with some adaptations, we then show that the last two terms are of order $o(\log(T))$. To control the first term, we prove the following, which requires some extra technicalities compared to the original proof, as shall be seen in  Appendix~\ref{sec:Proof_UB_TSCF_Theorem}, where the full proof of Theorem~\ref{UB_TSCF_Theorem} is given. 
\begin{restatable}{myLemma}{TSLMC} When $g_a$ is increasing (resp. decreasing), for any $\LThresholdTS_a' \in (\LThresholdTS_a,\HThresholdTS_a)$ (resp. $(\HThresholdTS_a,\LThresholdTS_a)$),  

\vspace{-0.4cm}

\[\sum_{t \gets 0 }^{T-1}\bP\Big(\ArmAt_{t+1} \IsEqual a, \overline{E_a^\theta(t)}, E_a^\lambda(t)\Big) \leq \frac{\log(T)}{d(\LThresholdTS_a',\HThresholdTS_a)} + 1 \ \ \ \text{when } T \text{ is large enough}.\] 
\end{restatable}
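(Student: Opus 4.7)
My plan is to focus on the case in which $g_a$ is strictly increasing, the decreasing case being symmetric after reversing all inequalities (and using $\LThresholdTS_a' \in (\HThresholdTS_a, \LThresholdTS_a)$). Under this assumption, $E_a^\lambda(t)$ reduces to $\{\hat\lambda_a(t) \leq \LThresholdTS_a\}$ and $\overline{E_a^\theta(t)}$ to $\{\theta_a(t) > \HThresholdTS_a\}$, with the chain $\lambda_a < \LThresholdTS_a < \LThresholdTS_a' < \HThresholdTS_a < g_a(\mu_1)$. The first step is to reindex by the play count $k = N_a(t)$: writing $\tau_{k+1}$ for the time at which arm $a$ is pulled for the $(k{+}1)$-th time, both $\hat\lambda_a(t)$ and the Beta posterior on arm $a$ are frozen throughout each block $\{t : N_a(t) = k\}$, and $\ArmAt_{t+1} = a$ is nonzero on this block only at $t = \tau_{k+1}-1$. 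Hence the left-hand side of the lemma equals
\[ \sum_{k=0}^{T-1} \bP\bigl(\tau_{k+1} \leq T,\ \hat\lambda_{a,k} \leq \LThresholdTS_a,\ \theta_a(\tau_{k+1}-1) > \HThresholdTS_a\bigr). \]

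I would split this sum at the threshold $L := \lceil \log T / d(\LThresholdTS_a', \HThresholdTS_a)\rceil$. The part with $k < L$ is trivially bounded by the number of terms, $L \leq \log T / d(\LThresholdTS_a', \HThresholdTS_a) + 1$. For $k \geq L$, I would use a Chernoff tail for the Beta posterior: on $\{\hat\lambda_{a,k} \leq \LThresholdTS_a\}$ the posterior is $\mathrm{Beta}(k\hat\lambda_{a,k}+1, k(1-\hat\lambda_{a,k})+1)$, and the Beta--Binomial duality combined with Chernoff's inequality yields
\[ \bP(\theta_a > \HThresholdTS_a \mid \hat\lambda_{a,k}) = \bP(\mathrm{Bin}(k+1, \HThresholdTS_a) \leq k\hat\lambda_{a,k}) \leq \exp\!\bigl(-(k+1)\, d(\LThresholdTS_a, \HThresholdTS_a)\bigr), \]
since $k\hat\lambda_{a,k}/(k+1) \leq k\LThresholdTS_a/(k+1) < \LThresholdTS_a$ and $d(\cdot, \HThresholdTS_a)$ is decreasing on $[0, \HThresholdTS_a]$.

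Because $\tau_{k+1}-1$ is a stopping time, conditioning on pulling $a$ at that instant biases $\theta_a(\tau_{k+1}-1)$ upward and forbids a direct pointwise tail bound. I would resolve this by observing that during the block the posterior of arm $a$ is unchanged, so $\theta_a(\tau_k), \theta_a(\tau_k+1), \dots$ are IID draws from the same Beta distribution, and the block contains at most $T$ such draws. A plain union bound therefore yields
\[ \bP\bigl(\tau_{k+1} \leq T,\ \hat\lambda_{a,k} \leq \LThresholdTS_a,\ \theta_a(\tau_{k+1}-1) > \HThresholdTS_a\bigr) \leq T \exp\!\bigl(-(k+1)\, d(\LThresholdTS_a, \HThresholdTS_a)\bigr). \]
Summing this for $k \geq L$ gives a geometric tail of order $T \exp(-L\, d(\LThresholdTS_a, \HThresholdTS_a)) = O\!\bigl(T^{1 - d(\LThresholdTS_a,\HThresholdTS_a)/d(\LThresholdTS_a',\HThresholdTS_a)}\bigr)$, whose exponent is strictly negative since $\LThresholdTS_a < \LThresholdTS_a'$ forces $d(\LThresholdTS_a, \HThresholdTS_a) > d(\LThresholdTS_a', \HThresholdTS_a)$. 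Hence the tail vanishes as $T \to \infty$ and is at most $1$ for $T$ large enough, which combined with the $k < L$ estimate delivers the claimed bound.

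The main obstacle is precisely this stopping-time interaction: the sample $\theta_a(\tau_{k+1}-1)$ is read at a data-dependent instant, where pointwise Beta concentration does not apply verbatim. The union-bound workaround costs a factor $T$ in each term of the tail, and it is the strict inequality $\LThresholdTS_a' > \LThresholdTS_a$ that buys the slack needed to absorb this factor into the exponential decay rate. Without that margin, the geometric tail would fail to be $o(1)$ and the compact additive ``$+1$'' of the statement could not accommodate it for arbitrary $T$.
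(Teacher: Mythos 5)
Your proposal is correct in substance but handles the key step by a genuinely different device than the paper. The paper only details the decreasing case (for increasing $g_a$ it simply invokes Lemma~3 of \citet{DBLP:conf/aistats/AgrawalG13}); there, it decomposes over $t$ and $s=N_a(t)$, uses the same Beta--Binomial identity and a Chernoff bound at $\LThresholdTS_a-1/s$ to push each conditional posterior tail below $1/T$ once $s\geq \log T/d(\LThresholdTS_a',\HThresholdTS_a)$, and avoids any union-bound factor by noting that $\sum_t \ind_{(\ArmAt_{t+1}=a,\,N_a(t)=s)}\leq 1$, so the surviving terms contribute at most $1$ in total. You instead reindex by the play count and control the sample read at the data-dependent instant $\tau_{k+1}-1$ by a union bound over the at most $T$ i.i.d.\ Beta draws within the block, paying a factor $T$ that you then absorb through the strict margin $d(\LThresholdTS_a,\HThresholdTS_a)>d(\LThresholdTS_a',\HThresholdTS_a)$; this is a more explicit treatment of the selection-bias subtlety that the paper's conditional-probability factorization glosses over, at the cost of needing that margin, which is exactly what $\LThresholdTS_a<\LThresholdTS_a'$ buys. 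Your reduction of the decreasing case to the increasing one via the reflection $x\mapsto 1-x$ is legitimate (the paper uses the same reflection in its other lemmas), and your exponent $d(\LThresholdTS_a,\HThresholdTS_a)$ is in fact slightly stronger than the paper's $d(\LThresholdTS_a',\HThresholdTS_a)$ there. One bookkeeping remark: as written your two pieces sum to $\log T/d(\LThresholdTS_a',\HThresholdTS_a)+1+o(1)$ (or $+2$ if the tail is only bounded by $1$), not literally the stated $+1$; this is immaterial for Theorem~\ref{UB_TSCF_Theorem} and is repaired by splitting instead at $\log T/d(\LThresholdTS_a'',\HThresholdTS_a)$ for some $\LThresholdTS_a''\in(\LThresholdTS_a,\LThresholdTS_a')$, whose growing margin swallows both the ceiling and the vanishing tail.
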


\section{Conclusion}
\label{sec:conclusion}
Both the algorithms introduced in this article, \KLUCBCF \ and \TSCF \ provide suitable solutions to the MAB problem with corrupted feedback, as they are proved to asymptotically attain the best possible (problem-dependent) regret. Our experiments confirm the theoretical analysis by demonstrating the superior performance of \KLUCBCF \ and \TSCF{}. Furthermore, we exhibit appropriate corruption matrices that achieve a desired level of local differential privacy, and quantify their impact on the regret. These algorithms are thus good candidates to be used in recommender systems which apply a randomized response mechanism to protect the user privacy.

This work can be extended in many ways. In our setting, although the feedback is corrupted, it is available at all times. In some situations however, the feedback is simply lost. As future work, we plan to extend our problem setting to incorporate such scenarios by making appropriate changes to the corruption process. 
An adversarial corruption of the feedback can be considered too. Another possible extension is to incorporate contextual information in the learning process. 
We conjecture that the invertibility condition on the corruption functions can be relaxed for \KLUCBCF{} as long as $\lambda_a \neq g_a(\mu_1)$ for all suboptimal arms but it remains to be proven.  

%It is also possible to extend present partial monitoring algorithms to deal with this problem setting.

%\acks{We would like to acknowledge }

\newpage

\bibliographystyle{plainnat}
\small
\bibliography{references}
\clearpage
% Manual newpage inserted to improve layout of sample file - not
% needed in general before appendices/bibliography.
\appendix
\renewcommand{\thesection}{\Roman{section}}
\renewcommand\thesubsection{\Alph{subsection}}

\normalsize

\paragraph{Notations.} For the proofs, we recall that $\hat{\lambda}_a(t)$ is the empirical mean of the feedback obtained from arm $a$ until time $t$. Letting $F_{a,s}$ being the successive feedbacks of arm $a$ and $\hat{\lambda}_{a,s}:=\frac{1}{s}\sum_{\ell = 1}^s F_{a,\ell}$, one has $\hat{\lambda}_a(t) = \hat{\lambda}_{a,N_a(t)}$ when $N_a(t)>0$.

\section{Proof for Theorem~\ref{UB_KLUCB_Theorem}}
\label{sec:Proof_UB_KLUCBCF_Theorem}
\begin{proof}
The index is defined by
$$\mathrm{Index}_a(t) \defined \max 
\left\{ q:\  N_a(t)\cdot{}d(\hat{\lambda}_a(t), g_a(q)) \leq f(t)\right\} = 
 \max g_a^{-1}\left( \{q: N_a(t)\cdot{}d(\hat{\lambda}_a(t), q) \leq f(t)\} \right)
$$
For the purpose of this proof, we further decompose the computation of index as follows,
$$\mathrm{Index}_a(t) \defined
    \begin{cases}
      g_a^{-1}({\ell_a(t)}) & \text{if } g_a \text{ is decreasing}, \\
      g_a^{-1}({u_a(t)}) & \text{if } g_a \text{ is increasing}
    \end{cases}
$$
where,
$$
\ell_a(t)  \defined  \min \{q: N_a(t)\cdot{}d(\hat{\lambda}_a(t), q) \leq f(t)\} \text{ and }
u_a(t)  \defined  \max \{q: N_a(t)\cdot{}d(\hat{\lambda}_a(t), q) \leq f(t)\}
$$

To get an upper bound on the regret of this algorithm, we first bound $\mathbb{E}[N_a(t)]$ for all the non-optimal arms $a$. Note that, we assume $1$ to be the optimal arm. 
\begin{align*}
\mathbb{E}(N_a(T)) &= 1 + \sum_{t \gets K}^{T-1}\mathbb{P}(\ArmAt_{t+1} \IsEqual a)\end{align*}
Depending upon if $g_a$ and $g_1$ are increasing or decreasing there are four possible sub-cases:
\begin{itemize}
\item Both $g_1$ and $g_a$ are increasing.
\begin{align*}
&(\ArmAt_{t+1} \IsEqual a) \\
&\subseteq (u_1(t) < g_1(\mu_1)) \cup (\ArmAt_{t+1} \IsEqual a, u_1(t) \geq g_1(\mu_1)) \\
&= (u_1(t) < g_1(\mu_1)) \cup (\ArmAt_{t+1} \IsEqual a, g^{-1}_1(u_1(t)) \geq \mu_1) \qquad \text{since $g_1$ is increasing}\\
&= (u_1(t) < g_1(\mu_1)) \cup (\ArmAt_{t+1} \IsEqual a, g^{-1}_a(u_a(t)) %\geq \mu_1) \qquad \text{since $\ArmAt_{t+1} \IsEqual a$}\\
\geq \mu_1) \qquad \text{since $\operatorname{Index}_a>\operatorname{Index}_1$}\\
&= (u_1(t) < g_1(\mu_1)) \cup (\ArmAt_{t+1} \IsEqual a, u_a(t) \geq g_a(\mu_1)) \qquad \text{since $g_a$ is increasing}
\end{align*}
\begin{equation}
\label{exp_of_N_eq1}
\therefore \mathbb{E}(N_a(T)) \leq 1 + \sum_{t \gets K}^{T-1} \mathbb{P}(u_1(t) < g_1(\mu_1)) + \sum_{t \gets K}^{T-1} \mathbb{P}(\ArmAt_{t+1} \IsEqual a, u_a(t) \geq g_a(\mu_1))
\end{equation}
\item $g_1$ is decreasing and $g_a$ is increasing.
\begin{align*}
&(\ArmAt_{t+1} \IsEqual a) \\
&\subseteq (\ell_1(t) > g_1(\mu_1)) \cup (\ArmAt_{t+1} \IsEqual a, \ell_1(t) \leq g_1(\mu_1)) \\
&= (\ell_1(t) > g_1(\mu_1)) \cup (\ArmAt_{t+1} \IsEqual a, g_1^{-1}(\ell_1(t)) \geq \mu_1) \qquad \text{since $g_1$ is decreasing} \\
&= (\ell_1(t) > g_1(\mu_1)) \cup (\ArmAt_{t+1} \IsEqual a, g_a^{-1}(u_a(t)) \geq \mu_1) \qquad \text{since $\operatorname{Index}_a>\operatorname{Index}_1$}\\
&= (\ell_1(t) > g_1(\mu_1)) \cup (\ArmAt_{t+1} \IsEqual a, u_a(t) \geq g_a(\mu_1)) \qquad \text{since $g_a$ is increasing}
\end{align*}
\begin{equation}
\label{exp_of_N_eq2}
\therefore \mathbb{E}(N_a(T)) \leq 1 + \sum_{t \gets K}^{T-1} \mathbb{P}(\ell_1(t) > g_1(\mu_1))  +  \sum_{t \gets K}^{T-1} \mathbb{P}(\ArmAt_{t+1} \IsEqual a, u_a(t) \geq g_a(\mu_1)) 
\end{equation}
\item $g_1$ is increasing and $g_a$ is decreasing.
\begin{align*}
&(\ArmAt_{t+1} \IsEqual a) \\
&\subseteq (u_1(t) < g_1(\mu_1)) \cup (\ArmAt_{t+1} \IsEqual a, u_1(t) \geq g_1(\mu_1)) \\
&= (u_1(t) < g_1(\mu_1)) \cup (\ArmAt_{t+1} \IsEqual a, g_1^{-1}(u_1(t)) \geq \mu_1) \qquad \text{since $g_1$ is increasing}\\ 
&= (u_1(t) < g_1(\mu_1)) \cup (\ArmAt_{t+1} \IsEqual a, g_a^{-1}(\ell_a(t)) \geq \mu_1) \qquad \text{since $\operatorname{Index}_a>\operatorname{Index}_1$}\\ 
&= (u_1(t) < g_1(\mu_1)) \cup (\ArmAt_{t+1} \IsEqual a, \ell_a(t) \leq g_a(\mu_1)) \qquad \text{since $g_a$ is decreasing}
\end{align*}
\begin{equation}
\label{exp_of_N_eq3}
\therefore \mathbb{E}(N_a(T)) \leq 1 +  \sum_{t \gets K}^{T-1} \mathbb{P}(u_1(t) < g_1(\mu_1)) +  \sum_{t \gets K}^{T-1} \mathbb{P}(\ArmAt_{t+1} \IsEqual a, \ell_a(t) \leq g_a(\mu_1))
\end{equation}
\item $g_1$ is decreasing and $g_a$ is decreasing.
\begin{align*}
&(\ArmAt_{t+1} \IsEqual a) \\
&\subseteq (\ell_1(t) > g_1(\mu_1)) \cup (\ArmAt_{t+1} \IsEqual a, \ell_1(t) \leq g_1(\mu_1)) \\
&= (\ell_1(t) > g_1(\mu_1)) \cup (\ArmAt_{t+1} \IsEqual a, g_1^{-1}(\ell_1(t)) \geq \mu_1) \qquad \text{since $g_1$ is decreasing} \\
&= (\ell_1(t) > g_1(\mu_1)) \cup (\ArmAt_{t+1} \IsEqual a, g_a^{-1}(\ell_a(t)) \geq \mu_1) \qquad \text{since $\operatorname{Index}_a>\operatorname{Index}_1$} \\
&= (\ell_1(t) > g_1(\mu_1)) \cup (\ArmAt_{t+1} \IsEqual a, \ell_a(t) \leq g_a(\mu_1)) \qquad \text{since $g_a$ is decreasing}
\end{align*}
\begin{equation}
\label{exp_of_N_eq4}
\therefore \mathbb{E}(N_a(T)) \leq 1 + \sum_{t \gets K}^{T-1} \mathbb{P}(\ell_1(t) > g_1(\mu_1)) +  \sum_{t \gets K}^{T-1} \mathbb{P}(\ArmAt_{t+1} \IsEqual a, \ell_a(t) \leq g_a(\mu_1)) 
\end{equation}
\end{itemize}

We first upper bound the two sums 
\begin{equation}\sum_{t \gets K}^{T-1}\mathbb{P}(u_1(t) < g_1(\mu_1)) \ \ \text{and} \ \ \sum_{t \gets K}^{T-1}\mathbb{P}(\ell_1(t) > g_1(\mu_1))\label{eq:FirstTerms}\end{equation}
using that  $\ell_1(t)$ and $u_1(t)$ are respectively lower and upper confidence bound on $g_1(\mu_1)$. Indeed,
\begin{eqnarray*}
\bP(u_1(t) < g_1(\mu_1)) & \leq & \bP\left(g_1(\mu_1) > \hat{\lambda}_1(t) \text{ and } N_1(t)d(\hat{\lambda}_1(t), g_1(\mu_1)) \geq {f(t)}\right) \\
& \leq &\bP\left(\exists s \in \{1,\dots,t\} :  g_1(\mu_1) > \hat{\lambda}_{1,s} \text{ and } sd(\hat{\lambda}_{1,s}, g_1(\mu_1)) \geq {f(t)}\right) \\
& \leq & min\{ 1, e \lceil f(t) \log{t} \rceil e^{-f(t)} \},
\end{eqnarray*}
where the upper bound follows from Lemma 2 in \cite{KLUCBJournal}, and the fact that $\hat{\lambda}_{1,s}$ is the empirical mean of $s$ Bernoulli samples with mean $g_1(\mu_1)$.
Similarly, one has 
\begin{eqnarray*}
\bP(\ell_1(t) > g_1(\mu_1))  \leq  \min\{ 1, e \lceil f(t) \log{t} \rceil e^{-f(t)} \}.
\end{eqnarray*}
As $f(t) \defined \log{t} + 3(\log{\log{t}})$ for $t\geq3$,
\[e \lceil f(t) \log{t} \rceil \leq 4e \log^2{t},\] the two quantities in \eqref{eq:FirstTerms} can be upper bounded by 
\begin{align*}
1 + \sum_{t \gets3}^{T-1} e \lceil f(t) \log{t} \rceil e^{-f(t)} & \leq 
 1 + \sum_{t \gets3}^{T-1} 4e \cdot \log^2{t} \cdot e^{-f(t)} \nonumber \\
&= 1 + 4e\sum_{t \gets3}^{T-1} \frac{1}{t \log{t}} \nonumber \\
&\leq 4e\Big ( \frac{1}{3 \log{3}} + \int_{3}^{T-1} \frac{1}{t\log{t}} dt \Big ) \nonumber \\
&\leq 4e \Big(\frac{1}{3 \log{3}} + \log{(\log{(T-1)})} - \log{(\log{3})} \Big) \nonumber \\
&\leq 3 + 4e \log{(\log{T})}. 
\end{align*}
This proves that 
\begin{eqnarray}
\sum_{t \gets K}^{T-1}\mathbb{P}(u_1(t) < g_1(\mu_1)) &\leq & 3 + 4e \log{(\log{T})} 
\in o(\log{T}) \label{term1a} \\
\sum_{t \gets K}^{T-1}\mathbb{P}(\ell_1(t) > g_1(\mu_1)) &\leq & 3 + 4e \log{(\log{T})} \in o(\log{T})\label{term1b}
\end{eqnarray}

We now turn our attention to the other two sums involved in the upper bound we gave for $\mathbb{E}(N_a(t))$. We introduce the notation $d^+(x,y) = d(x,y)\ind_{(x<y)}$ and $d^-(x,y) = d(x,y)\ind_{(x>y)}$. So we can write, when $g_a$ is increasing,
\begin{align*}
 & \sum_{t \gets K}^{T-1} \mathbb{P}(\ArmAt_{t+1} \IsEqual a, u_a(t) \geq g_a(\mu_1)) \\
 &= \bE\left[\sum_{t \gets K}^{T-1} \ind_{\ArmAt_{t+1} \IsEqual a} \ind_{N_a(t) \cdot d^+(\hat{\lambda}_{i,N_a(t)},g_a(\mu_1)) \leq f(t)}\right]  \\
 &\leq \bE\left[ \sum_{t \gets K}^{T-1} \sum_{s \gets 1}^{t} \ind_{\ArmAt_{t+1} \IsEqual a} \ind_{N_a(t) \IsEqual s} \ind_{s \cdot  d^+(\hat{\lambda}_{a,s},g_a(\mu_1)) \leq f(T)} \right] \\
 &= \bE\Big[\sum_{s \gets 1}^{T-1} \ind_{s \cdot  d^+(\hat{\lambda}_{a,s},g_a(\mu_1)) \leq f(T)} \underbrace{\sum_{s \gets 1}^{T-1} \ind_{\ArmAt_{t+1} \IsEqual a} \ind_{N_a(t) \IsEqual s }}_{\leq 1} \Big]. \end{align*}
One obtains, when $g_a$ is increasing, 
\begin{equation} \sum_{t \gets K}^{T-1} \mathbb{P}(\ArmAt_{t+1} \IsEqual a, u_a(t) \geq g_a(\mu_1))
\leq \sum_{s \gets 1}^{T-1} \bP\left( s \cdot d^+(\hat{\lambda}_{a,s},g_a(\mu_1)) \leq f(T)  \right).\label{eq:SecondTermIncreasing}\end{equation}
Using similar arguments, one can show that when $g_a$ is decreasing,
\begin{equation} \sum_{t \gets K}^{T-1} \mathbb{P}(\ArmAt_{t+1} \IsEqual a, \ell_a(t) \leq g_a(\mu_1))
\leq \sum_{s \gets 1}^{T-1} \bP\left( s \cdot d^-(\hat{\lambda}_{a,s},g_a(\mu_1)) \leq f(T)  \right).\label{eq:SecondTermDecreasing}\end{equation}
The quantity in the right-hand side of \eqref{eq:SecondTermIncreasing} is upper bounded in Appendix A.2. of \cite{KLUCBJournal} by 
\begin{equation} \frac{f(T)}{d(\lambda_a,g_a(\mu_1)}
+ \sqrt{2\pi}\sqrt{\frac{(d'(\lambda_a,g_a(\mu_1)))^2}{(d(\lambda_a,g_a(\mu_1)))^3}}\sqrt{f(T)} + 2 \left(\frac{d'(\lambda_a,g_a(\mu_1))}{d(\lambda_a,g_a(\mu_1))}\right)^{2} +1. \label{eq:SecondTermIncreasingFinal}\end{equation}
For the second term, noting that $d^-(x,y) = d^+(1-x,1-y)$, one has 
\begin{eqnarray*}\bP\left( s \cdot d^-(\hat{\lambda}_{a,s},g_a(\mu_1)) \leq f(T)  \right) &=& \bP\left( s \cdot d^+(1-\hat{\lambda}_{a,s},1-g_a(\mu_1)) \leq f(T)  \right)\\
&=& \bP\left( s \cdot d^+(\hat{\mu}_{a,s},1-g_a(\mu_1)) \leq f(T)  \right),\end{eqnarray*}
where $\hat{\mu}_{a,s} \defined 1-\hat{\lambda}_{a,s}$, is the empirical mean of $s$ observations of a Bernoulli random variable with mean $1-\lambda_a < 1 - g_a(\mu_1)$. Hence, the analysis of \cite{KLUCBJournal} can be applied, and using that $d(1-\lambda_a,1-g_a(\mu_1))=d(\lambda_a,g_a(\mu_1))$ and $d'(1-\lambda_a,1-g_a(\mu_1))=-d'(\lambda_a,g_a(\mu_1))$, the left hand side of \eqref{eq:SecondTermDecreasing} can also be upper bound by \eqref{eq:SecondTermIncreasingFinal}.

Combining inequalities \eqref{term1a}, \eqref{term1b} and \eqref{eq:SecondTermIncreasing},\eqref{eq:SecondTermDecreasing}, \eqref{eq:SecondTermIncreasingFinal} with the initial decomposition of $\bE[N_a(T)]$ yield in all cases 
\begin{eqnarray*}
\bE[N_a(T)] & \leq & \frac{\log(T)}{d(\lambda_a,g_a(\mu_1))} + \sqrt{2\pi}\sqrt{\frac{d'(\lambda_a,g_a(\mu_1))^2}{d(\lambda_a,g_a(\mu_1))^3}}\sqrt{\log(T) + 3\log\log(T)} \\
& & + \left(4e  + \frac{3}{d(\lambda_a,g_a(\mu_1)}\right)\log\log(T) + 2\left(\frac{d'(\lambda_a,g_a(\mu_1)}{d(\lambda_a,g_a(\mu_1)}\right)^2 + 4.
\end{eqnarray*}
Hence the regret of \KLUCBCF \ is upper bounded by 
$$ \sum_{a \gets2}^{K} \Delta_a \Big[ \frac{\log(T)}{D_a} + \sqrt{2\pi}\sqrt{\frac{(D'_a)^2}{D_a^3}}\sqrt{\log(T) + 3\log\log(T)}
 + \left(4e  + \frac{3}{D_a}\right)\log\log(T) + 2\left(\frac{D'_a}{D_a}\right)^2 + 4 \Big] $$ 
where $D_a \defined d(\lambda_a,g_a(\mu_1))$ and $D'_a \defined d'(\lambda_a,g_a(\mu_1))$, which concludes the proof.
\end{proof}

\section{Proof of Theorem~\ref{UB_TSCF_Theorem}}
\label{sec:Proof_UB_TSCF_Theorem}
\begin{proof}
Assume $1$ to be the optimal arm. For each arm non-optimal arm $a$, choose two thresholds $\LThresholdTS_a$ and $\HThresholdTS_a$ such that $ \lambda_a < \LThresholdTS_a < \HThresholdTS_a < g_a(\mu_1)$ if $g_a$ is increasing and $ \lambda_a > \LThresholdTS_a > \HThresholdTS_a > g_a(\mu_1)$ if $g_a$ is decreasing. Define $E_a^\lambda(t)$ as the event $\{g_a^{-1} (\hat{\lambda}_a(t)) \leq g_a^{-1} (\LThresholdTS_a)\}$ and $E_a^\theta(t)$ as the event $\{g_a^{-1} (\theta_a(t)) \leq g_a^{-1} (\HThresholdTS_a)\}$. Define $\mathcal{F}_{t}$ as the history of arm selections and received feedbacks including time $t$ and recall that \TSCF \ selects the arm as follows, $$\ArmAt_{t+1} \IsEqual \argmax_{a} \ \theta_a(t)$$,
where $\theta_a(t)$ is a sample from the posterior distribution on arm $a$ after $t$ observations. Define $p_{a,t} := \bP(g_1^{-1} (\theta_1(t)) > g_a^{-1}(\HThresholdTS_a) \given \mathcal{F}_{t})$. 

We start from the following decomposition.
\begin{align*}
\bE[N_a(T)] &= \sum_{t \gets 0 }^{T-1} \bP(\ArmAt_{t+1} \IsEqual a, E_a^\lambda(t), E_a^\theta(t)) + \sum_{t \gets 0 }^{T-1} \bP(\ArmAt_{t+1} \IsEqual a, E_a^\lambda(t), \overline{E_a^\theta(t)}) \\
&+ \sum_{t \gets 0 }^{T-1} \bP(\ArmAt_{t+1} \IsEqual a, \overline{E_a^\lambda(t)}) 
\end{align*}
Below are the lemmas that permit us to bound these three terms. These results generalize to the corrupted setting the main steps of the analysis of Thompson Sampling by \citet{DBLP:conf/aistats/AgrawalG13}. The proofs for these lemmas follow that of the corresponding lemmas in the aforementioned article, with some technicalities that arise from the fact that $g_1$ and $g_a$ may be either increasing or decreasing.

%\TSLMA*
\begin{restatable}{myLemma}{TSLMA}
\label{TSCF-lemma1}
$
\bP(\ArmAt_{t+1} \IsEqual a, E_a^\theta(t), E_a^\lambda(t) \given \mathcal{F}_{t}) \leq \frac{(1 - p_{a,t})}{p_{a,t}} \bP(\ArmAt_{t+1} \IsEqual 1, E_a^\theta(t), E_a^\lambda(t) \given \mathcal{F}_{t})
$
\end{restatable}
\begin{proof}
Assume that $E_a^\lambda(t)$ is true (otherwise the lemma holds trivially because the left hand size is $0$). Hence, it is sufficient to prove that,
\begin{equation}
\label{eq:SufStmt}
\mathbb{P}(\ArmAt_{t+1} \IsEqual a \given E_a^\theta(t), \mathcal{F}_{t}) \leq \frac{(1 - p_{a,t})}{p_{a,t}} \mathbb{P}(\ArmAt_{t+1} \IsEqual 1\given E_a^\theta(t), \mathcal{F}_{t}))
\end{equation}

Define $M_a(t)$  the event in which the index of arm $a$ at time $t$ is the largest among those of all suboptimal arms: $M_a(t) := \left\{g_a^{-1}(\theta_a(t)) \geq g_j^{-1}(\theta_j(t)), \forall j \neq 1\right\}$.

\begin{align}
&\mathbb{P}(\ArmAt_{t+1} \IsEqual 1 \given E_a^\theta(t), \mathcal{F}_{t})) \nonumber \\
&\geq \mathbb{P}(\ArmAt_{t+1} \IsEqual 1, M_a(t) \given E_a^\theta(t), \mathcal{F}_{t})) \nonumber \\
&= \mathbb{P}(M_a(t) \given E_a^\theta(t), \mathcal{F}_{t}))  \cdot \mathbb{P}(\ArmAt_{t+1} \IsEqual 1 \given M_a(t), E_a^\theta(t), \mathcal{F}_{t})) \label{prob_At1_a}
\end{align}
Now, given $M_a(t)$ and $E_a^\theta(t)$ hold,
$$ g_j^{-1}(\theta_j(t)) \leq g_a^{-1}(\theta_a(t)) \leq g_a^{-1} (\HThresholdTS_a) \quad \forall j \neq a, j \neq 1 $$ 
So,
\begin{align}
 \mathbb{P}(\ArmAt_{t+1} \IsEqual 1 \given M_a(t), E_a^\theta(t), \mathcal{F}_{t}) &\geq \mathbb{P}( g_1^{-1} (\theta_1(t)) > g_a^{-1}(\HThresholdTS_a) \given M_a(t), E_a^\theta(t), \mathcal{F}_{t}) \nonumber \\
&= \mathbb{P}( g_1^{-1} (\theta_1(t)) > g_a^{-1}(\HThresholdTS_a) \given \mathcal{F}_{t} ) \nonumber \\
&= p_{a,t} \label{prob_At1_b} 
\end{align}
From inequalities (\ref{prob_At1_a}) and (\ref{prob_At1_b}),
\begin{equation}
\mathbb{P}(\ArmAt_{t+1} \IsEqual 1 \given E_a^\theta(t), \mathcal{F}_{t}) 
\geq p_{a,t} \cdot \mathbb{P}(M_a(t) \given E_a^\theta(t), \mathcal{F}_{t})  \label{prob_At1}
\end{equation}
Now, let's consider the left hand side of the inequality (\ref{eq:SufStmt}). The fact that  $E_a^\theta(t)$ holds and $\ArmAt_{t+1} \IsEqual a$ implies that $ g_1^{-1} (\theta_1(t)) < g_a^{-1}(\theta_a(t)) < g_a^{-1} (\HThresholdTS_a)$. Hence
\begin{align}
& \mathbb{P}(\ArmAt_{t+1} \IsEqual a \given E_a^\theta(t), \mathcal{F}_{t}) \nonumber \\
&\leq \mathbb{P}\Big(g_1^{-1} (\theta_1(t)) \leq g_a^{-1} (\HThresholdTS_a),  g_a^{-1}(\theta_a(t)) \geq g_j^{-1}(\theta_j(t)), \forall j \neq 1 \given E_a^\theta(t), \mathcal{F}_{t} \Big) \nonumber \\
&= \mathbb{P}\Big(g_1^{-1} (\theta_1(t)) \leq g_a^{-1} (\HThresholdTS_a) \given \mathcal{F}_{t-1} \Big) \cdot \mathbb{P} \Big( g_a^{-1}(\theta_a(t)) \geq g_j^{-1}(\theta_j(t)), \forall j \neq 1 \given E_a^\theta(t), \mathcal{F}_{t} \Big) \nonumber \\
&= (1 - p_{a,t}) \cdot \mathbb{P}(M_a(t) \given E_a^\theta(t), \mathcal{F}_{t}) \label{prob_Ata} 
\end{align}
From inequalities (\ref{prob_At1}) and (\ref{prob_Ata}),
$$ \mathbb{P}(\ArmAt_{t+1} \IsEqual a \given E_a^\theta(t), \mathcal{F}_{t}) \leq \frac{(1 - p_{a,t})}{p_{a,t}} \mathbb{P}(\ArmAt_{t+1} \IsEqual 1\given E_a^\theta(t), \mathcal{F}_{t}) $$ 
\end{proof}

%\TSLMC*
\begin{restatable}{myLemma}{TSLMC} When $g_a$ is increasing (resp. decreasing), for any $x_a' \in \ ]x_a,y_a[$ (resp. $]y_a,x_a[$), when $T$ is large enough, \label{TSCF-lemma3}$$\sum_{t \gets 0 }^{T-1}\bP\Big(\ArmAt_{t+1} \IsEqual a, \overline{E_a^\theta(t)}, E_a^\lambda(t)\Big) \leq \frac{\log(T)}{d(\LThresholdTS_a',\HThresholdTS_a)} + 1.$$ 
\end{restatable}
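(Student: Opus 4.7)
The plan is to adapt the Thompson Sampling analysis of \cite{DBLP:conf/aistats/AgrawalG13} to the corrupted setting. I would first focus on the case where $g_a$ is increasing, under which $E_a^\lambda(t)$ reduces to $\{\hat{\lambda}_a(t) \leq \LThresholdTS_a\}$ and $\overline{E_a^\theta(t)}$ reduces to $\{\theta_a(t) > \HThresholdTS_a\}$; the decreasing case should then follow by a symmetric argument using $d(x,y)=d(1-x,1-y)$ and exchanging Binomial upper and lower tails. The key structural observation is that whenever $\ArmAt_{t+1}=a$ the counter $N_a$ strictly increases by one, so for each value $j$ of $N_a(t)$ the sum
\[ S \defined \sum_{t=0}^{T-1} \ind_{\ArmAt_{t+1}=a,\, \theta_a(t) > \HThresholdTS_a,\, \hat{\lambda}_a(t) \leq \LThresholdTS_a} \]
collects at most one contribution.

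I would then fix $L \defined \lfloor \log(T)/d(\LThresholdTS_a', \HThresholdTS_a) \rfloor$ and split $S = S_1 + S_2$ according to whether $N_a(t) < L$ or $N_a(t) \geq L$. The deterministic bound $S_1 \leq L$ is immediate, since each summand in $S_1$ corresponds to a pull of arm $a$ with $N_a(t) \in \{0, \ldots, L-1\}$ and there are at most $L$ such pulls. For the tail $S_2$, I would drop the (only helpful) constraint $\ArmAt_{t+1} = a$ and group by blocks of constant $N_a(t)$:
\[ S_2 \leq \sum_{j \geq L}\ind_{\hat{\lambda}_{a,j} \leq \LThresholdTS_a} \sum_{t \in B_j} \ind_{\theta_a(t) > \HThresholdTS_a}, \]
where $B_j := \{t : N_a(t) = j\}$ is the $j$-th block and $|B_j|$ is its length.

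Within $B_j$, the Thompson samples $\theta_a(t)$ are i.i.d.\ $\mathrm{Beta}(j\hat{\lambda}_{a,j}+1, j(1-\hat{\lambda}_{a,j})+1)$ conditional on $\hat{\lambda}_{a,j}$. Invoking the Beta--Binomial duality to write $\bP(\theta_a(t) > \HThresholdTS_a \mid \hat{\lambda}_{a,j}) = F^B_{j+1, \HThresholdTS_a}(j\hat{\lambda}_{a,j})$ (with $F^B_{n,p}$ the Binomial CDF), and applying Chernoff's inequality for the Binomial lower tail together with the monotonicity of $d(\cdot,\HThresholdTS_a)$ on $[0,\HThresholdTS_a]$, I would obtain the pointwise bound
\[ F^B_{j+1, \HThresholdTS_a}(j\hat{\lambda}_{a,j}) \leq \exp\bigl(-(j+1)\,d(\LThresholdTS_a, \HThresholdTS_a)\bigr) \qquad \text{on } \{\hat{\lambda}_{a,j} \leq \LThresholdTS_a\}. \]
Combining this with $\sum_{j \geq L} |B_j| \leq T$ and the monotonicity of the exponential in $j$ would then give $\bE[S_2] \leq T\exp(-L\, d(\LThresholdTS_a, \HThresholdTS_a)) \leq T^{1 - d(\LThresholdTS_a, \HThresholdTS_a)/d(\LThresholdTS_a', \HThresholdTS_a)}$.

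The main obstacle is to make the tail $\bE[S_2]$ actually vanish. The strict inequalities $\LThresholdTS_a < \LThresholdTS_a' < \HThresholdTS_a$ imply $d(\LThresholdTS_a, \HThresholdTS_a) > d(\LThresholdTS_a', \HThresholdTS_a)$, so the exponent $1 - d(\LThresholdTS_a, \HThresholdTS_a)/d(\LThresholdTS_a', \HThresholdTS_a)$ is strictly negative and $\bE[S_2] \to 0$; in particular $\bE[S_2] \leq 1$ for every $T$ large enough (depending on $\LThresholdTS_a,\LThresholdTS_a',\HThresholdTS_a$). Combining this with $S_1 \leq L \leq \log(T)/d(\LThresholdTS_a', \HThresholdTS_a)$ would conclude the proof. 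The role of the auxiliary threshold $\LThresholdTS_a'$ is precisely to create the gap between $d(\LThresholdTS_a,\HThresholdTS_a)$ and $d(\LThresholdTS_a',\HThresholdTS_a)$ that makes the tail decay strictly faster than $1/T$, while keeping the leading constant close to $1/d(\LThresholdTS_a',\HThresholdTS_a)$ for the downstream optimization in the proof of Theorem~\ref{UB_TSCF_Theorem}.
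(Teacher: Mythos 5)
Your proposal is correct and follows essentially the same route as the paper's proof: split at the threshold $\log(T)/d(\LThresholdTS_a',\HThresholdTS_a)$, bound the small-count pulls deterministically, and control the large-count tail via the Beta--Binomial identity plus a Chernoff bound, using the auxiliary threshold $\LThresholdTS_a'$ to make the tail contribution at most $1$ for large $T$. The only cosmetic difference is that you write out the increasing case and handle the decreasing case by the reflection $x\mapsto 1-x$, whereas the paper cites \cite{DBLP:conf/aistats/AgrawalG13} for the increasing case and writes out the decreasing case directly.
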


\begin{proof}
When $g_a$ is increasing, the application of Lemma 3 in \cite{DBLP:conf/aistats/AgrawalG13} directly yields 
\[\sum_{t \gets 0 }^{T-1}\bP\left(\ArmAt_{t+1} \IsEqual a, \overline{E_a^\theta(t)}, E_a^\lambda(t)\right) \leq  \frac{\log{T}}{d(\LThresholdTS_a, \HThresholdTS_a)} + 1.\]
The proof is based on the use of deviation inequalities and a link between the Beta and Binomial c.d.f. that shall also be useful in the decreasing case, that we handle now (using slightly different arguments).
\begin{fact}
 $$ F^{beta}_{\alpha, \beta} (\HThresholdTS) = 1 - F^B_{\alpha + \beta - 1, \HThresholdTS}(\alpha - 1) $$
\end{fact}

Note that for decreasing $g_a$, one has $\overline{E_a^\theta(t)}=\{\theta_a(t) \leq \HThresholdTS_a\}$ and $E_a^\lambda(t)=\{\hat{\lambda}_a(t) > \LThresholdTS_a\}$.
Fix $\LThresholdTS_a'$ such that $\HThresholdTS_a<\LThresholdTS_a'<\LThresholdTS_a$ and let $L'_a(T) = \frac{\log(T)}{d(\LThresholdTS_a',\HThresholdTS_a)}$.
\begin{align*}
&\sum_{t \gets 0 }^{T-1}\bP\left(\ArmAt_{t+1} \IsEqual a, \hat{\lambda}_a(t) > \LThresholdTS_a, \theta_a(t) \leq \HThresholdTS_a\right) \\& \leq  \frac{\log(T)}{d(\LThresholdTS_a',\HThresholdTS_a)} + \sum_{t \gets 0 }^{T-1}\bP\left(\ArmAt_{t+1} \IsEqual a,N_a(t) \leq L'_a(T), \theta_a(t) \leq \HThresholdTS_a, \hat{\lambda}_a(t) > \LThresholdTS_a\right) \\
& \leq \frac{\log(T)}{d(\LThresholdTS_a',\HThresholdTS_a)} + \bE\sum_{t \gets 0 }^{T-1}\sum_{s \gets L'_a(T)}^t\ind_{(\ArmAt_{t+1} \IsEqual a,N_a(t) \IsEqual s , \theta_a(t) \leq \HThresholdTS_a, \hat{\lambda}_{a}(t) > \LThresholdTS_a)} \\
& = \frac{\log(T)}{d(\LThresholdTS_a',\HThresholdTS_a)} + \bE\sum_{t \gets 0 }^{T-1}\sum_{s \gets L'_a(T)}^t\ind_{\left(\ArmAt_{t+1} \IsEqual a,N_a(t) \IsEqual s , \hat{\lambda}_{a}(t) > \LThresholdTS_a\right)}\bP\left(\theta_a(t) \leq \HThresholdTS_a \given \mathcal{F}_{t}\right) \\
& = \frac{\log(T)}{d(\LThresholdTS_a',\HThresholdTS_a)} + \bE\sum_{t \gets 0 }^{T-1}\sum_{s \gets L'_a(T)}^t\ind_{\left(\ArmAt_{t+1} \IsEqual a,N_a(t) \IsEqual s , \hat{\lambda}_{a}(t) > \LThresholdTS_a\right)}F^{beta}_{(s\hat{\lambda}_{a}(t) + 1,s -s\hat{\lambda}_{a}(t)+1)}(\HThresholdTS_a) \\ 
& = \frac{\log(T)}{d(\LThresholdTS_a',\HThresholdTS_a)} + \bE\sum_{t \gets 0 }^{T-1}\sum_{s \gets L'_a(T)}^t\ind_{\left(\ArmAt_{t+1} \IsEqual a,N_a(t) \IsEqual s , \hat{\lambda}_{a}(t) > \LThresholdTS_a\right)}\left(1 - F^{B}_{(s+1,\HThresholdTS_a)}(s\hat{\lambda}_a(t))\right)\\
& \leq \frac{\log(T)}{d(\LThresholdTS_a',\HThresholdTS_a)} + \bE\sum_{t \gets 0 }^{T-1}\sum_{s \gets L'_a(T)}^t\ind_{\left(\ArmAt_{t+1} \IsEqual a,N_a(t) \IsEqual s , \hat{\lambda}_{a}(t) > \LThresholdTS_a\right)}\underbrace{\left(1 - F^{B}_{(s+1,\HThresholdTS_a)}(s\LThresholdTS_a)\right)}_{A_s}
\end{align*}
Introducing $(X_k)$ an i.i.d. sequence drawn from Bernoulli of mean $\HThresholdTS_a$, term $A_s$ can be written, for any $s$, .
\begin{eqnarray*}
A_s &=& \bP\left(\sum_{k \gets 1}^{s+1}X_k \geq \LThresholdTS_as\right) \leq \bP\left(\sum_{k \gets 1}^s X_k \geq \LThresholdTS_a s -1\right) = \bP\left(\frac{1}{s}\sum_{k \gets 1}^s X_k \geq \LThresholdTS_a - \frac{1}{s}\right) \\
& \leq & \exp\left(-s d\left(\LThresholdTS_a - 1/s,\HThresholdTS_a\right)\right) \leq \exp\left(-\log(T) \frac{d(\LThresholdTS_a - 1/s,\HThresholdTS_a)}{d(\LThresholdTS_a',\HThresholdTS_a)}\right) \leq \frac{1}{T}, 
\end{eqnarray*}
for large enough $T$, and $s$ larger than $L'_a(T)$ (as it holds that $d(\LThresholdTS_a -1/s,\HThresholdTS_a) \geq d(\LThresholdTS_a',\HThresholdTS_a)$).
Finally, for $T$ large enough,
\begin{align*}
 &\sum_{t \gets1}^{T}\bP\left(\ArmAt_{t} \IsEqual a, \hat{\lambda}_a(t) \geq \LThresholdTS_a, \theta_a(t) \leq \HThresholdTS_a\right) \\
 &\leq \frac{\log(T)}{d(\LThresholdTS_a',\HThresholdTS_a)} +  \sum_{s \gets 0}^{T-1}\frac{1}{T}\bE\underbrace{\sum_{t \gets s }^{T}\ind_{\left(\ArmAt_{t+1} \IsEqual a,N_a(t) \IsEqual s \right)}}_{\leq 1}  \\
&\leq \frac{\log(T)}{d(\LThresholdTS_a',\HThresholdTS_a)} + \sum_{t \gets1}^{T} \frac{1}{T} = \frac{\log(T)}{d(\LThresholdTS_a',\HThresholdTS_a)} + 1. 
\end{align*} 

\end{proof}

%\TSLMB*

\begin{restatable}{myLemma}{TSLMB}
\label{TSCF-lemma2}
$\sum_{t \gets 0 }^{T-1} \mathbb{P} \Big( \ArmAt_{t+1} \IsEqual a, \overline{E_a^\lambda(t)}\Big) \leq 1 + \frac{1}{d(\LThresholdTS_a, \lambda_a)}.  $
\end{restatable}

\begin{proof}
This result follows from the application of Chernoff bound for the concentration of $\hat{\lambda}_a(t)$. When $g_a$ is increasing, it follows directly from the application of Lemma 2 in \cite{DBLP:conf/aistats/AgrawalG13}, hence we write the proof in the decreasing case only, where we shall justify that for $\LThresholdTS_a < \lambda_a$, 
\[\sum_{t \gets 0 }^{T-1}\bP\left(\ArmAt_{t+1} \IsEqual a , \hat{\lambda}_a(t) < \LThresholdTS_a \right) \leq \frac{1}{d(\LThresholdTS_a,\lambda_a)} +1.\]
Using $\hat{\lambda}_{a,s}$ to denote the empirical mean of the $s$ first observations from the feedback of arm $a$,
\begin{eqnarray*}
\sum_{t \gets 0 }^{T-1}\bP\left(\ArmAt_{t+1} \IsEqual a , \hat{\lambda}_a(t) < \LThresholdTS_a \right) &=&  \bE\left[\sum_{t \gets 0 }^{T-1} \sum_{s \gets 0}^t \ind_{(\ArmAt_{t+1} \IsEqual a,N_a(t) \IsEqual s)} \ind_{(\hat{\lambda}_{a,s}<\LThresholdTS_a)}\right] \\
& = & \bE\Big[\sum_{s \gets 0}^T\ind_{\left(\hat{\lambda}_{a,s}<\LThresholdTS_a\right)} \underbrace{\sum_{t \gets s}^{T} \ind_{(\ArmAt_{t+1} \IsEqual a,N_a(t) \IsEqual s)}}_{\leq 1} \Big] \\
& \leq & 1 + \sum_{s \gets 1}^{T-1}\bP\left(\hat{\lambda}_{a,s} < \LThresholdTS_a\right)  \leq  1 + \sum_{s \gets 1}^{T-1}\exp(-sd(\LThresholdTS_a,\lambda_a)) \\
& \leq & 1 + \frac{1}{d(\LThresholdTS_a,\lambda_a)},
\end{eqnarray*}
where the last but one inequality follows from Chernoff inequality (as $\LThresholdTS_a < \lambda_a$).
\end{proof}

%\TSLMD*
\begin{restatable}{myLemma}{TSLMD} Let $\tau_s$ be the instant of the $s$-th play of arm 1. Then there exists a function $f(s)=f(s,\lambda_1,g_1(g_a^{-1}(\mu_1)))$ satisfying 
$\sum_{s=1}^\infty f(s) < \infty$ such that for all $s$, 
\label{TSCF-lemma4}
$$ \bE \left[ \frac{1}{p_{a, \tau_s + 1}} \right] \leq 1+f(s). $$
\end{restatable}
\begin{proof}
Let $\tilde{\HThresholdTS}_a \defined g_1(g^{-1}_a (\HThresholdTS_a))$. Examining all possibilities, one can easily show that
\begin{itemize}
 \item if $g_1$ is increasing and $g_a$ is increasing, $p_{a,t} = \bP\left(\theta_1(t) > \tilde{\HThresholdTS}_a\right)$, with $\tilde{\HThresholdTS}_a < \lambda_1$,
 \item if $g_1$ is increasing and $g_a$ decreasing, $p_{a,t} = \bP\left(\theta_1(t) > \tilde{\HThresholdTS}_a\right)$, with $\tilde{\HThresholdTS}_a < \lambda_1$,
 \item if $g_1$ is decreasing and $g_a$ is increasing, $p_{a,t} = \bP\left(\theta_1(t) < \tilde{\HThresholdTS}_a\right)$, with $\tilde{\HThresholdTS}_a > \lambda_1$,
 \item if $g_1$ is decreasing and $g_a$ is decreasing, $p_{a,t} = \bP\left(\theta_1(t) < \tilde{\HThresholdTS}_a\right)$, with $\tilde{\HThresholdTS}_a > \lambda_1$.
 \end{itemize}

When $g_1$ is increasing, $\tilde{\HThresholdTS}_a < \lambda_1$ and  
\[p_{a,\tau_s + 1} = 1-F^{beta}_{(S_1(\tau_s)+1 , s - S_1(\tau_s)+1)}(\tilde{\HThresholdTS}_a) = F^B_{(s+1,\tilde{\HThresholdTS}_a)}(S_1(\tau_s)).\]
Using that $S_1(\tau_s)$ has a binomial distribution with parameters $(s,\lambda_1)$ yields 
\begin{eqnarray}
\bE\left[\frac{1}{p_{a,\tau_s +1}}\right]   =  \sum_{j \gets 0}^s \frac{f^B_{(s,\lambda_1)}(j)}{F^B_{(s+1,\tilde{\HThresholdTS}_a)}(j)}. \label{ToBound1}  
\end{eqnarray}

When $g_1$ is decreasing, recall $\tilde{\HThresholdTS}_a > \lambda_1$ and one has 
\[p_{a,\tau_s + 1} = F^{beta}_{(S_1(\tau_s)+1 , s - S_1(\tau_s)+1)}(\tilde{\HThresholdTS}_a) = 1 - F^B_{(s+1,\tilde{\HThresholdTS}_a)}(S_1(\tau_s)).\]
Using again the distribution of $S_1(\tau_s)$ yields 
\begin{eqnarray*}
\bE\left[\frac{1}{p_{a,\tau_s +1}}\right]   =  \sum_{j \gets 0}^s \frac{f^B_{(s,\lambda_1)}(j)}{1- F^B_{(s+1,\tilde{\HThresholdTS}_a)}(j)} 
\end{eqnarray*}
Note here two simple properties of Binomial distributions: for all $t\in \mathbb{N}^*$ and $c\in [0,1]$, for all $j \in \{0,\dots,t\}$,   
\begin{itemize}
 \item $f^B_{(t,c)}(j) = f_{(t,1-c)}(s-j)$ 
 \item $F^B_{(t,c)}(j) = 1 - F^B_{(t,1-c)}(t-j-1)$
\end{itemize}
It follows that 
\begin{eqnarray}
\bE\left[\frac{1}{p_{a,\tau_s +1}}\right]   =  \sum_{j \gets 0}^s \frac{f^B_{(s,1-\lambda_1)}(s-j)}{F^B_{(s+1,1-\tilde{\HThresholdTS}_a)}(s-j)}=  \sum_{j \gets 0}^s \frac{f^B_{(s,1-\lambda_1)}(j)}{F^B_{(s+1,1-\tilde{\HThresholdTS}_a)}(j)},\label{ToBound2}
\end{eqnarray}
with $1-\lambda_1 > 1-\tilde{\HThresholdTS}_a$. 

The proof for Lemma 4 given in \cite{DBLP:conf/aistats/AgrawalG13} provides an upper bound on the quantity
\[  \sum_{j \gets 0}^s \frac{f^B_{(s,c)}(j)}{F^B_{(s+1,c)}(j)}\]
whenever $c$ is larger that $d$. Using this result one can bound \eqref{ToBound1} and \eqref{ToBound2} by the same quantity: 
$$ 
\bE \left[ \frac{1}{p_{a, \tau_s + 1}} \right] \leq  
\begin{cases}
1 + \frac{3}{\Delta'_a}, & \text{if } s < \frac{8}{\Delta'_a} \\
   1 + \Theta\left(\exp{(-{\Delta'_a}^2 s/2)} + \frac{1}{(s+1){\Delta'_a}^2}\exp{(-D_as)} + \frac{1}{\exp{({\Delta'_a}^2s/4)}-1}\right) , & \text{if } s \geq \frac{8}{\Delta'_a}   
\end{cases}
$$
where $\Delta'_a \defined \lambda_1 - \tilde{\HThresholdTS}_a$ and $D_a \defined \tilde{\HThresholdTS}_a \log{\frac{\tilde{\HThresholdTS}_a}{\lambda_1}} + (1 - \tilde{\HThresholdTS}_a) \log{\frac{1-\tilde{\HThresholdTS}_a}{1-\lambda_1}}$. Hence, Lemma~\ref{TSCF-lemma4} follows with 
\[f(s) \defined \begin{cases}
 \frac{3}{\Delta'_a}, & \text{if } s < \frac{8}{\Delta'_a} \\
   \Theta\left(\exp{(-{\Delta'_a}^2 s/2)} + \frac{1}{(s+1){\Delta'_a}^2}\exp{(-D_as)} + \frac{1}{\exp{({\Delta'_a}^2s/4)}-1}\right) , & \text{if } s \geq \frac{8}{\Delta'_a}   
\end{cases},\]
that satisfies $\sum_{s \gets 0}^\infty f(s) < \infty$. 
\end{proof}

One can now complete the proof of Theorem~\ref{UB_TSCF_Theorem}.

\begin{align*}
\bE[N_a(T)] &=\sum_{t \gets 0 }^{T-1} \bP(\ArmAt_{t+1} \IsEqual a) \\
&= \sum_{t \gets 0 }^{T-1} \bP(\ArmAt_{t+1} \IsEqual a, E_a^\lambda(t), E_a^\theta(t)) + \sum_{t \gets 0 }^{T-1} \bP(\ArmAt_{t+1} \IsEqual a, E_a^\lambda(t), \overline{E_a^\theta(t)}) \\
&+ \sum_{t \gets 0 }^{T-1} \bP(\ArmAt_{t+1} \IsEqual a, \overline{E_a^\lambda(t)}) \\
&\leq \sum_{t \gets 0 }^{T-1} \bE \left[ \frac{(1 - p_{a,t})}{p_{a,t}} \ind_{(\ArmAt_{t+1} \IsEqual 1, E_a^\theta(t), E_a^\lambda(t))}\right] + \frac{\log{T}}{d(\LThresholdTS'_a, \HThresholdTS_a)} + 1 + \frac{1}{d(\LThresholdTS_a, \lambda_a)} + 1\\
&\leq \sum_{s \gets 0}^{T-1} \bE \left[ \frac{(1 - p_{a, \tau_s + 1})}{p_{a, \tau_s + 1}} \sum_{t \gets\tau_s}^{\tau_{s + 1}-1} \ind_{(\ArmAt_{t+1} \IsEqual 1)}\right] + \frac{\log{T}}{d(\LThresholdTS'_a, \HThresholdTS_a)} + 1 + \frac{1}{d(\LThresholdTS_a, \lambda_a)} + 1\\
&= \sum_{s \gets 0}^{T-1} \bE \left[ \frac{1}{p_{a, \tau_s + 1}} - 1 \right] + \frac{\log{T}}{d(\LThresholdTS'_a, \HThresholdTS_a)} + 1 + \frac{1}{d(\LThresholdTS_a, \lambda_a)} + 1\\ 
&\leq \frac{\log{T}}{d(\LThresholdTS'_a, \HThresholdTS_a)} +  \sum_{s \gets 0}^{T-1}f(s) + \frac{1}{d(\LThresholdTS_a, \lambda_a)} + 2. 
\end{align*}

Fix $\psi >0$. Using the monotonicity properties of the divergence function $d$, there exists $\LThresholdTS_a < \LThresholdTS_a' < \HThresholdTS_a$ in the increasing case and  $\LThresholdTS_a>\LThresholdTS_a'>\HThresholdTS_a$ in the decreasing case such that $d(\LThresholdTS_a',\HThresholdTS_a) \geq d(\lambda_a,g_a(\mu_1))/(1+\psi)$. For this particular choice, one obtains 
\[\bE[N_a(T)] \leq (1+\psi)\frac{\log(T)}{d(\lambda_a,g_a(\mu_a))} + R(\LThresholdTS_a,\LThresholdTS_a',\HThresholdTS_a),\]
where $R(\LThresholdTS_a,\LThresholdTS_a',\HThresholdTS_a)$ is a rest term that depends on $\psi, \mu_1, \mu_a, g_1$ and $g_a$. The result follows using that $\cRegret_T = \sum_{a \gets2}^K\Delta_a\bE[N_a(T)]$.
\end{proof}

\section{Additional Empirical Evaluation}
\label{sec:eval2}
We ran the experiments mentioned in Section \ref{sec:expt1}, \ref{sec:expt2} and \ref{sec:expt3} on 4 additional Bernoulli corrupt bandit problems. These problems are succinctly described by the mean rewards of their arms given in Table \ref{tab2}. Recall that in the experiment to compare the performance of the algorithms over a period of time, randomized response was employed to corrupt the feedback and according to Equation~(\ref{eq:CorruptionMatrix}), $p_{00} = p_{11} = 0.6$ for the optimal arm, while for all the other arms, both $p_{00}$ and $p_{11}$ were set to $0.9$. The time horizon was varied to $10^5$ and each experiment was repeated 1000 times. Figures \ref{Fig2A}, \ref{Fig3A}, \ref{Fig4A} and \ref{Fig5A} show the average regret of the considered algorithms. In the second experiment aiming to see the effect of various levels of differential privacy on the regret, we chose $\epsilon$ from the set $\{1/8, 1/4, 1/2, 1, 2, 4, 8\}$. The corruption parameters are set by substituting the values of $\epsilon$ in Equation~(\ref{eq:DPmatrix}). The horizon was fixed to $10^5$ and the experiment was repeated $1000$ times. The corresponding curve for the average regret are given in Figures \ref{Fig2B}, \ref{Fig3B}, \ref{Fig4B} and \ref{Fig5B}. The third experiment compares the regret of \KLUCBCF \ and \TSCF \ with \textsc{DP-UCB-Int} for $\epsilon = 1$ and its results are given in Figures \ref{Fig2C}, \ref{Fig3C}, \ref{Fig4C} and \ref{Fig5C} .
\begin{table}[!htbp]
\centering
\caption{Bernoulli mean arm rewards for experimental scenarios}
\begin{tabular}{cccccccccccc}
  \addlinespace
    \toprule
     \multirow{2}*{Scenario} & & & & & Arms \\
     &1 & 2 & 3 & 4 & 5 & 6 & 7 & 8 & 9 & 10\\
     \midrule 
      1 & 0.9 & 0.6 \\ %scenario 2 in experiments        
    \midrule    
      2 & 0.9 & 0.8 \\ %scenario 3 in experiments 
    \midrule
     3 & 0.9 & 0.8 & 0.8 & 0.8 & 0.7 & 0.7 & 0.7 & 0.6 & 0.6 & 0.6  \\  %scenario 5 in experiments 
	\midrule
     4 & 0.9 & 0.6 & 0.6 & 0.6 & 0.6 & 0.6 & 0.6 & 0.6 & 0.6 & 0.6 \\ %scenario 4 in experiments 
    \bottomrule
  \end{tabular} 
  \label{tab2}
\end{table}

\begin{comment}
\begin{subfigure}{.3\textwidth}
  \centering
  \includegraphics[width=\linewidth]{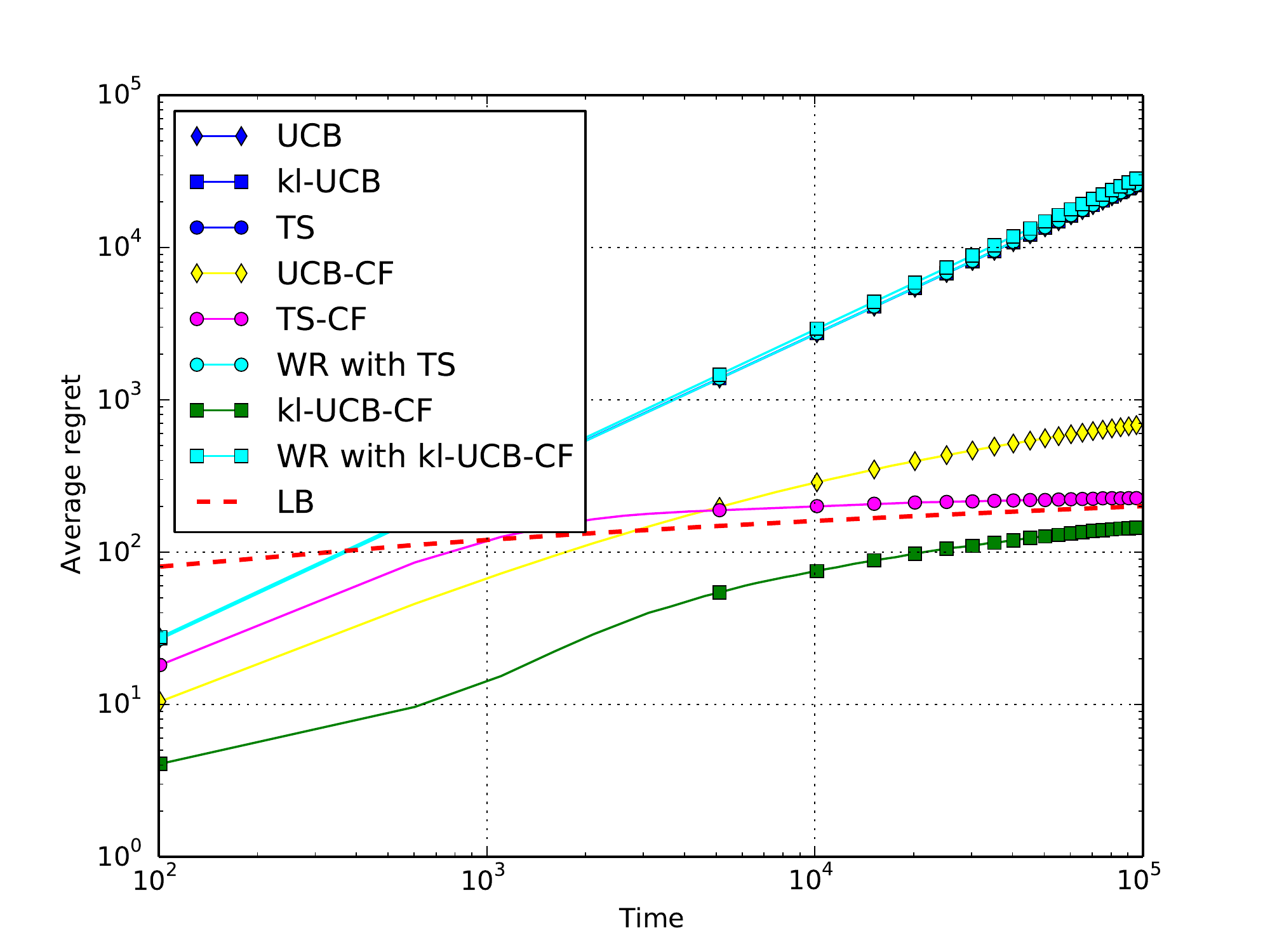}
  \caption{Regret plots for \KLUCBCF \ and \TSCF \ with others for varying  horizons up to $10^5$}
  \label{Fig1A}
\end{subfigure}%
\hspace{1em}
\begin{subfigure}{.3\textwidth}
  \centering
  \includegraphics[width=\linewidth]{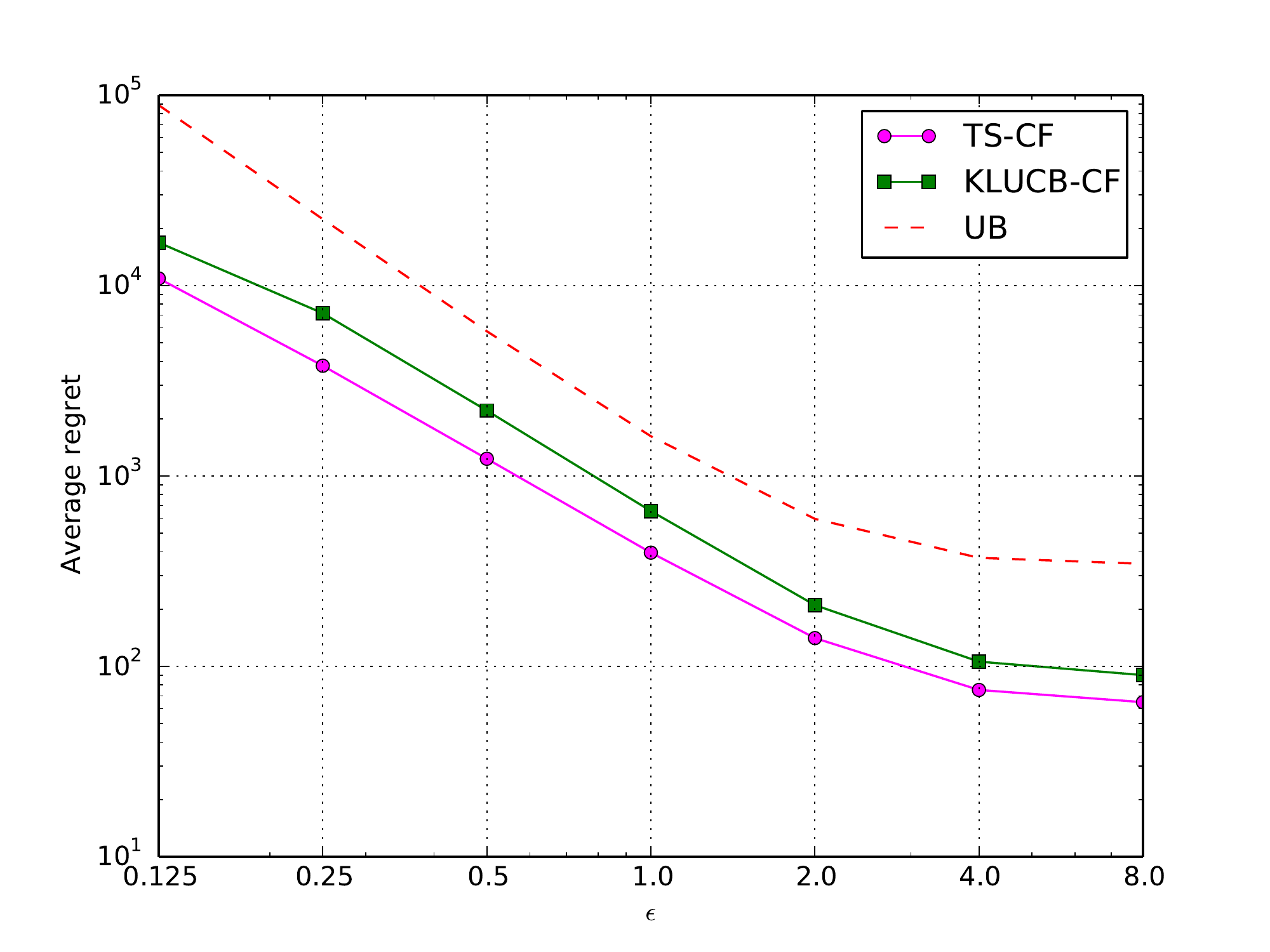}
  \caption{Regret plots for \KLUCBCF \ and \TSCF \ with $\epsilon$ = $\{\frac{1}{8}, \frac{1}{4}, \frac{1}{2}, 1, 2, 4, 8\}$, $T=10^5$}
  \label{Fig1B}
\end{subfigure}
\begin{subfigure}{.3\textwidth}
  \centering
  \includegraphics[width=\linewidth]{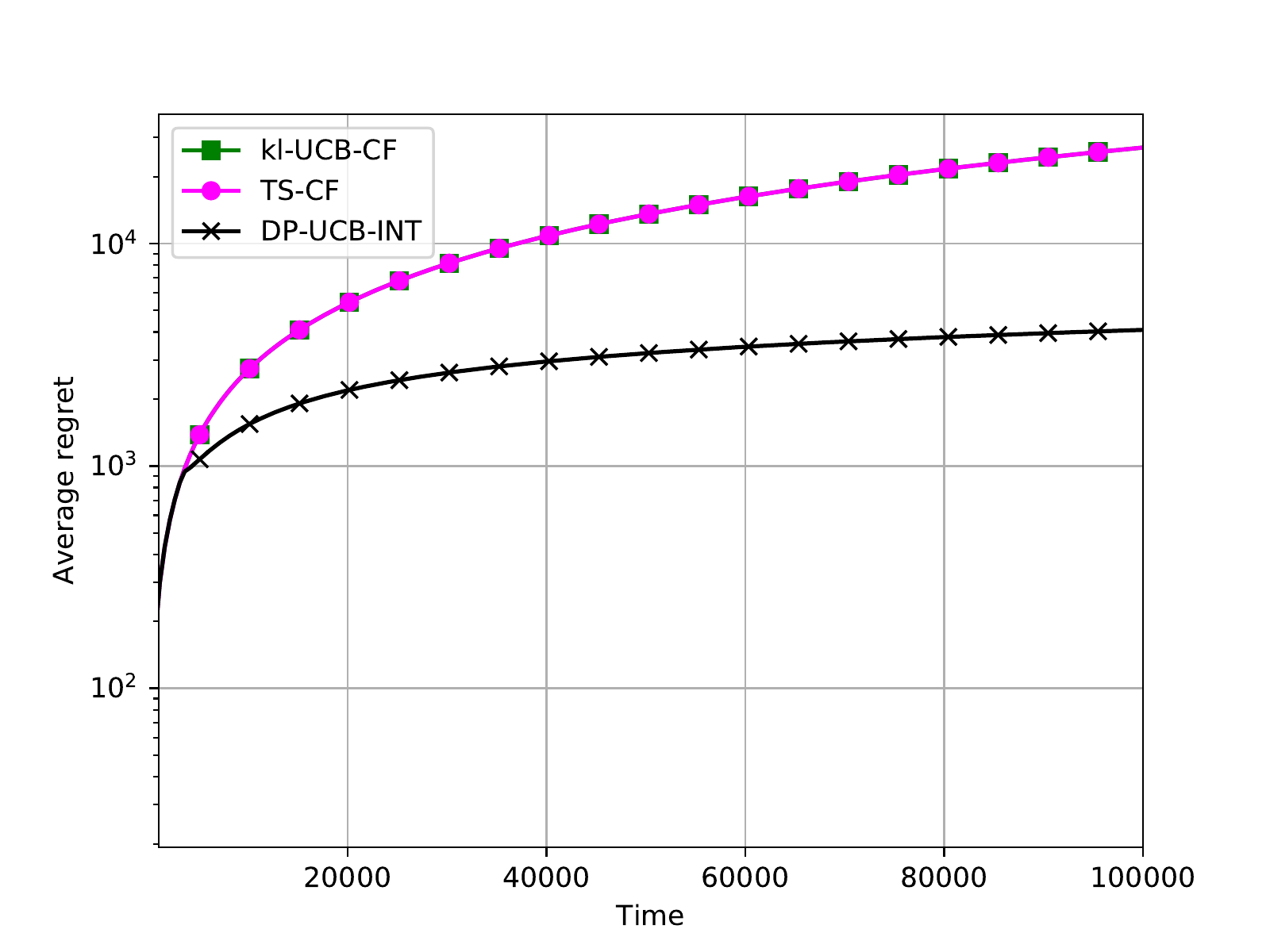}
  \caption{Regret plots for $\epsilon = 1$ for  \KLUCBCF \ and \TSCF \,  \textsc{DP-UCB-Int}}
  \label{Fig1C}
\end{subfigure}
\end{comment}

\begin{figure}[!htbp]
\centering
\begin{subfigure}{.3\textwidth}
  \centering
  \includegraphics[width=\linewidth]{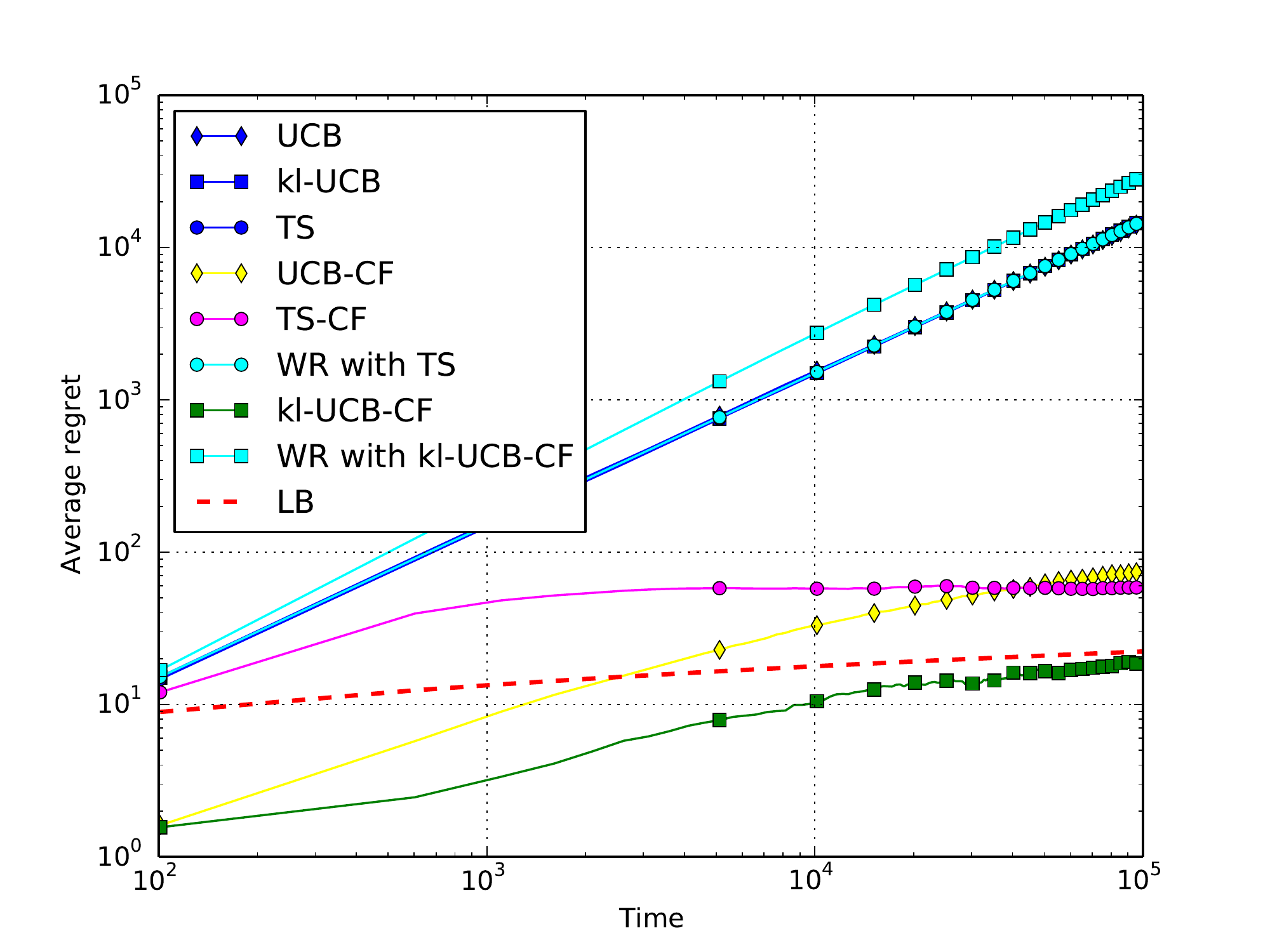}
  \caption{Regret plots for \KLUCBCF \ and \TSCF \ with others for varying  horizons up to $10^5$}
  \label{Fig2A}
\end{subfigure}%
\hspace{1em}
\begin{subfigure}{.3\textwidth}
  \centering
  \includegraphics[width=\linewidth]{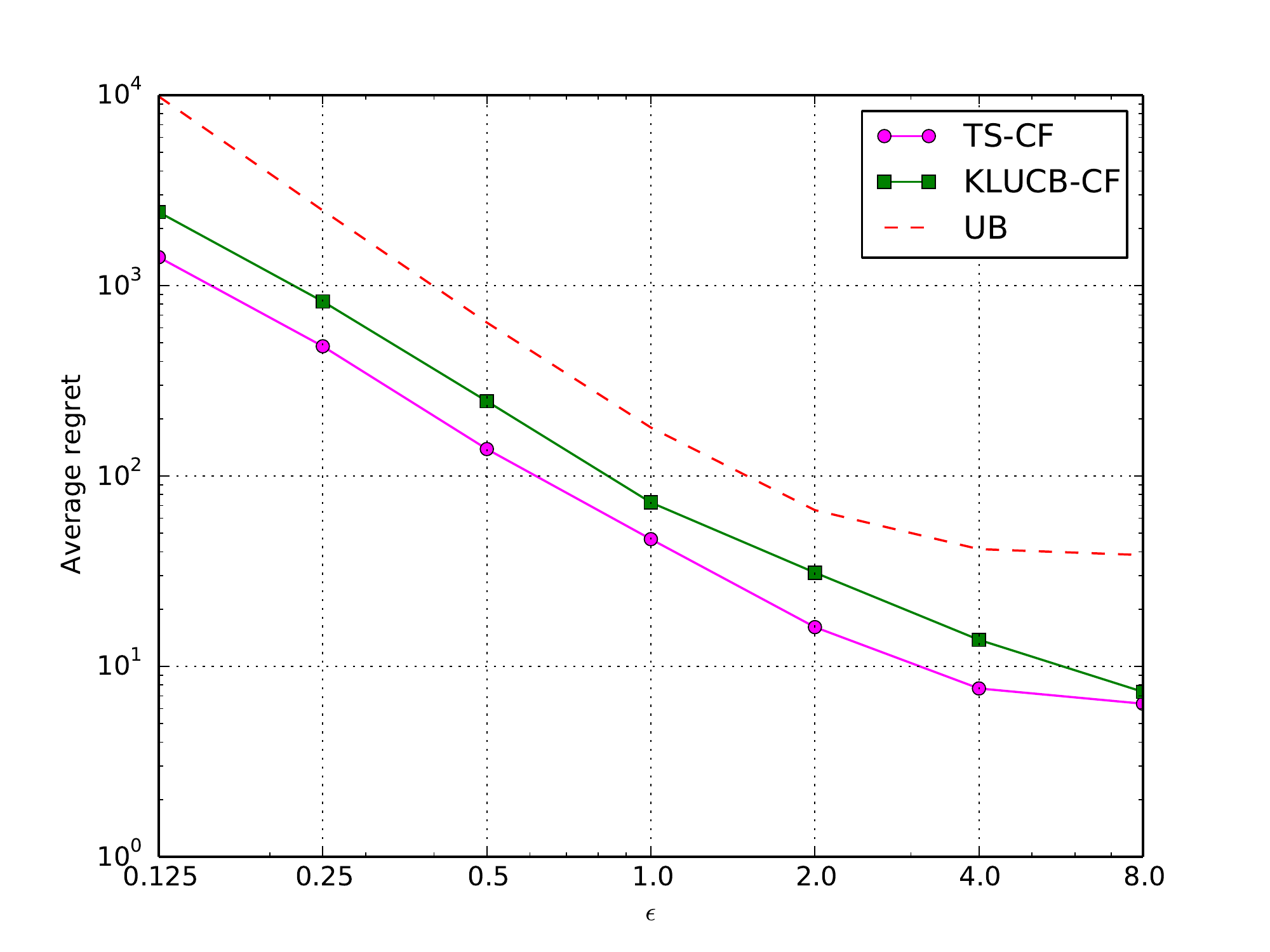}
  \caption{Regret plots for \KLUCBCF \ and \TSCF \ with $\epsilon$ = $\{\frac{1}{8}, \frac{1}{4}, \frac{1}{2}, 1, 2, 4, 8\}$, $T=10^5$}
  \label{Fig2B} 
\end{subfigure}
\hspace{1em}
\begin{subfigure}{.3\textwidth}
  \centering
  \includegraphics[width=\linewidth]{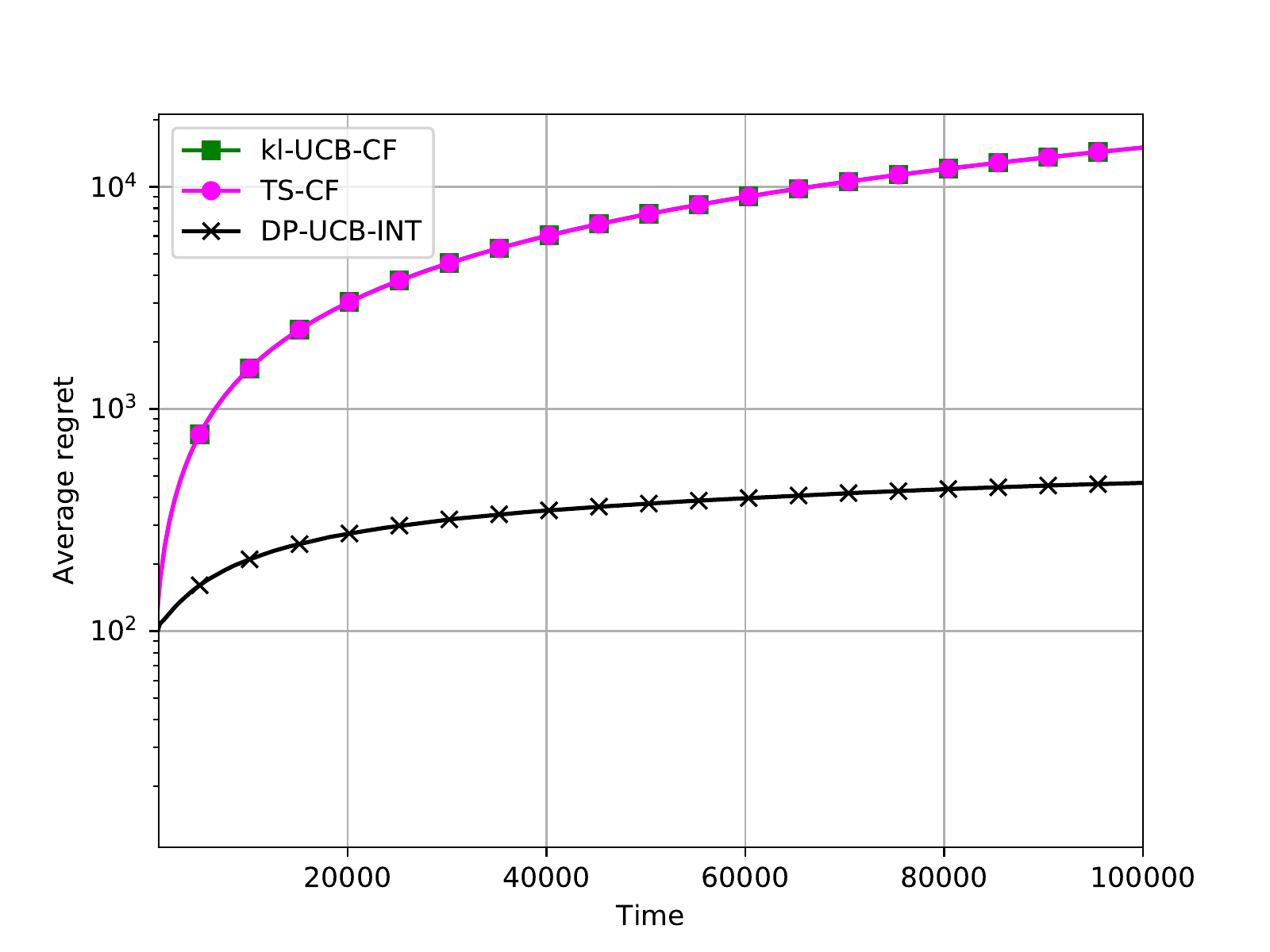}
  \caption{Regret plots for $\epsilon = 1$ for  \KLUCBCF \ and \TSCF \,  \textsc{DP-UCB-Int}}
  \label{Fig2C}
\end{subfigure}
\caption{Regret plots for scenario 1}
\label{Fig2}
\end{figure}

\begin{figure}[!htbp]
\centering
\begin{subfigure}{.3\textwidth}
  \centering
  \includegraphics[width=\linewidth]{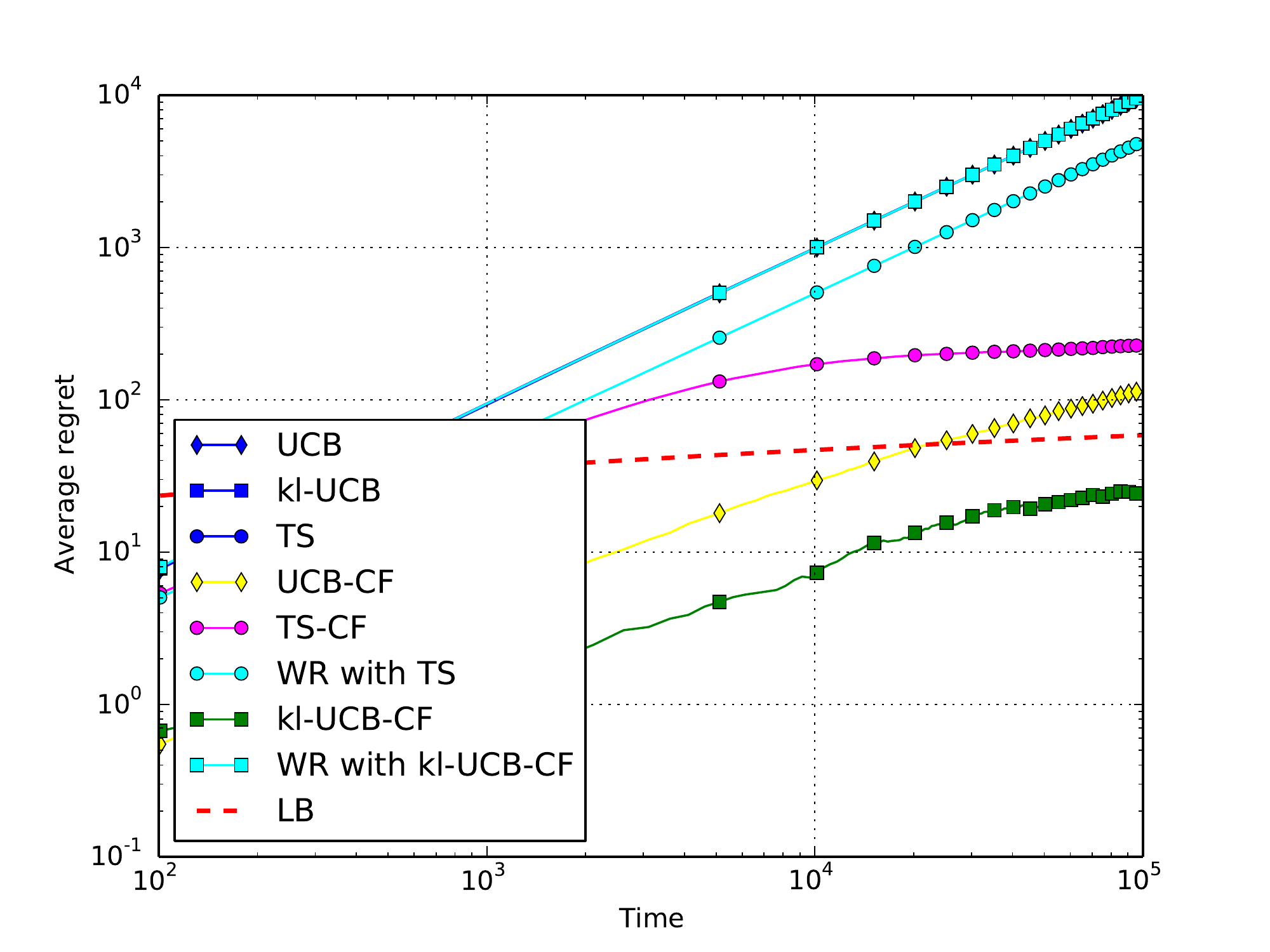}
  \caption{Regret plots for \KLUCBCF \ and \TSCF \ with others for varying  horizons up to $10^5$}
  \label{Fig3A}
\end{subfigure}%
\hspace{1em}
\begin{subfigure}{.3\textwidth}
  \centering
  \includegraphics[width=\linewidth]{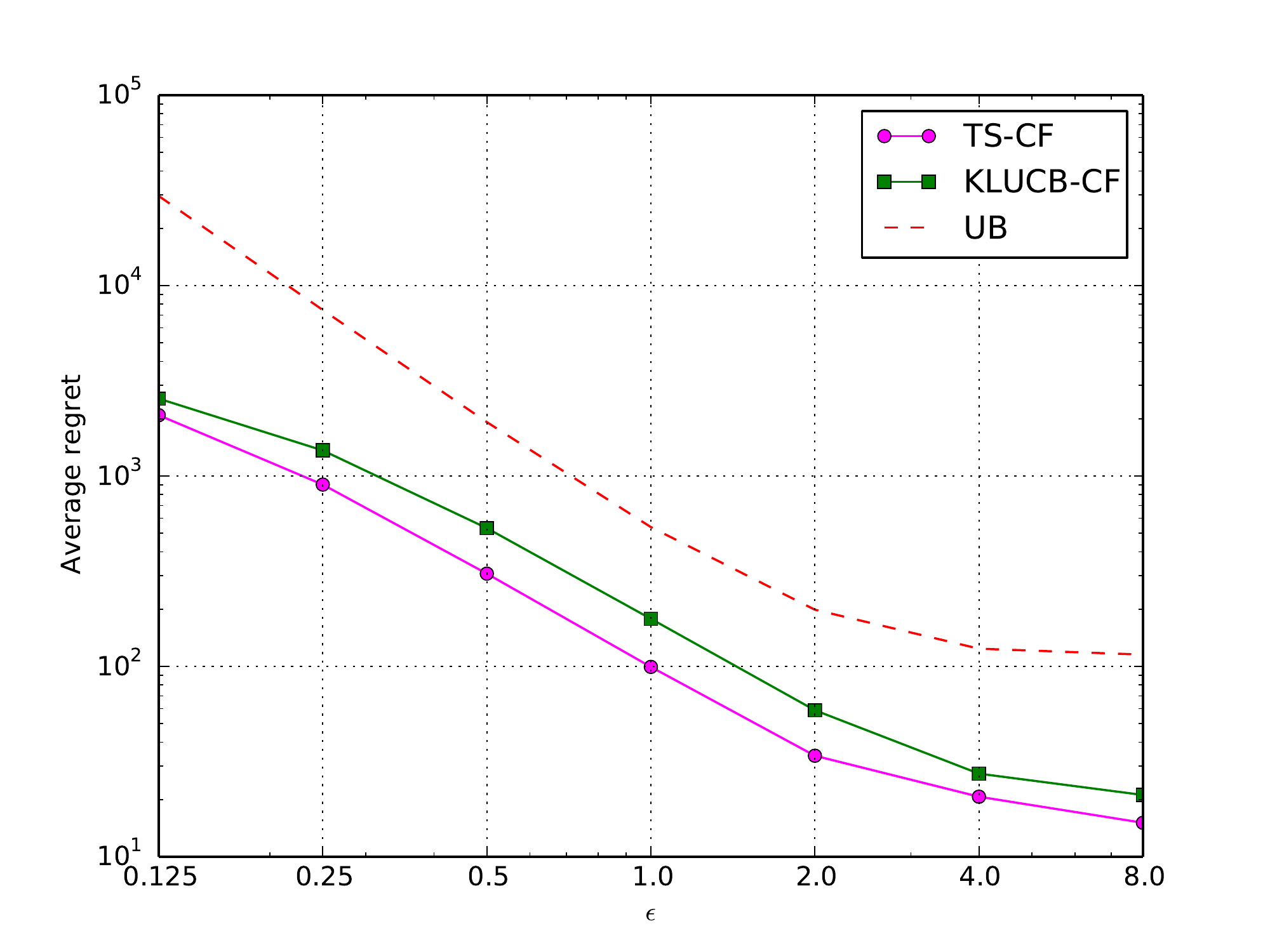}
  \caption{Regret plots for \KLUCBCF \ and \TSCF \ with $\epsilon$ = $\{\frac{1}{8}, \frac{1}{4}, \frac{1}{2}, 1, 2, 4, 8\}$, $T=10^5$}
  \label{Fig3B}
\end{subfigure}
\hspace{1em}
\begin{subfigure}{.3\textwidth}
  \centering
  \includegraphics[width=\linewidth]{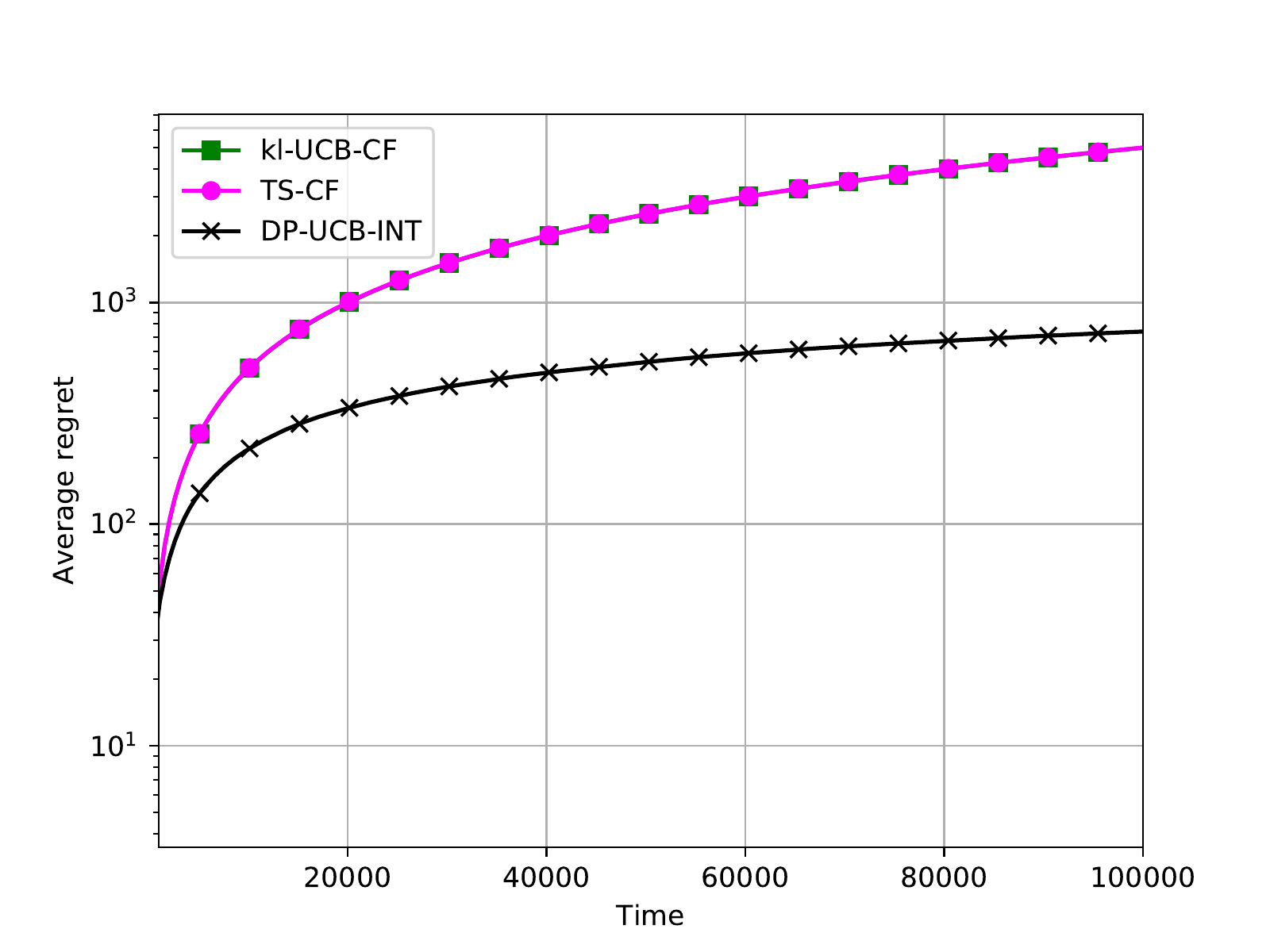}
  \caption{Regret plots for $\epsilon = 1$ for  \KLUCBCF \ and \TSCF \,  \textsc{DP-UCB-Int}}
  \label{Fig3C}
\end{subfigure}
\caption{Regret plots for scenario 2}
\label{Fig3}
\end{figure}

\begin{figure}[!htbp]
\centering
\begin{subfigure}{.3\textwidth}
  \centering
  \includegraphics[width=\linewidth]{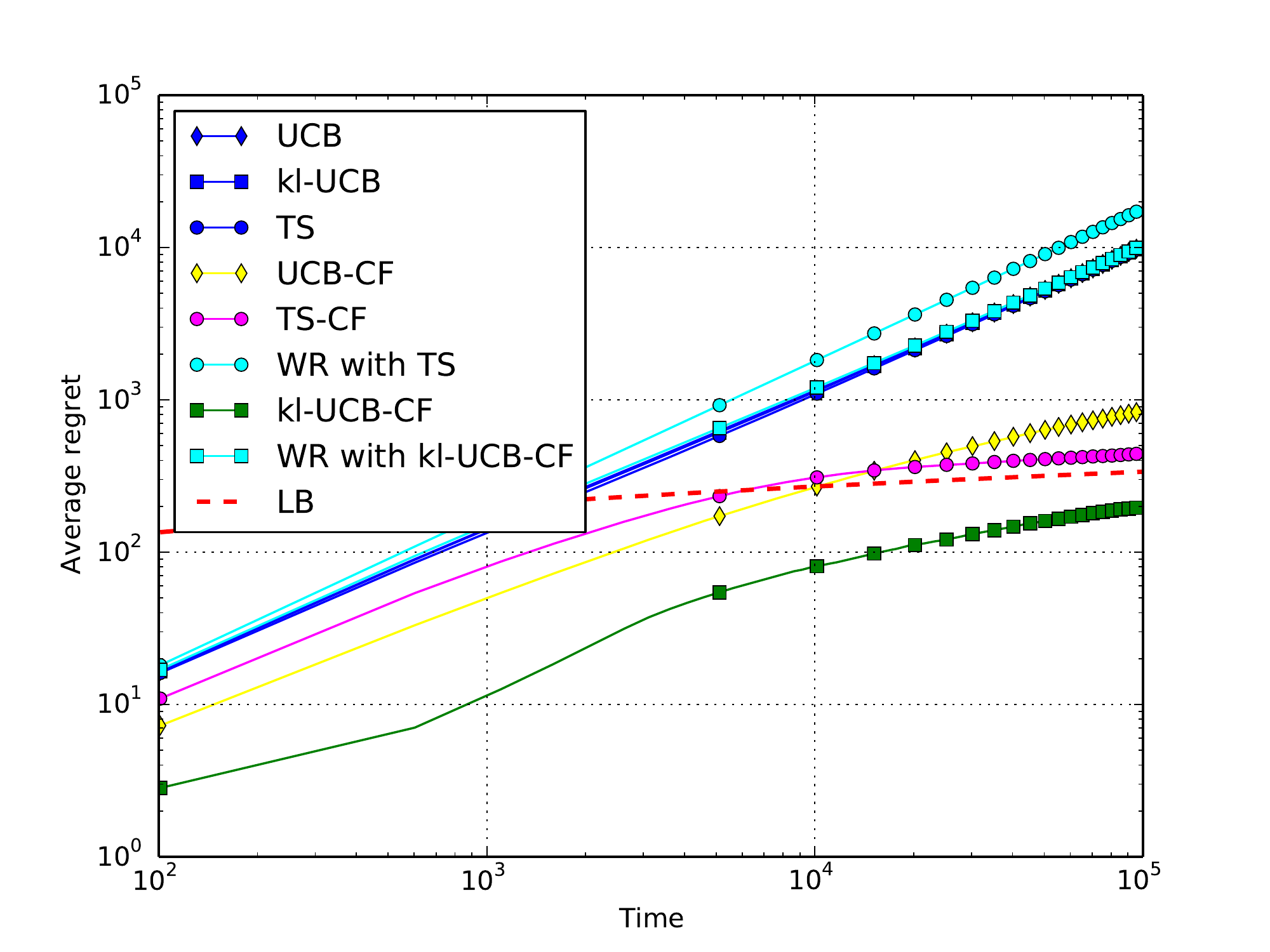}
  \caption{Regret plots for \KLUCBCF \ and \TSCF \ with others for varying  horizons up to $10^5$}
  \label{Fig4A}
\end{subfigure}%
\hspace{1em}
\begin{subfigure}{.3\textwidth}
  \centering
  \includegraphics[width=\linewidth]{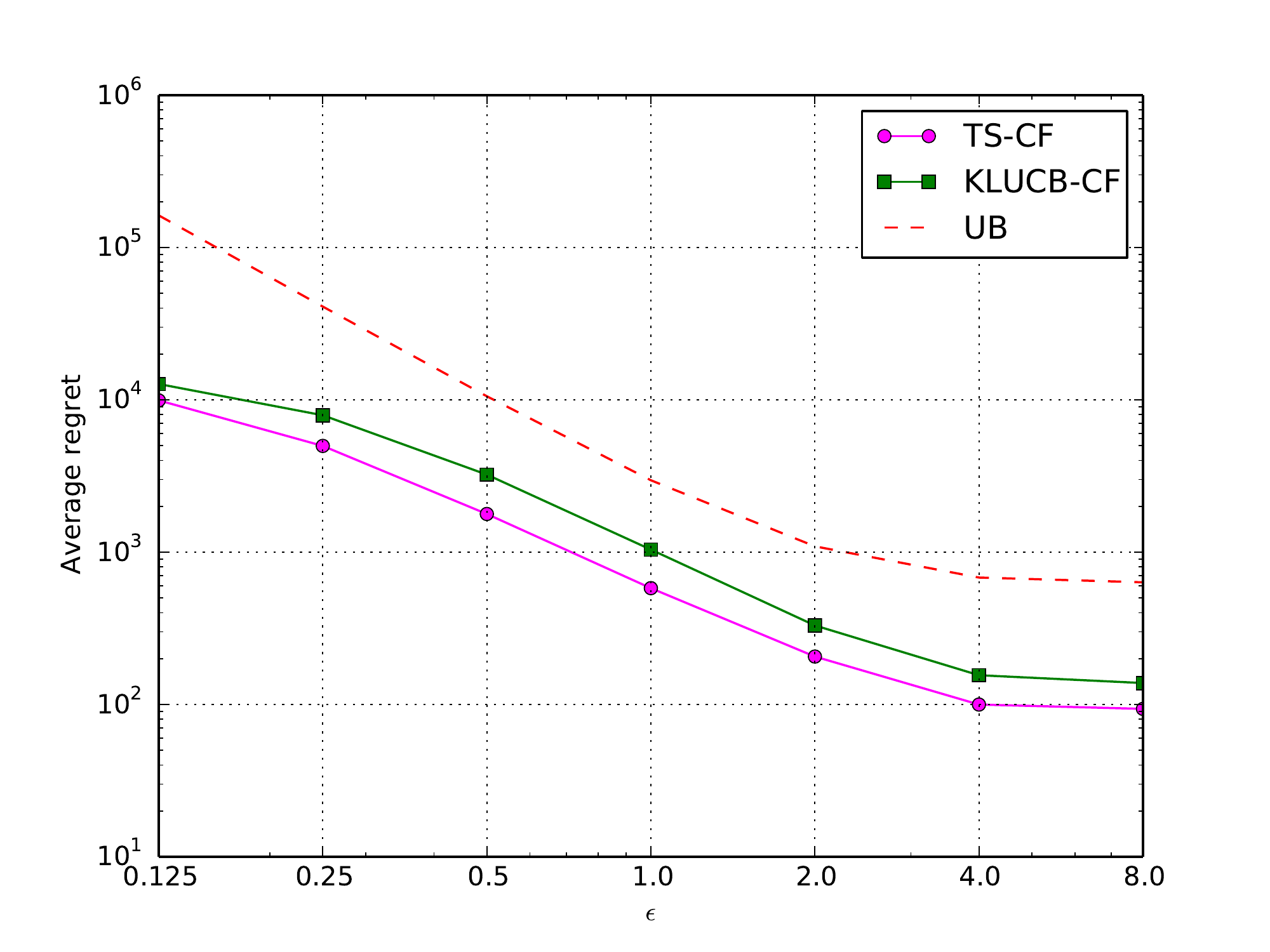}
  \caption{Regret plots for \KLUCBCF \ and \TSCF \ with $\epsilon$ = $\{\frac{1}{8}, \frac{1}{4}, \frac{1}{2}, 1, 2, 4, 8\}$, $T=10^5$}
  \label{Fig4B}
\end{subfigure}
\hspace{1em}
\begin{subfigure}{.3\textwidth}
  \centering
  \includegraphics[width=\linewidth]{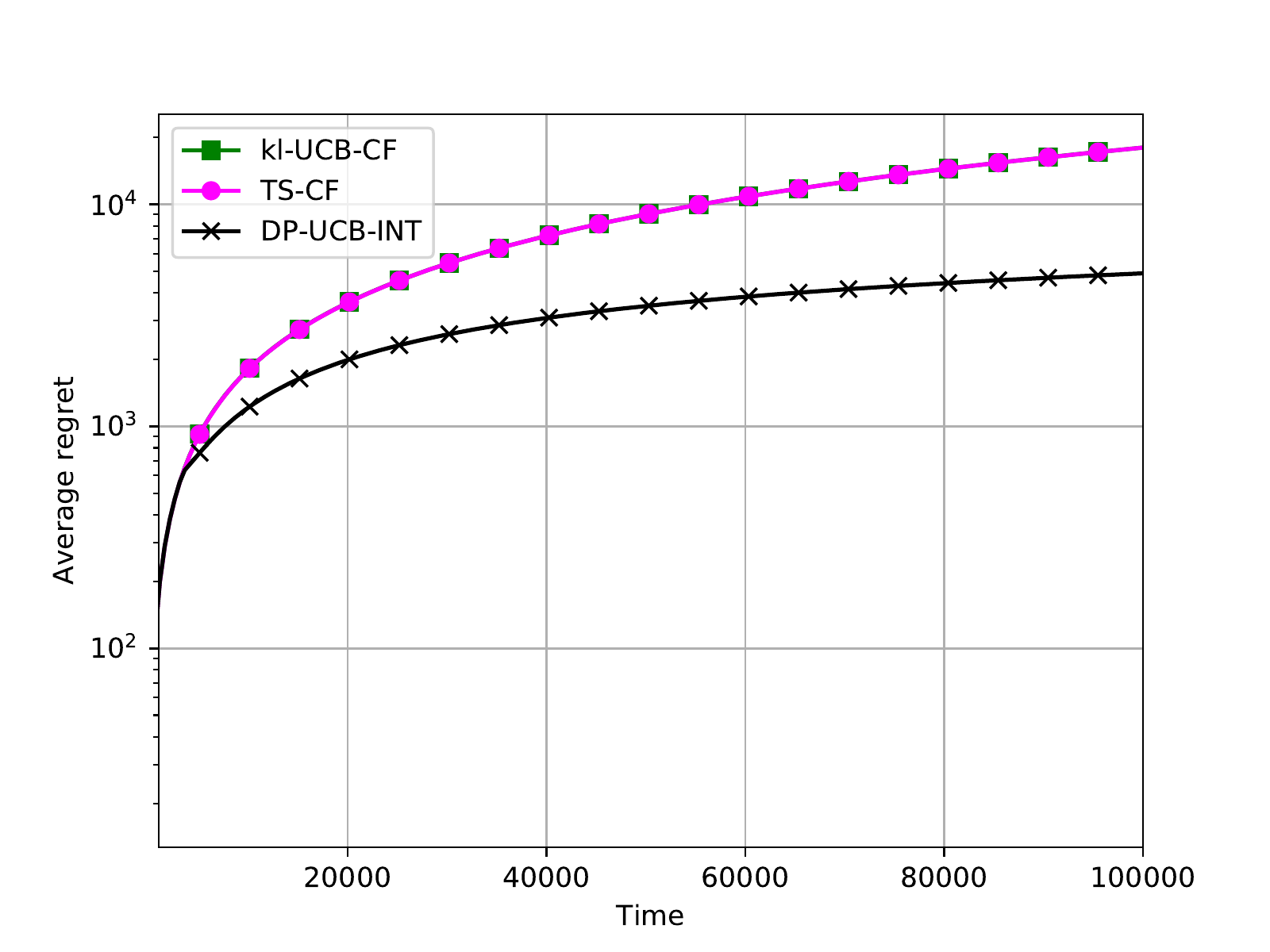}
  \caption{Regret plots for $\epsilon = 1$ for  \KLUCBCF \ and \TSCF \,  \textsc{DP-UCB-Int}}
  \label{Fig4C}
\end{subfigure}
\caption{Regret plots for scenario 3}
\label{Fig4}
\end{figure}
\FloatBarrier

\begin{figure}[!htbp]
\centering
\begin{subfigure}{.3\textwidth}
  \centering
  \includegraphics[width=\linewidth]{img/WrapperFigureScenario4}
  \caption{Regret plots for \KLUCBCF \ and \TSCF \ with others for varying  horizons up to $10^5$}
  \label{Fig5A}
\end{subfigure}%
\hspace{1em}
\begin{subfigure}{.3\textwidth}
  \centering
  \includegraphics[width=\linewidth]{img/DPFigureScenario4}
  \caption{Regret plots for \KLUCBCF \ and \TSCF \ with $\epsilon$ = $\{\frac{1}{8}, \frac{1}{4}, \frac{1}{2}, 1, 2, 4, 8\}$, $T=10^5$}
  \label{Fig5B}
\end{subfigure}
\hspace{1em}
\begin{subfigure}{.3\textwidth}
  \centering
  \includegraphics[width=\linewidth]{img/CompareDPEps1Scenario4}
  \caption{Regret plots for $\epsilon = 1$ for  \KLUCBCF \ and \TSCF \,  \textsc{DP-UCB-Int}}
  \label{Fig5C}
\end{subfigure}
\caption{Regret plots for scenario 4}
\label{Fig5}
\end{figure}

\end{document}